\icmltitlerunning{On Convergence-Diagnostic based Step Sizes for Stochastic Gradient Descent}
\newcommand{\norm}[1]{\left\lVert#1\right\rVert}
\newcommand{\R}{\mathbb{R}}
\newcommand{\Prob}[1]{\mathbb{P}\left( #1 \right)}
\newcommand{\ProbRest}[1]{\mathbb{ P}_{\theta_0 \sim \pi_{\gamma_{old}}}\left( #1 \right)}
\newcommand{\C}{\mathcal{C}}
\newcommand{\ps}[2]{\langle #1 , \ #2\rangle}
\newcommand{\tr}[1]{\operatorname{Tr } \ #1}
\newcommand{\trsq}[1]{\operatorname{Tr }^2 \ #1}
\newcommand{\E}[1]{\mathbb{E} \left [ #1 \right]}
\newcommand{\Epi}[1]{\mathbb{E}_{\pi_{\gamma}} \left [ #1 \right ]}
\newcommand{\Teta}{\Tilde{\eta}}
\newcommand{\Erestart}[1]{\mathbb{E}_{\theta_0 \sim \pi_{\gamma_{old}}}\left [ #1 \right ]}
\newcommand{\Var}[1]{\operatorname{Var}( #1 )}
\newcommand{\pigamma}{\pi_{\gamma}}
\newcommand{\F}{\mathcal{F}}
\crefname{lemma}{Lemma}{Lemmas}
\crefname{appendix}{Appendix}{Appendices}
\crefname{fact}{Fact}{Facts}
\crefname{theorem}{Theorem}{Theorems}
\crefname{corollary}{Corollary}{Corollaries}
\crefname{claim}{Claim}{Claims}
\crefname{example}{Example}{Examples}
\crefname{problem}{Problem}{Problems}
\crefname{definition}{Definition}{Definitions}
\crefname{assumption}{Assumption}{Assumptions}
\crefname{subsection}{Subsection}{Subsections}
\crefname{section}{Section}{Sections}
\crefname{algorithm}{Algorithm}{Algorithms}
\crefname{algocf}{alg.}{algs.}
\Crefname{algocf}{Algorithm}{Algorithms}
\crefname{proposition}{Proposition}{Propositions}
\crefname{figure}{Fig.}{Figs.}
\newtheorem{theorem}{Theorem}
\newtheorem{lemma}[theorem]{Lemma}
\newtheorem{proposition}[theorem]{Proposition}
\newtheorem{corollary}[theorem]{Corollary}
\newtheorem{assumption}[theorem]{Assumption}
\newenvironment{sproof}{%
  \proof}{\endproof}
\newcommand*{\addFileDependency}[1]{%
  \typeout{(#1)}
  \@addtofilelist{#1}
  \IfFileExists{#1}{}{\typeout{No file #1.}}
}
\newcommand*{\myexternaldocument}[1]{%
    \externaldocument{#1}%
    \addFileDependency{#1.tex}%
    \addFileDependency{#1.aux}%
}
\let\oldref\ref
\newcommand{\sref}[1]{(\oldref{#1})}
\begin{document}

\twocolumn[
\icmltitle{On Convergence-Diagnostic based Step Sizes for Stochastic Gradient Descent
}

\icmlsetsymbol{equal}{*}

\begin{icmlauthorlist}
\icmlauthor{Scott Pesme}{Scott}
\icmlauthor{Aymeric Dieuleveut}{Aymeric}
\icmlauthor{Nicolas Flammarion}{Scott}
\end{icmlauthorlist}

\icmlaffiliation{Scott}{Theory of Machine Learning lab, EPFL}
\icmlaffiliation{Aymeric}{École Polytechnique}

\icmlcorrespondingauthor{Scott Pesme}{scott.pesme@epfl.ch}

\icmlkeywords{Machine Learning, ICML}

\vskip 0.3in
]

\printAffiliationsAndNotice{}  %

\begin{abstract}

Constant step-size Stochastic Gradient Descent exhibits  two phases: a transient phase during which iterates make fast progress towards the optimum, followed by a stationary phase during which iterates oscillate around the optimal point. In this paper, we show that efficiently detecting this transition and appropriately decreasing the step size can lead to fast convergence rates. We analyse the classical statistical test proposed by \citet{pflug1983determination}, based on the inner product between consecutive stochastic gradients. Even in the simple case where the objective function is quadratic we show that this test cannot lead to an adequate convergence diagnostic. We then propose a novel and simple statistical procedure that accurately detects stationarity and we provide experimental results showing state-of-the-art performance on synthetic and real-world datasets.

\end{abstract}

\section{Introduction}

The field of machine learning has had tremendous success in recent years, in problems such as object classification~\cite{he2016deep} and speech recognition~\cite{graves2013speech}. These achievements have been enabled by the development of complex optimization-based architectures such as deep-learning, which are efficiently trainable by Stochastic Gradient Descent algorithms \cite{bottou98}. 

Challenges have arisen on both the theoretical front -- to understand why those algorithms achieve such performance, and on the practical front -- as choosing the architecture of the network and the parameters of the algorithm has become an art itself. Especially, there is no practical heuristic to set the step-size sequence. As a consequence, new optimization strategies have appeared to alleviate the tuning burden, as Adam \cite{kingma2014adam},
together with new learning rate scheduling, such as cyclical learning rates \cite{smith2017cyclical} and warm restarts \cite{Loshchilov2016}. However those strategies typically do not come with theoretical guarantees and may be outperformed by  SGD~\cite{wilson2017marginal}.

Even in the classical case of convex optimization, in which convergence rates have been widely studied over the last 30 years~\cite{polyak1992acceleration,zhang2004solving,nemirovski2009robust,moulines2011non,RakShaSri12} and where theory suggests to use the \emph{averaged iterate} and provides optimal choices of learning rates, practitioners still face major challenges: indeed (a) averaging leads to a slower decay during early iterations, (b) learning rates  may not adapt to the difficulty of the problem (the optimal decay depends on the class of problems), or may not be robust to constant misspecification.
Consequently, the state of the art approach in practice remains to use the \emph{final iterate} with decreasing step size $a/(b+ t^\alpha)$ with  constants $a,b,\alpha$ obtained by  a 
tiresome hand-tuning.
Overall, there is a desperate need for adaptive algorithms.

 In this paper, we study \emph{adaptive step-size scheduling} based on \emph{convergence diagnostic}. 
 The behaviour of SGD  with constant step size is dictated by (a) a \emph{bias term}, that accounts for the impact of the initial distance $\norm{\theta_0 -\theta_*}$ to the minimizer $\theta_*$ of the function, and (b) a \emph{variance term} arising from the noise in the gradients.
Larger steps allow to forget the initial condition faster, but increase the impact of the noise.
Our approach is then to use the largest possible learning rate as long as the iterates make progress and to \emph{automatically} detect when they stop making any progress. When we have reached such a \emph{saturation}, we reduce the learning rate.  This can be viewed as  ``restarting'' the algorithm, even though only the learning rate changes. We refer to this approach as \emph{Convergence-Diagnostic algorithm}. 
Its benefits are thus twofold: (i) with a large initial learning rate the bias term initially decays at an \emph{exponential rate}~\cite{kushner1981asymptotic,pflug1986stochastic}, (ii) decreasing the learning rate when  the effect of the noise becomes dominant defines an  efficient and practical adaptive strategy.

Reducing the learning rate when the objective function stops decaying is widely used in deep learning~\cite{krizhevsky2012imagenet} but the epochs where the step size is reduced are mostly 
hand-picked. Our goal is to select them  automatically by detecting saturation. Convergence diagnostics date back to  \citet{pflug1983determination}, who proposed to use the inner product between consecutive gradients to detect convergence. Such a strategy has regained interest in recent years: ~\citet{chee2018convergence} provided a similar analysis for quadratic functions, and \citet{yaida18} considers SGD with momentum and proposes an analogous restart criterion using the expectation of an observable quantity under the limit distribution, achieving  the same performance as hand-tuned methods on two simple deep learning models. However, none of these papers provide a convergence rate and we show that Pflug's  approach provably fails in simple settings.
\citet{lang2019using} introduced Statistical Adaptive Stochastic Approximation
which aims to improve upon Pflug's approach by formalizing the testing procedure. However, their strategy leads to a very small number of reductions of the learning rate.

An earlier attempt to adapt the  learning   rate depending on the directions in which iterates are moving was made by \citet{kesten1958accelerated}. Kesten's rule decreases the step size when the iterates stop moving consistently in the same direction.  Originally introduced in one dimension, it was generalized to the multi-dimensional case and analyzed by \citet{delyon1993accelerated}.

 Finally, some orthogonal  approaches have also been used to automatically change the learning rate: it is for example possible to consider the step size as a parameter of the risk of the algorithm, and to update the step size using another meta-optimization algorithm~\cite{sutton1981adaption,jacob1988increased,benveniste1990adaptive,sutton1992adapting, schraudolph1999local, kushner1995analysis,almeida1999parameter}. 

Another line of work consists in changing the learning rate for each coordinate depending on how much iterates are moving~\cite{duchi2011adaptive,matthew12}. 
Finally, \citet{schaul2013no} propose to use coordinate-wise adaptive learning rates, that maximize the decrease of the expected loss on separable quadratic functions.

We make the following contributions:
\begin{itemize}[leftmargin=*,itemsep=0pt,topsep=0pt]%
    \item We provide convergence results for the Convergence-Diagnostic algorithm  when used with  the oracle diagnostic for smooth and strongly-convex functions. 
    \item We show that the intuition for Pflug's statistic is valid for all smooth and strongly-convex functions
  by computing the expectation 
    of the inner product between two consecutive gradients both for an arbitrary starting point, and under the stationary distribution. 
    \item We show that despite the previous observation the empirical criterion is provably inefficient, even for a simple quadratic objective.
    \item We introduce a new \emph{distance-based diagnostic} based on a simple heuristic inspired from the quadratic setting with additive noise.
    \item We illustrate experimentally the failure of Pflug's statistic, and show that the distance-based diagnostic competes with state-of-the-art methods on a variety of loss functions, both on synthetic and real-world datasets.
\end{itemize}
The paper is organized as follows: in \Cref{sec:preliminaries}, we introduce the framework and present the assumptions. \cref{sec:BiasVariance} we describe and analyse the oracle convergence-diagnostic algorithm. In \cref{sec:pflug}, we show that the classical criterion proposed by Pflug cannot efficiently detect stationarity. We then introduce a new distance-based criterion \cref{sec:new_stat} and provide numerical experiments in  \cref{sec:experiments}.

\section{Preliminaries}\label{sec:preliminaries}
Formally, we consider the minimization of a risk function $f$ defined on $\mathbb R^d$ given access to a sequence of unbiased estimators of $f$'s gradients~\cite{robbins1951stochastic}.  
Starting from an arbitrary point $\theta_0$, at each iteration $i + 1$ we get an unbiased random estimate $f'_{i + 1} (\theta_{i})$ of the gradient $f'(\theta_{i})$ and update the current estimator by moving in the opposite direction of the stochastic gradient:
\begin{equation}
\label{eq:sgd}
    \theta_{i+1} =  \theta_i - \gamma_{i+1} f'_{i + 1} (\theta_i),
\end{equation}
where $\gamma_{i+1}> 0$ is the \emph{step size}, also referred to as \emph{learning rate}. We make the following assumptions on the stochastic gradients and the function $f$. 

\begin{assumption}[Unbiased gradient estimates] 
\label{as:unbiased_gradients} 
There exists a filtration $(\F_i)_{i \geq 0}$ such that $\theta_0$ is $\F_0$-measurable, $f'_{i}$ is $\mathcal{F}_i$-measurable for all $i\in \mathbb N$, and for each $\theta \in \mathbb R ^d$: 
$\E{f'_{i + 1} (\theta) \ | \ \mathcal{F}_i} = f'(\theta)$. In addition $(f_i)_{i\geq0}$ are identically distributed random fields.
\end{assumption}

\begin{assumption}[L-smoothness] 
\label{as:smoothness}
For all $i \geq 1$, the function $f_i$ is almost surely $L$-smooth and convex:
$$\forall \theta, \eta \in \mathbb R^d, \norm{f'_i(\theta) - f'_i(\eta)} \leq L \norm{\theta - \eta}.$$
\end{assumption}

\begin{assumption}[Strong convexity] 
\label{as:sc}
There exists a finite constant  $\mu > 0$ such that for all $\theta, \eta \in \mathbb R ^d$:
\[
f(\theta) \geq f(\eta) + \ps{f'(\eta)}{\theta - \eta} + \frac{\mu}{2} \norm{\theta - \eta}^2.
\]
\end{assumption}
For $i > 0$ and $\theta \in \mathcal{W}$, we denote by $\varepsilon_{i}(\theta) = f'_i(\theta) - f'(\theta)$ the noise, for which we consider the following assumption:
\begin{assumption}[Bounded variance] 
\label{as:noise_condition}
There exists a constant $\sigma \geq 0$ such that for any $i > 0$, $\E{ \norm{\varepsilon_i (\theta^*)}^2 } \leq \sigma^2$.
\end{assumption}
Under \cref{as:unbiased_gradients,as:noise_condition} we define the noise covariance as  the function $\mathcal{C} \ : \ \R^d \mapsto \R^{d \times d}$ defined for all $\theta \in \R^d$ by $\C(\theta) = \E{\varepsilon(\theta) \varepsilon(\theta)^T }$.

In the following section we formally describe the restart strategy and give a convergence rate in the omniscient setting where all the parameters are known.

\section{Bias-variance decomposition and stationarity diagnostic}
\label{sec:BiasVariance}

When the step size $\gamma$ is constant, the sequence of iterates $(\theta_n)_{n \geq 0}$ produced by the SGD recursion in \cref{eq:sgd} is a homogeneous Markov chain. Under appropriate conditions~\citep{dieuleveut2017bridging}, this Markov chain has a unique stationary distribution, denoted by $\pigamma$, towards which it converges exponentially fast. This is the \emph{transient phase}. The rate of convergence is proportional to $\gamma$ and therefore a larger step size leads to a faster convergence. 

When the Markov chain has reached its stationary distribution, 
i.e. in the \emph{stationary phase}, the iterates make negligible progress towards the optimum $\theta^*$ but stay in a bounded region of size $O(\sqrt{\gamma})$ around it. More precisely, \citet{dieuleveut2017bridging}  make explicit the expansion $\Epi{\norm{\theta - \theta^*}^2} = b \gamma + O(\gamma^2)$ where the constant $b$ depends on the function $f$ and on the covariance of the noise $\mathcal{C}(\theta^*)$ at the optimum . Hence the smaller the step size and the closer the iterates $(\theta_n)_{n \geq 0}$ get to the optimum $\theta^*$.

Therefore a clear trade-off appears between: (a) using a large step size with a fast transient phase but a poor approximation of $\theta^*$ and (b) using a small step size with iterates getting close to the optimum but taking longer to get there.  This \emph{bias-variance} trade-off is directly transcribed in the following classical proposition~\cite{needell2014stochastic}.
\begin{proposition} 
\label{eq:needell}
 Consider the recursion in \cref{eq:sgd} under \cref{as:unbiased_gradients,as:smoothness,as:sc,as:noise_condition}. Then for any step-size $\gamma \in (0, 1/2L)$ and $n \geq 0$ we have:
\begin{equation} \nonumber
\E{\norm{\theta_n - \theta^*}^2} 
\leq (1 -  \gamma \mu )^n \E{\norm{\theta_0 - \theta^*}^2} + \frac{2\gamma \sigma^2}{\mu}.
\end{equation}
\end{proposition}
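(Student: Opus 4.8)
The plan is to establish a one-step contraction inequality of the form $\E{\norm{\theta_{n+1}-\theta^*}^2 \mid \F_n} \leq (1-\gamma\mu)\norm{\theta_n-\theta^*}^2 + 2\gamma^2\sigma^2$ and then unroll it. Writing $\delta_n = \theta_n-\theta^*$ and expanding the square in \cref{eq:sgd} with constant step $\gamma$,
\[
\norm{\delta_{n+1}}^2 = \norm{\delta_n}^2 - 2\gamma\ps{f'_{n+1}(\theta_n)}{\delta_n} + \gamma^2\norm{f'_{n+1}(\theta_n)}^2 .
\]
Taking the conditional expectation given $\F_n$ and using \cref{as:unbiased_gradients} (so that $\theta_n$ is $\F_n$-measurable and $\E{f'_{n+1}(\theta_n)\mid\F_n}=f'(\theta_n)$), the cross term becomes $-2\gamma\ps{f'(\theta_n)}{\delta_n}$. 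I would lower-bound this inner product with \cref{as:sc}: summing the strong-convexity inequality at the pair $(\theta_n,\theta^*)$ with the one at $(\theta^*,\theta_n)$ and recalling $f'(\theta^*)=0$ yields $\ps{f'(\theta_n)}{\delta_n}\geq\mu\norm{\delta_n}^2\geq 0$.

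The crux is to bound the second-moment term $\E{\norm{f'_{n+1}(\theta_n)}^2\mid\F_n}$ \emph{without} assuming uniformly bounded gradients. Here I would split $f'_{n+1}(\theta_n) = \big(f'_{n+1}(\theta_n)-f'_{n+1}(\theta^*)\big) + f'_{n+1}(\theta^*)$ and apply $\norm{a+b}^2\leq 2\norm{a}^2+2\norm{b}^2$. The second piece is exactly the noise $\eps_{n+1}(\theta^*)$ at the optimum (since $f'(\theta^*)=0$), so its conditional second moment is at most $\sigma^2$ by \cref{as:noise_condition}. For the first piece I would invoke the co-coercivity (Baillon--Haddad) inequality, which applies because \cref{as:smoothness} makes each $f_{n+1}$ almost surely $L$-smooth \emph{and} convex, giving $\norm{f'_{n+1}(\theta_n)-f'_{n+1}(\theta^*)}^2\leq L\ps{f'_{n+1}(\theta_n)-f'_{n+1}(\theta^*)}{\delta_n}$ almost surely; taking $\E{\cdot\mid\F_n}$ and using unbiasedness once more turns the right-hand side into $L\ps{f'(\theta_n)}{\delta_n}$. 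Altogether $\E{\norm{f'_{n+1}(\theta_n)}^2\mid\F_n}\leq 2L\ps{f'(\theta_n)}{\delta_n}+2\sigma^2$.

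Substituting this bound, the two terms carrying $\ps{f'(\theta_n)}{\delta_n}$ collect into $-2\gamma(1-\gamma L)\ps{f'(\theta_n)}{\delta_n}$; since $\gamma<1/2L$ we have $1-\gamma L>1/2$, so this coefficient is at most $-\gamma$, and together with $\ps{f'(\theta_n)}{\delta_n}\geq\mu\norm{\delta_n}^2$ we get the one-step bound $\E{\norm{\delta_{n+1}}^2\mid\F_n}\leq(1-\gamma\mu)\norm{\delta_n}^2+2\gamma^2\sigma^2$. Taking total expectations and iterating (noting $0<\gamma\mu<1$, since $\gamma\mu\leq\gamma L<1/2$), the bias term contributes $(1-\gamma\mu)^n\E{\norm{\delta_0}^2}$ and the variance term is a geometric sum bounded by $2\gamma^2\sigma^2\sum_{k\geq 0}(1-\gamma\mu)^k = 2\gamma\sigma^2/\mu$, which is precisely the claimed inequality. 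The only genuine subtlety is the second-moment estimate: applying smoothness/co-coercivity to the deterministic $f$ alone is \emph{not} enough — it must be used per realization on each $f_i$ — and the restriction $\gamma\in(0,1/2L)$ is exactly what keeps the resulting cross term pointing in the right direction.
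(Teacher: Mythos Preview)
Your argument is correct and is precisely the standard proof of this inequality due to \citet{needell2014stochastic}, which the paper cites rather than reproves: expand the square, use co-coercivity of each a.s.\ convex $L$-smooth $f_i$ to control $\E{\norm{f'_{n+1}(\theta_n)}^2\mid\F_n}$ by $2L\ps{f'(\theta_n)}{\delta_n}+2\sigma^2$, then use strong convexity and $\gamma<1/(2L)$ to obtain the contraction. There is nothing to add.
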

The performance of the algorithm is then determined by the sum of a \emph{bias term} -- characterizing how fast the initial condition $\theta_0$ is forgotten and which is increasing with $\norm{\theta_0-\theta^*}$; and a \emph{variance term} -- characterizing the effect of the noise in the gradient estimates
and that increases with the variance of the noise $\sigma^2$. Here the bias converges exponentially fast whereas the variance
is $O(\gamma)$. Note that the bias decrease is of the form $(1 - \gamma \mu)^n \delta_0$, which means that the typical number of iterations to reach stationarity is  $\Theta(\gamma^{-1})$.

As noted by \citet{bottou2018optimization}, this decomposition naturally leads to the question: which convergence rate can we hope getting if we keep a large step size as long as progress is being made but decrease it as soon as the iterates saturate? More explicitly, starting from $\theta_0$, one could run SGD with a constant step size $\gamma_0$ for $\Delta n_1$ steps until progress stalls. Then for $n \geq \Delta n_1$, a smaller step size $\gamma_1 = r\gamma_0$ (where $r \in (0, 1)$) is used in order to decrease the variance and therefore get closer to $\theta^*$ and so on. This simple strategy is implemented in \cref{alg:gen}. 
However the crucial difficulty here lies in detecting the saturation. Indeed when running SGD we do not have access to $\norm{\theta_n-\theta^*}$ and we cannot evaluate the successive function values $f(\theta_n)$ because of their prohibitively expensive cost to estimate. Hence, we focus on finding a statistical diagnostic which is computationally cheap and that gives an accurate restart time corresponding to saturation. 

\paragraph{Oracle diagnostic.}
Following this idea, assume first we have access to all the parameters of the problem: $\norm{\theta_0-\theta^*}$, $\mu$, $L$, $\sigma^2$.
Then reaching saturation translates into the bias term and the variance term from \cref{eq:needell} being of the same magnitude, i.e.
\[
(1 -  \gamma_0 \mu)^{\Delta n_{1}} \norm{\theta_{0} - \theta^*}^2 =  \frac{2\gamma_0 \sigma^2}{\mu}.
\]
This oracle diagnostic is formalized in \cref{alg:or}. The following proposition guarantees its performance.

\begin{algorithm}[t]
\caption{Convergence-Diagnostic algorithm}
\label{alg:gen}
\begin{algorithmic} 
\STATE \textbf{Input: } Starting point $\theta_0$, Step size $\gamma_0 > 0$, Step-size decrease $r \in (0, 1)$ \\
\STATE \textbf{Output:} $\theta_N$
\STATE $\gamma \gets \gamma_0$
\FOR{$n = 1$ to $N$}  
\STATE $\theta_n \gets \theta_{n-1} - \gamma f'_n(\theta_{n-1})$
\IF{\{ Saturation Diagnostic \} is True}
\STATE $\gamma \gets r \times \gamma$
\ENDIF
\ENDFOR
\STATE \textbf{Return: } $\theta_{N}$
\end{algorithmic}  
\end{algorithm}

\begin{algorithm}[t]
\caption{Oracle diagnostic}
\label{alg:or}
\begin{algorithmic}
\STATE \textbf{Input: }  $\gamma$, $\delta_0$, $\mu$, $L$, $\sigma^2$, $n$
\STATE \textbf{Output:} Diagnostic boolean
\STATE Bias $\gets (1 -  \gamma \mu)^n \delta_0$ 
\STATE Variance $\gets \frac{2\gamma \sigma^2}{\mu}$
\STATE \textbf{Return: } \{ Bias $<$ Variance \}
\end{algorithmic}
\end{algorithm}

\begin{proposition}
\label{eq:oracle_rate}
Under \cref{as:unbiased_gradients,as:smoothness,as:sc,as:noise_condition}, consider \Cref{alg:gen} instantiated  with \Cref{alg:or} and parameter   $r \in (0, 1)$ . Let $\gamma_0 \in (0, 1 / 2 L)$, $\delta_0 = \norm{\theta_0 - \theta^*}^2$ and $\Delta n_1=\frac{1}{\gamma_0\mu}\log (\frac{\mu \delta_0}{2\gamma_0 \sigma^2})$.
Then, we have for all $n \leq \Delta n_{1}$:
\[
\E{\norm{\theta_{n} - \theta^*}^2} \leq  (1-\gamma_0\mu)^n \delta_0 +\frac{2\gamma_0\sigma^2}{\mu},
\]
and for all $n > \Delta n_{1}$:
\[
\E{\norm{\theta_{n} - \theta^*}^2} \leq  \frac{8 \sigma^2}{ \mu^2 (n-\Delta n_{1})(1 - r)} \ln{\Big( \frac{2}{ r }\Big)}.\]
\end{proposition}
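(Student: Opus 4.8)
The plan is to handle the two regimes separately. For $n\le\Delta n_1$ nothing beyond \cref{eq:needell} is needed: while $n\le\Delta n_1$ the ``bias'' quantity $(1-\gamma_0\mu)^n\delta_0$ computed by \cref{alg:or} has not yet dropped below its ``variance'' quantity $\tfrac{2\gamma_0\sigma^2}{\mu}$, so the diagnostic returns \emph{false}, the step size is still $\gamma_0$, and \cref{eq:needell} applied with $\gamma=\gamma_0$ is exactly the stated estimate.

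For $n>\Delta n_1$ I would follow the algorithm phase by phase. Write $\gamma_k=r^k\gamma_0$ for the step size used in the $(k{+}1)$-th phase, set $\delta_0=\norm{\theta_0-\theta^*}^2$ and, for $k\ge1$, $\delta_k=\tfrac{4\gamma_{k-1}\sigma^2}{\mu}$. The first step is to prove by induction on $k$ that (i) at the first iteration of the $(k{+}1)$-th phase one has $\E{\norm{\theta-\theta^*}^2}\le\delta_k$, and (ii) this phase lasts $\Delta n_{k+1}=\tfrac{1}{\gamma_k\mu}\log(2/r)$ iterations. Both follow from \cref{eq:needell} together with the stopping rule of \cref{alg:or}: inside a phase the step size is constant, so after $m$ steps of size $\gamma_k$ started from a point with $\E{\norm{\cdot}^2}\le\delta_k$ one has $\E{\norm{\cdot}^2}\le(1-\gamma_k\mu)^m\delta_k+\tfrac{2\gamma_k\sigma^2}{\mu}$; the diagnostic triggers when the first term falls below $\tfrac{2\gamma_k\sigma^2}{\mu}$, i.e.\ after $\tfrac{1}{\gamma_k\mu}\log\!\big(\tfrac{\mu\delta_k}{2\gamma_k\sigma^2}\big)=\tfrac{1}{\gamma_k\mu}\log(2/r)$ steps (using $\delta_k=\tfrac{4\gamma_{k-1}\sigma^2}{\mu}$ and $\gamma_{k-1}/\gamma_k=1/r$), and the distance bound there is at most $\tfrac{4\gamma_k\sigma^2}{\mu}=\delta_{k+1}$, which closes the induction; the base case $k=1$ reuses the first part of the statement.

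Next, given $n>\Delta n_1$, let $K{+}1$ be the phase containing $n$ and $m\in\{1,\dots,\Delta n_{K+1}\}$ the number of iterations already spent in it, so that $n-\Delta n_1=\sum_{j=2}^{K}\Delta n_j+m$; applying \cref{eq:needell} inside this last phase gives $\E{\norm{\theta_n-\theta^*}^2}\le(1-\gamma_K\mu)^m\delta_K+\tfrac{2\gamma_K\sigma^2}{\mu}$. Summing the geometric series, $\sum_{j=2}^{K+1}\Delta n_j=\tfrac{\log(2/r)}{\gamma_0\mu}\sum_{i=1}^{K}r^{-i}=\tfrac{\log(2/r)}{\gamma_0\mu}\cdot\tfrac{r^{-K}-1}{1-r}$, together with $n-\Delta n_1\le\sum_{j=2}^{K+1}\Delta n_j$, gives $n-\Delta n_1\le\tfrac{\log(2/r)}{\gamma_K\mu(1-r)}$, hence $\gamma_K\le\tfrac{\log(2/r)}{\mu(1-r)(n-\Delta n_1)}$; this already controls the variance term. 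For the bias term $(1-\gamma_K\mu)^m\delta_K$ I would split on whether the elapsed time is dominated by the completed phases $\sum_{j=2}^{K}\Delta n_j$ or by the current one: in the first case $\delta_K(n-\Delta n_1)$ is bounded directly, using the identity $\delta_K\sum_{j=2}^{K}\Delta n_j=\tfrac{4\sigma^2\log(2/r)(1-r^{K-1})}{\mu^2(1-r)}$; in the second case $m$ is large and the decay factor $(1-\gamma_K\mu)^m$ compensates. Combining the pieces and tracking constants yields the announced bound $\tfrac{8\sigma^2}{\mu^2(n-\Delta n_1)(1-r)}\log(2/r)$.

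The main obstacle is precisely this balancing step. At the start of a new phase the distance bound $\delta_K$ exceeds that phase's variance floor $\tfrac{2\gamma_K\sigma^2}{\mu}$ by a factor $1/r$, so bounding $(1-\gamma_K\mu)^m\delta_K$ by $\delta_K$ is too lossy when $m$ is large, while the exponential decay is only useful when $m$ is large; one has to argue that by the time phase $K{+}1$ begins about $\tfrac{\log(2/r)}{\gamma_{K-1}\mu(1-r)}$ iterations have already elapsed, which is exactly what keeps $\delta_K\,(n-\Delta n_1)$ under control. The remaining care concerns the integer rounding of $\Delta n_1$ and of the phase lengths $\Delta n_k$, but this only affects the numerical constants.
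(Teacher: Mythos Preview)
Your proposal follows the paper's proof closely through the first regime and through the phase-by-phase induction: the paper also shows $\delta_{n_k}\le \frac{4\sigma^2}{\mu}\gamma_{k-1}$, bounds $\Delta n_{k+1}\le \frac{1}{\gamma_k\mu}\ln(2/r)$, sums the geometric series to obtain $n_k-\Delta n_1\le \frac{1}{\mu(1-r)\gamma_{k-1}}\ln(2/r)$, and deduces the restart-time bound $\delta_{n_k}\le \frac{A}{n_k-\Delta n_1}$ with $A=\frac{4\sigma^2}{\mu^2(1-r)}\ln(2/r)$. A minor difference is that the paper keeps $\Delta n_{k+1}$ as an inequality (it feeds the true $\delta_{n_k}=\E{\norm{\theta_{n_k}-\theta^*}^2}$ to the oracle) rather than the exact value you obtain by feeding the upper bound $\delta_k$; both readings lead to the same estimates.

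The genuine difference is in the passage from restart times $n_k$ to arbitrary $n\in[n_k,n_{k+1}]$. Instead of your two-case split, the paper compares the two functions
\[
g(n)=e^{-(n-n_k)\gamma_k\mu}\,\frac{A}{n_k-\Delta n_1}+\frac{2\gamma_k\sigma^2}{\mu},
\qquad
h(n)=\frac{A}{n-\Delta n_1}+\frac{2\gamma_k\sigma^2}{\mu},
\]
notes that $g(n_k)=h(n_k)$ and that an exponential decays below the matching hyperbola, and concludes $\delta_n\le g(n)\le h(n)$. It then bounds the variance piece by observing that, by the very inequality on $n_{k+1}-\Delta n_1$, one has $\frac{4\gamma_k\sigma^2}{\mu}\le \frac{A}{n_{k+1}-\Delta n_1}\le \frac{A}{n-\Delta n_1}$. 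Adding the two pieces gives $\delta_n\le \frac{2A}{n-\Delta n_1}$, which is exactly the constant $8$ in the statement. This sidesteps the balancing problem you identify as the main obstacle: there is no need to distinguish whether the elapsed time is dominated by completed phases or by the current one, nor to invoke $xe^{-x}\le 1/e$, which in your scheme would introduce an extra $1/r$ factor and spoil the constant for small $r$.
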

The proof of this Proposition is given in~\Cref{appsec:proofoforacle_rate}. We make the following observations:
\begin{itemize}[leftmargin=*,itemsep=0pt, topsep=0pt]
\item The rate  $O(1 / \mu^2 n)$ is optimal for last-iterate convergence for strongly-convex problem~\cite{nguyen2019tight} and is also obtained by  SGD with decreasing step size $\gamma_n = C / \mu n$ where $C > 2$~\cite{moulines2011non}.
More generally, the rate $O(1/n)$ is known to be information-theoretically optimal for strongly-convex stochastic approximation~\cite{NemYud83}. 
\item %
To reach an $\varepsilon$-optimal point,  $O\big(\frac{\sigma^2}{\mu^2\varepsilon}+\frac{L}{\mu}\log(\frac{\mu L \delta_0}{\sigma^2})\big)$ calls to the gradient oracle are needed. Therefore the bias is forgotten exponentially fast. This stands in sharp contrast to averaged SGD for which there is no exponential forgetting of initial conditions~\cite{moulines2011non}. 
\item We present in~\Cref{appsec:unif_convex} additional results for weakly and uniformly convex functions. In this case too, the oracle diagnostic-based algorithm recovers the optimal rates of convergence. However these results hold only for the restart iterations $n_k$, and the behaviour in between each can be theoretically arbitrarily bad. 
    \item Our algorithm shares key similarities with the algorithm of \citet{hazan2014optimal} which halves the learning rate every $2^k$ iterations but with the different aim of obtaining the sharp $O(1/n)$ rate in the non-smooth setting.
\end{itemize}

 This strategy is called oracle since all the parameters must be known and, in that sense, \cref{alg:or} is clearly  non practical. However \cref{eq:oracle_rate} shows that \cref{alg:gen} implemented with a practical and suitable diagnostic is \textit{a priori} a good idea since it leads to the optimal rate $O(1 / \mu^2 n)$ without having to know the strong convexity parameter $\mu$ and the rate $\alpha$ of decrease of the step-size sequence $\gamma_n=O(n^{-\alpha})$. The aim of the following sections is  to propose a computationally cheap and efficient statistic that detects the transition between transience and stationarity.

\section{Pflug's Statistical Test for stationarity  }
\label{sec:pflug}

In this section we analyse a statistical diagnostic first developed by \citet{pflug1983determination} which relies on the sign of the inner product of two consecutive stochastic gradients $\ps{f'_{k+1}(\theta_k)}{f'_{k + 2}(\theta_{k+1})}$. Though this procedure was developed several decades ago, no theoretical analysis had been proposed yet 
despite the fact that  several  papers have recently showed renewed interest in it~\cite{chee2018convergence,lang2019using,sordello2019robust}. Here we  show that whilst it is true this statistic becomes in expectation negative at stationarity, it is provably inefficient  to properly detect the restart time -- for the particular example of quadratic functions.

\subsection{Control of the expectation of Pflug's statistic}
The general motivation behind Pflug's statistic is that during the transient phase the inner product is in expectation positive and during the stationary phase, it is in expectation negative. Indeed, in the transient phase, where  $\norm{\theta-\theta^*}>>\sqrt{\gamma} \sigma$, the effect of the noise is negligible and the behavior of the iterates is very similar to the one of  noiseless gradient descent (i.e,  $\varepsilon(\theta)=0$ for all $\theta\in\R^d$) which satisfies: 
\[
\langle f'(\theta),f'(\theta-\gamma f'(\theta)) \rangle = \norm{f'(\theta)}^2+O(\gamma) >0.
\]
On the other hand, in the stationary phase, we may intuitively assume starting from $\theta_0=\theta^*$ to obtain
\begin{align*}
\!\E{\ps{f'_{1}(\theta_0)}{\!f'_{2}(\theta_1}} \!&=\! - \E{\ps{\varepsilon_1}{f'(\theta^* + \gamma \varepsilon_1)} } \\ & \!=\! - \gamma \tr f''(\theta^*) \E{\varepsilon_1\varepsilon_1^\top} \!+\!O(\gamma)
< 0.
\end{align*}

The single values $ \ps{f'_{k+1}(\theta_k)}{f'_{k + 2}(\theta_{k+1})}$ are too noisy, which leads \cite{pflug1983determination} in considering the running average: 
\[
S_n =  \frac{1}{n} \sum_{k= 0}^{n-1} \ps{f'_{k+1}(\theta_k)}{f'_{k + 2}(\theta_{k+1})}.
\]
This average can easily be computed online with negligible extra computational and memory costs.  \citet{pflug1983determination} then advocates to decrease the step size when the statistic becomes negative, as explained in \cref{alg:gen}. A burn-in delay $n_b$ can also be waited to avoid the first noisy values. 
\begin{algorithm}[H]
  \caption{Pflug's diagnostic}
  \label{alg:pflug}
   \begin{algorithmic} 
   \STATE \textbf{Input:}  $(f'_k(\theta_{k-1}))_{0 \leq k \leq n}$, $n_b > 0$ 
   \STATE \textbf{Output:} Diagnostic boolean
  \STATE $S \gets 0$
  \FOR{$k = 2$ to $n$}  
  \STATE $S \gets S + \ps{f'_k(\theta_{k-1})}{f'_{k-1}(\theta_{k-2})}$
  \ENDFOR
  \STATE \textbf{Return :} $\{ S < 0 \} \ \text{AND} \ \{ n > n_b \}$
  \end{algorithmic}  
\end{algorithm}
For quadratic functions, \citet{pflug1988adaptive} first shows that, when $\theta\sim \pigamma$ at stationarity, the inner product of two successive stochastic gradients is  negative in expectation. %
To extend this result to the wider class of smooth strongly convex functions, we make the following technical assumptions.
\begin{assumption}[Five-times differentiability of $f$] 
\label{as:5_times_differentiability}
The function $f$ is five times continuously differentiable with second to fifth uniformly bounded derivatives.
\end{assumption}

\begin{assumption}[Differentiability of the noise] 
\label{as:supp_noise_conditions}
The noise covariance function $\mathcal{C}$ is three times continuously differentiable with locally-Lipschitz derivatives.
Moreover $\mathbb{E}(\norm{\varepsilon_1(\theta^*)}^6)$ is finite.
\end{assumption}
These assumptions 
are satisfied in natural settings.  The following proposition addresses the sign of the expectation of Pflug's statistic.
\begin{proposition}
\label{eq:inner_product_expansion}
Under \cref{as:unbiased_gradients,as:sc,as:smoothness,as:noise_condition,as:5_times_differentiability,as:supp_noise_conditions}, for $\gamma\in(0,1/2L)$ , let $\pi_{\gamma}$ be the unique stationary distribution. Let $\theta_1 = \theta_0 - \gamma f'_1(\theta_0)$. For any starting point $\theta_0$, we have
\begin{multline*}\!\E{  \ps{ f'_1(\theta_0)}{ \!f'_2(\theta_1)} } \geq  (1\!-\!\gamma L) \norm{f'(\theta_0)}^2  
\\-\!\gamma L \!\tr \!\mathcal{C}(\theta_0)\!+ \!O(\gamma^{2}).
\end{multline*}
And for $\theta_0 \sim \pigamma$, we have:
\begin{align*}
\mathbb{E}_{\pi_{\gamma}} \left [ \ps{ f'_1(\theta_0)}{\! f'_2(\theta_1)} \right ] = &- \frac{1}{2} \gamma \!\tr\!f''(\theta^*) \C(\theta^*)+ O(\gamma^{3/2}).
\end{align*}
\end{proposition}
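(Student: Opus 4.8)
The plan is to derive both statements from a single conditional computation. Conditioning on $\F_1$ and using \cref{as:unbiased_gradients} (so that $\theta_1$ is $\F_1$-measurable and $\E{f'_2(\theta)\mid\F_1}=f'(\theta)$ for every fixed $\theta$), one gets $\E{\ps{f'_1(\theta_0)}{f'_2(\theta_1)}\mid\F_1}=\ps{f'_1(\theta_0)}{f'(\theta_1)}$. Since $f$ is in particular three times differentiable with uniformly bounded derivatives (\cref{as:5_times_differentiability}) and $\theta_1=\theta_0-\gamma f'_1(\theta_0)$, a first-order Taylor expansion of $f'$ around $\theta_0$ with integral remainder gives $f'(\theta_1)=f'(\theta_0)-\gamma f''(\theta_0)f'_1(\theta_0)+\rho$ with $\norm{\rho}\le \tfrac12 M_3\,\gamma^2\norm{f'_1(\theta_0)}^2$, where $M_3$ is a uniform bound on $f'''$. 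Taking the inner product with $f'_1(\theta_0)$, then conditioning on $\F_0$ and using $\E{f'_1(\theta_0)\mid\F_0}=f'(\theta_0)$ together with $\E{f'_1(\theta_0)f'_1(\theta_0)^\top\mid\F_0}=f'(\theta_0)f'(\theta_0)^\top+\C(\theta_0)$, I would obtain the exact expansion
\[
\E{\ps{f'_1(\theta_0)}{f'_2(\theta_1)}\mid\F_0}=\norm{f'(\theta_0)}^2-\gamma\ps{f'(\theta_0)}{f''(\theta_0)f'(\theta_0)}-\gamma\,\tr{\!\big(f''(\theta_0)\C(\theta_0)\big)}+R,
\]
where $|R|\le\tfrac12 M_3\gamma^2\,\E{\norm{f'_1(\theta_0)}^3\mid\F_0}$. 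To see that $R=O(\gamma^2)$ I would use that each $f_i$ is almost surely $L$-smooth (\cref{as:smoothness}), so $\norm{\varepsilon_1(\theta_0)}\le\norm{\varepsilon_1(\theta^*)}+2L\norm{\theta_0-\theta^*}$, whence all the moments of $f'_1(\theta_0)$ we need are finite by the finite sixth moment of the noise at $\theta^*$ (\cref{as:supp_noise_conditions}).

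For the first statement, I would simply lower bound the two $\gamma$-linear terms in the exact expansion: convexity and $L$-smoothness give $0\preceq f''(\theta_0)\preceq LI$, hence $\ps{f'(\theta_0)}{f''(\theta_0)f'(\theta_0)}\le L\norm{f'(\theta_0)}^2$, and $\C(\theta_0)\succeq0$ gives $\tr{\!\big(f''(\theta_0)\C(\theta_0)\big)}\le L\,\tr{\C(\theta_0)}$; substituting and taking a full expectation yields $\E{\ps{f'_1(\theta_0)}{f'_2(\theta_1)}}\ge(1-\gamma L)\norm{f'(\theta_0)}^2-\gamma L\,\tr{\C(\theta_0)}+O(\gamma^2)$.

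For the stationary statement, I would integrate the same exact expansion against $\pi_\gamma$ (legitimate by the tower property, since $\theta_0\sim\pi_\gamma$ is $\F_0$-measurable), abbreviating $H=f''(\theta^*)$. The crucial inputs are the stationary-moment estimates of \citet{dieuleveut2017bridging}: $\Epi{\norm{\theta_0-\theta^*}^k}=O(\gamma^{k/2})$ for the relevant $k$, and $\Epi{(\theta_0-\theta^*)(\theta_0-\theta^*)^\top}=\gamma A+O(\gamma^2)$ where $A$ solves the Lyapunov equation $HA+AH=\C(\theta^*)$. Taylor-expanding $f'$, $f''$ and $\C$ around $\theta^*$ and plugging in term by term: $\Epi{\norm{f'(\theta_0)}^2}=\tr{\!\big(H^2\Epi{(\theta_0-\theta^*)(\theta_0-\theta^*)^\top}\big)}+O(\gamma^{3/2})=\gamma\,\tr{(H^2A)}+O(\gamma^{3/2})$; the term $\gamma\,\Epi{\ps{f'(\theta_0)}{f''(\theta_0)f'(\theta_0)}}$ is $O(\gamma^2)$ since $\Epi{\norm{f'(\theta_0)}^2}=O(\gamma)$; and $\gamma\,\Epi{\tr{\!\big(f''(\theta_0)\C(\theta_0)\big)}}=\gamma\,\tr{(H\C(\theta^*))}+O(\gamma^{3/2})$. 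Finally, multiplying the Lyapunov equation on the left by $H$, taking the trace, and using cyclicity gives $2\tr{(H^2A)}=\tr{(H\C(\theta^*))}$, so the leading terms collapse to $\tfrac12\gamma\,\tr{(H\C(\theta^*))}-\gamma\,\tr{(H\C(\theta^*))}=-\tfrac12\gamma\,\tr{\!\big(f''(\theta^*)\C(\theta^*)\big)}$, which is the stated expansion.

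The main obstacle is making the stationary part fully rigorous: one must invoke the expansion of \citet{dieuleveut2017bridging} precisely under \cref{as:5_times_differentiability,as:supp_noise_conditions}, and — because $\pi_\gamma$ has unbounded support — verify that every Taylor remainder (for $f'$, $f''$, $\C$, and the cubic term $R$) integrates against $\pi_\gamma$ to a quantity that is genuinely $O(\gamma^{3/2})$ rather than merely bounded. The almost-sure $L$-smoothness of the $f_i$ and the six finite noise moments are exactly what is needed to close those uniformity and integrability estimates.
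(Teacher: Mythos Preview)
Your proposal is correct and follows essentially the same approach as the paper: condition on $\F_1$, Taylor-expand $f'$ around $\theta_0$ for the first inequality, and for the stationary statement combine the moment expansions of \citet{dieuleveut2017bridging} with the Lyapunov/trace identity $2\,\tr(H^2A)=\tr\big(H\C(\theta^*)\big)$. The only cosmetic difference is that the paper, for the stationary part, restarts from the decomposition $S_{\mathrm{grad}}+S_{\mathrm{noise}}=\ps{f'(\theta_0)}{f'(\theta_1)}+\ps{\varepsilon_1(\theta_0)}{f'(\theta_1)}$ and Taylor-expands each piece around $\theta^*$ directly, whereas you more economically reuse the $\theta_0$-centered expansion from the first part and then integrate each term against $\pi_\gamma$ --- the ingredients and estimates are identical.
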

\begin{sproof} The complete proof is given in~\Cref{appsec:proof_pflug_expectation}. The first part relies on a simple Taylor expansion  of $f'$ around $\theta_0$. For the second part, we decompose:
\begin{align*}
\mathbb{E} [  \ps{ f'_1(\theta_0)}{& f'_2(\theta_1)} \  | \ \theta_0 ] = \\ & \underbrace{\E{\langle f'(\theta_0), f'(\theta_1) \rangle  \  | \ \theta_0  }}_{S_{\text{grad}}}+ \underbrace{\E{\langle \varepsilon_1, f'(\theta_1)  \  | \ \theta_0  \rangle }}_{S_{\text{noise}}}.
\end{align*}
Then, applying successive Taylor expansions of $f'$ around the optimum $\theta^*$ yields for both terms:
\begin{align*}
&S_{\text{grad}} =  \tr{f''(\theta^*)^2  (\theta_0 - \theta^*)^{\otimes 2}}   + O(\gamma^{3 / 2}),\\
&S_{\text{noise}} = - \gamma \tr{f''(\theta^*) \mathcal{C}(\theta_0)} + O(\gamma^{3 / 2}).
\end{align*}
Using results from \citet{dieuleveut2017bridging} on $\Epi{(\theta_0 - \theta^*)^{\otimes 2}}$ and $\Epi{\mathcal{C}(\theta_0)}$ then leads to
\begin{align*}
&\Epi{S_{\text{grad}}}= \frac{1}{2}\gamma \tr{f''(\theta^*) \mathcal{C}(\theta^*)}   + O(\gamma^{3 / 2}),\\
&\Epi{S_{\text{noise}}} = - \gamma \tr{f''(\theta^*) \mathcal{C}(\theta^*)} +  O(\gamma^{3 / 2}). \tag*{\qedhere}
\end{align*}
\end{sproof}
 We note that, counter intuitively, the inner product is not negative because the iterates bounce around $\theta^*$ (we still have $S_{\text{grad}}=\E{\ps{f'(\theta_1)}{f'(\theta_0)}} >0$), but because the noise part $S_{\text{noise}}=\E{\ps{\varepsilon_1}{f'(\theta_1)}} $ is negative and dominates the gradient part $S_{\text{grad}}$.

In the case where $f$ is quadratic we immediately recover the result of \citet{pflug1988stepsize}. We note that  \citet{chee2018convergence} show a similar result but under far more restrictive assumptions on the noise distribution and the step size.

\cref{eq:inner_product_expansion} establishes that  the  sign  of  the expectation of the inner product between two consecutive gradients characterizes the transient and stationary regimes:
for an iterate $\theta_0$ far away from the optimum, i.e. such that $\norm{\theta_0-\theta^*}$ is large, the expected value of the statistic is positive whereas it becomes negative when the iterates reach stationarity. This makes clear the motivation of considering the sign of the inner products as a convergence diagnostic. Unfortunately this result does not guarantee the good performance of this statistic. Even though the inner product is negative, its value is only $O(\gamma)$. It is then difficult to distinguish $\ps{f'_{k+1}}{f'_{k + 2}}$ from zero for small step size $\gamma$. In fact,  we now show that even for simple quadratic functions, the statistical test is unable to offer an adequate convergence diagnostic.

\subsection{Failure of Pflug's method for Quadratic Functions}
In this section we show that Pflug's diagnostic fails to accurately detect convergence, even in the simple framework of  quadratic objective functions with additive noise.  While we have demonstrated in \Cref{eq:inner_product_expansion} that the sign of its expectation characterizes the transient and stationary regime, we show that the running average $S_n$ does not concentrate enough around its mean to result in a valid test. Intuitively, from a restart when we leave stationarity: (1) the  expectation is positive but smaller than $\gamma$ , and (2) the standard deviation of $S_n$ is not decaying with $\gamma$, but only with the number of steps over which we average, as $1/\sqrt{n}$. As a consequence, in order to ensure that the sign of $S_n$ is the same as the sign of its expectation, we would need to average over more than $1/\gamma^2$ steps, which is orders of magnitude bigger than the optimal restart time of $\Theta(1/\gamma)$ (See \cref{sec:BiasVariance}).
We make this statement quantitative under simple assumptions on the noise.
\begin{assumption}[Quadratic semi-stochastic setting] 
\label{as:quadratic_f} There exists a symmetric positive semi-definite matrix $H$ such that $f(\theta) = \frac{1}{2} \theta^T H \theta$. The noise $\varepsilon_i(\theta) = \xi_i$ is independent of $\theta$ and:
\vspace{-0.2cm}
\begin{equation*}
    (\xi_i)_{i \geq 0} \text{ are i.i.d. }, \ \E{\xi_i} = 0, \ \E{\xi_i^T \xi_i} = C.
\end{equation*}
\vspace{-0.4cm}
\end{assumption}
In addition we make a simple assumption on the noise:
\begin{assumption}[Noise symmetry and continuity] 
\label{as:noise_symmetry} 
The function $\Prob{\xi_1^T \xi_2 \geq x}$ is continuous in $x=0$ and 
\vspace{-0.1cm}
\begin{equation*}
\Prob{\xi_1^T \xi_2 \geq x} = \Prob{\xi_1^T \xi_2 \leq -x} \ \ \text{ for all } x\geq 0.
\end{equation*}
\end{assumption}
\vspace{-0.2cm}
This assumption is made for ease of presentation and can be relaxed.  We make use of the following notations. We assume SGD is run with a constant step size $\gamma_{old}$ until the stationary distribution $\pi_{\gamma_{old}}$ is reached. The step size is then decreased and SGD is run with a smaller step $\gamma \!=\! r \!\times\! \gamma_{old}$. Hence the iterates cease to be at stationarity under  $\pi_{\gamma_{old}}$ and start a transient phase towards $\pi_{\gamma}$. We denote by $\mathbb{E}_{\theta_0 \sim \gamma_{old}}$ (resp. $\mathbb{P}_{\theta_0 \sim \gamma_{old}}$) the expectation (resp. probability) of a random variable (resp. event) when the initial $\theta_0$ is sampled from the old  distribution $\pi_{\gamma_{old}}$ and a new step size $\gamma\! =\! r \!\times\! \gamma_{old}$ is used.  Note that $\mathbb{E}_{\theta_0 \sim \gamma_{old}}$ and $\mathbb{E}_{\pigamma}$ have different meanings, the latter being the expectation under $\pigamma$.

We first split $S_n$ in  a $\gamma$-dependent and a $\gamma$-independent part.
\begin{lemma}{\label{eq:var_R}}
Under \cref{as:quadratic_f}, let $\theta_0 \sim \pi_{\gamma_{old}}$ and assume we run SGD with a smaller step size $\gamma = r \times \gamma_{old}$, $r \in (0,1)$. Then, the statistic $S_n$ can be decomposed as:
$
S_n = - R_{n, \gamma} +  \chi_{n}
$. The part $\chi_{n}$ is independent of $\gamma$ and 
\begin{align*}
   & \Erestart{R_{n, \gamma}^2} \leq M (\frac{\gamma}{n} + \gamma^2);\\
&\E{\chi_n}=0 \ ,  \Var{\chi_n} = \frac{1}{n} \tr{(C^2)} \ \text{ and } \\
&\Var{\chi_n^2} = \frac{\E{(\xi_1^T \xi_2)^4} - \trsq{C^2}}{n^3},
\end{align*}

where $M$ is independent of $\gamma$ and $n$.
\end{lemma}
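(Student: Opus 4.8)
The plan is to make the $\gamma$-dependence of $S_n$ explicit, peel off from each summand the single term that does not depend on $\gamma$, and estimate the rest using the geometric contraction of the recursion. Translate so that $\theta^*=0$ (the minimizer of $f(\theta)=\tfrac12\theta^\top H\theta$); then \cref{eq:sgd} becomes $\theta_{k+1}=(I-\gamma H)\theta_k-\gamma\xi_{k+1}$ and $f'_{k+1}(\theta_k)=H\theta_k+\xi_{k+1}$, and iterating gives $\theta_k=(I-\gamma H)^k\theta_0-\gamma\sum_{j=1}^{k}(I-\gamma H)^{k-j}\xi_j$, a function of $\theta_0,\xi_1,\dots,\xi_k$ depending on $\gamma$ only through powers of $I-\gamma H$. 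Substituting $\theta_{k+1}=(I-\gamma H)\theta_k-\gamma\xi_{k+1}$ into the inner product and expanding,
\[
\langle f'_{k+1}(\theta_k),\,f'_{k+2}(\theta_{k+1})\rangle=\xi_{k+1}^\top\xi_{k+2}+T_k,
\]
where $T_k$ collects the four remaining terms $\theta_k^\top H^2(I-\gamma H)\theta_k$, $\theta_k^\top(H-2\gamma H^2)\xi_{k+1}$, $\theta_k^\top H\xi_{k+2}$ and $-\gamma\,\xi_{k+1}^\top H\xi_{k+1}$, which are exactly the $\gamma$-dependent ones. Setting $\chi_n:=\tfrac1n\sum_{k=0}^{n-1}\xi_{k+1}^\top\xi_{k+2}$ (manifestly $\gamma$-free) and $R_{n,\gamma}:=-\tfrac1n\sum_{k=0}^{n-1}T_k$ then yields $S_n=-R_{n,\gamma}+\chi_n$.

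The moments of $\chi_n$ follow from elementary conditioning. Writing $Z_k:=\xi_{k+1}^\top\xi_{k+2}$, the $\xi_i$ being i.i.d., centred and independent of $\F_k$, we have $\E{Z_k}=0$; for $k\ne l$, $\E{Z_kZ_l}=0$ (condition on everything but the $\xi$ appearing only once); and $\E{Z_k^2}=\E{\xi_{k+1}^\top C\xi_{k+1}}=\tr{C^2}$. Hence $\E{\chi_n}=0$ and $\Var{\chi_n}=\tfrac1{n^2}\sum_k\E{Z_k^2}=\tfrac1n\tr{C^2}$. For $\Var{\chi_n^2}=\E{\chi_n^4}-(\E{\chi_n^2})^2$ I would expand $\E{(\sum_kZ_k)^4}$ and keep only the index quadruples surviving independence — the fully diagonal ones, which give $n\,\E{(\xi_1^\top\xi_2)^4}$, and those formed of two coinciding pairs — then subtract $(\E{\chi_n^2})^2=\trsq{C^2}/n^2$; the leading surviving contribution $\E{(\xi_1^\top\xi_2)^4}/n^3$ comes from the diagonal quadruples, giving the stated identity.

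The bound on $R_{n,\gamma}$ is the crux; I would write $\Erestart{R_{n,\gamma}^2}=\big(\Erestart{R_{n,\gamma}}\big)^2+\Var{R_{n,\gamma}}$ (the variance under the restart randomness) and treat the two pieces separately. For the mean: since $\xi_{k+1},\xi_{k+2}$ are centred given $\F_k$ respectively $\F_{k+1}$, only the quadratic term of $T_k$ has nonzero expectation, and the contraction argument of \cref{eq:needell} applied to $\theta_{k+1}=(I-\gamma H)\theta_k-\gamma\xi_{k+1}$, started from $\Erestart{\norm{\theta_0}^2}=O(\gamma_{old})=O(\gamma)$ (the stationary second moment under $\pi_{\gamma_{old}}$, from \citet{dieuleveut2017bridging}), gives $\Erestart{\norm{\theta_k}^2}=O(\gamma)$ uniformly in $k$; hence $|\Erestart{T_k}|=O(\gamma)$ and $\big(\Erestart{R_{n,\gamma}}\big)^2=O(\gamma^2)$. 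For the variance, the key point is that $\sum_kT_k$ is a sum of \emph{weakly dependent} summands of variance $O(\gamma)$. Split $T_k=P_k+Q_k$ with $P_k:=\theta_k^\top(H-2\gamma H^2)\xi_{k+1}+\theta_k^\top H\xi_{k+2}$, the part linear in the fresh increments. Then $\E{P_k\mid\F_k}=0$ and $P_k$ is $\F_{k+2}$-measurable, so a conditioning argument gives $\Erestart{P_kP_l}=0$ whenever $|k-l|\ge2$; combined with $\Erestart{P_k^2}=O(\gamma)$ (bounded by $\Erestart{\norm{\theta_k}^2}$ up to a factor depending on $\norm{H},\norm{C}$) this yields $\Erestart{(\sum_kP_k)^2}=O(n\gamma)$. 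The remainder $Q_k=\theta_k^\top H^2(I-\gamma H)\theta_k-\gamma\,\xi_{k+1}^\top H\xi_{k+1}$ is quadratic in $\theta_k$, so $\Erestart{Q_k^2}=O(\gamma^2)$ once $\Erestart{\norm{\theta_k}^4}=O(\gamma^2)$ is established (like the second moment, from $\Erestart{\norm{\theta_0}^4}=O(\gamma_{old}^2)$), while its covariances decay geometrically, $|\operatorname{Cov}(Q_k,Q_l)|=O\big(\gamma^2(1-\gamma\mu)^{2|k-l|}\big)$, because the dependence of $\theta_l$ on $\theta_k$ factors through $(I-\gamma H)^{|k-l|}$; summing the geometric series gives $O\big(n\gamma^2/(\gamma\mu)\big)=O(n\gamma)$, and the same estimates control the cross-covariances between the $P_k$'s and the $Q_k$'s. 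All contributions add up to $\Erestart{(\sum_k(T_k-\Erestart{T_k}))^2}=O(n\gamma)$, hence $\Var{R_{n,\gamma}}=O(\gamma/n)$; adding the $O(\gamma^2)$ mean term gives $\Erestart{R_{n,\gamma}^2}\le M(\gamma/n+\gamma^2)$ with $M$ depending only on $H,C,r$ and the moments of $\xi$.

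The main obstacle is this variance bound: the naive Cauchy--Schwarz estimate $\Erestart{(\sum_kT_k)^2}\le\big(\sum_k\sqrt{\Erestart{T_k^2}}\big)^2=O(n^2\gamma)$ loses a factor $n$, so one must genuinely exploit decorrelation — the exact $2$-dependence of the noise-linear part $P_k$ and the $(1-\gamma\mu)$-geometric mixing of the quadratic part $Q_k$ — to collapse the double sum $\sum_{k,l}\operatorname{Cov}(T_k,T_l)$ down to $O(n\gamma)$. Carefully distinguishing the covariances that vanish identically from those that are only geometrically small, and keeping the needed moments of $\xi$ under control (\cref{as:quadratic_f}), is where most of the bookkeeping lies.
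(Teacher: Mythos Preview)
Your decomposition of $S_n$ and your moment computations for $\chi_n$ coincide with the paper's (Lemma~\ref{Var_xi}). The difference lies in how the bound on $\Erestart{R_{n,\gamma}^2}$ is organised.

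The paper groups the summands by \emph{type} first rather than by time index: it writes $nR_{n,\gamma}=-(T_1+T_2+T_3+T_4)$, where $T_1=\sum_k\eta_k^\top H^2(I-\gamma H)\eta_k$ collects the quadratic-in-$\eta_k$ terms, $T_2,T_3$ collect the terms linear in a fresh $\xi$, and $T_4=-\gamma\sum_k\xi_{k+1}^\top H\xi_{k+1}$; it then uses $(a+b+c+d)^2\le 4(a^2+b^2+c^2+d^2)$ and bounds each $\Erestart{T_i^2}$ separately. For $T_2,T_3$ the summands are exactly orthogonal (your $P_k$ observation), giving $O(n\gamma)$. For $T_1$ the paper does \emph{not} invoke geometric mixing: it splits $\eta_k=(I-\gamma H)^k\eta_0+\gamma\sum_j(I-\gamma H)^{k-j}\xi_j$ into its drift and stochastic parts and bounds each cross-term $\Erestart{\|\tilde\eta_k\|^2\|\tilde\eta_{k'}\|^2}$ by $O(\gamma^2)$ directly, via an explicit index-matching computation on the fourth moment of the stochastic part. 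Summing over $k,k'$ gives $O(n^2\gamma^2)$, which after dividing by $n^2$ is the $O(\gamma^2)$ piece.

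Your route --- mean/variance split, then $T_k=P_k+Q_k$ with $P_k$ a $2$-dependent martingale-type increment and $Q_k$ handled by geometric covariance decay --- is also correct, but the mixing argument on $Q_k$ is more than you need. Once you have $\Erestart{Q_k^2}=O(\gamma^2)$ uniformly in $k$ (which does require the uniform fourth-moment bound $\Erestart{\|\theta_k\|^4}=O(\gamma^2)$ you mention), the crude estimate $\Erestart{(\sum_kQ_k)^2}\le n^2\max_k\Erestart{Q_k^2}=O(n^2\gamma^2)$ already delivers the $O(\gamma^2)$ contribution, no covariance decay required; this is effectively what the paper does for $T_1$. The decorrelation is essential only for the noise-linear piece $P_k$ (paper's $T_2,T_3$), where $\Erestart{P_k^2}=O(\gamma)$ and the naive bound would indeed lose a factor $n$. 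So both approaches are valid: the paper's is more computational (explicit fourth-moment case analysis), yours is structurally cleaner but proves a stronger covariance decay than the argument actually needs.
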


Thus the variance of $\chi_n$ does not depend on $\gamma$ while, from a restart, the second moment  $\Erestart{R_{n, \gamma}^2}$ is $O(\frac{\gamma}{n} + \gamma^2)$. Therefore the signal to noise ratio is high. This property is the main idea behind the proof of the following proposition.

\begin{proposition}
\label{eq:pflug_probality}
Under \cref{as:quadratic_f,as:noise_symmetry}, let $\theta_{0}  \sim \pi_{\gamma_{old}}$ and run SGD with $\gamma = r \times \gamma_{old}$, $r \in (0, 1)$. Then for all $0 \leq \alpha < 2$ , and $n_{\gamma} = O(\gamma^{-\alpha})$ we have:
\[
\lim_{\gamma \to 0}\ProbRest{S_{n_{\gamma}} \leq 0} =\frac{1}{2}.
\]
\end{proposition}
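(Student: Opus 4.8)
The plan is to deduce the result from the decomposition in \Cref{eq:var_R}, namely $S_{n_\gamma} = -R_{n_\gamma,\gamma} + \chi_{n_\gamma}$, by showing that (i) $\chi_{n_\gamma}$ has a limiting sign distribution that is symmetric around $0$, and (ii) the perturbation $R_{n_\gamma,\gamma}$ is negligible compared to the fluctuations of $\chi_{n_\gamma}$ when $n_\gamma = O(\gamma^{-\alpha})$ with $\alpha < 2$. The key scaling observations are: from \Cref{eq:var_R}, $\Var{\chi_{n_\gamma}} = \tr(C^2)/n_\gamma$, so the typical size of $\chi_{n_\gamma}$ is $\Theta(n_\gamma^{-1/2}) = \Theta(\gamma^{\alpha/2})$, whereas $\Erestart{R_{n_\gamma,\gamma}^2} \le M(\gamma/n_\gamma + \gamma^2) = O(\gamma^{1+\alpha} + \gamma^2)$, so the typical size of $R_{n_\gamma,\gamma}$ is $O(\gamma^{(1+\alpha)/2} + \gamma)$. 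Since $\alpha < 2$ gives $(1+\alpha)/2 > \alpha/2$ and $1 > \alpha/2$, the ratio $R_{n_\gamma,\gamma}/\sqrt{\Var{\chi_{n_\gamma}}}$ tends to $0$ in probability.

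Concretely, I would first observe that by \Cref{as:noise_symmetry} the variable $\xi_1^T\xi_2$ is symmetric, hence each summand $\ps{f'_k(\theta_{k-1})}{f'_{k-1}(\theta_{k-2})}$ in the $\gamma$-independent part $\chi_{n_\gamma}$ is built from symmetric increments; more carefully, $\chi_n$ is (up to the normalization $1/n$) a sum whose terms are $\xi_k^T\xi_{k-1}$, and by the sign-symmetry of the joint law together with the continuity of $\Prob{\xi_1^T\xi_2 \ge x}$ at $x=0$ one gets that $\Prob{\chi_{n_\gamma} = 0} \to 0$ and that $\chi_{n_\gamma}$ has a symmetric distribution, so $\ProbRest{\chi_{n_\gamma} \le 0} = \tfrac12$ exactly for every $n$ (the symmetry is exact, not asymptotic, because negating all the $\xi_i$ leaves the law invariant and negates $\chi_n$ while fixing $R_{n,\gamma}$'s law only in distribution — I should check whether the joint sign flip argument can be applied simultaneously to $S_n$; if not, I fall back on an asymptotic CLT-type statement for $\chi_{n_\gamma}$).

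Next I would write, for $\eps > 0$,
\[
\ProbRest{S_{n_\gamma} \le 0} = \ProbRest{\chi_{n_\gamma} \le R_{n_\gamma,\gamma}},
\]
and sandwich this between $\ProbRest{\chi_{n_\gamma} \le -\eps\sqrt{\Var{\chi_{n_\gamma}}}} - \ProbRest{|R_{n_\gamma,\gamma}| > \eps\sqrt{\Var{\chi_{n_\gamma}}}}$ and the analogous upper bound with $+\eps$. By Chebyshev (or Markov on the second moment), $\ProbRest{|R_{n_\gamma,\gamma}| > \eps\sqrt{\Var{\chi_{n_\gamma}}}} \le \Erestart{R_{n_\gamma,\gamma}^2}/(\eps^2 \Var{\chi_{n_\gamma}}) = O\big(n_\gamma(\gamma/n_\gamma + \gamma^2)\big)/\eps^2 = O\big((\gamma + \gamma^2 n_\gamma)/\eps^2\big)$, which vanishes as $\gamma \to 0$ precisely because $\gamma^2 n_\gamma = O(\gamma^{2-\alpha}) \to 0$ for $\alpha < 2$. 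For the $\chi$ term, I need $\ProbRest{\chi_{n_\gamma} \le \pm\eps\sqrt{\Var{\chi_{n_\gamma}}}} \to \tfrac12$ as $\gamma \to 0$ then $\eps \to 0$; this follows either from exact symmetry of $\chi_{n_\gamma}$ (giving $\tfrac12 + o_\eps(1)$ after controlling the atom-free-ness near $0$) or, if one prefers, from a Lindeberg central limit theorem for the $m$-dependent, stationary sequence $(\xi_k^T\xi_{k-1})_k$ — here one uses that $\E{(\xi_1^T\xi_2)^2} = \tr(C^2)$ is the per-term variance and the sixth-moment hypothesis embedded in \Cref{as:supp_noise_conditions} (or a fourth-moment bound implied by the formula for $\Var{\chi_n^2}$) to verify the Lindeberg condition, yielding that $\sqrt{n_\gamma}\,\chi_{n_\gamma}$ converges to a centered Gaussian, whence $\ProbRest{\chi_{n_\gamma} \le t\sqrt{\Var{\chi_{n_\gamma}}}} \to \Phi(t/\sqrt{\tr(C^2)}\cdot\text{(scale)})$, continuous and equal to $\tfrac12$ at $t=0$. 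Taking $\gamma \to 0$ then $\eps \to 0$ sends both bounds to $\tfrac12$ and the squeeze concludes the proof.

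The main obstacle I anticipate is the handling of the $\chi$ term near zero: establishing rigorously that $\ProbRest{\chi_{n_\gamma} \le 0} \to \tfrac12$ (or is exactly $\tfrac12$) and, more delicately, that the probability mass in a shrinking window $[-\eps\sqrt{\Var{\chi_{n_\gamma}}}, \eps\sqrt{\Var{\chi_{n_\gamma}}}]$ goes to $0$ as $\eps \to 0$ uniformly enough in $\gamma$. The exact-symmetry route is cleanest if the joint sign flip $\xi_i \mapsto -\xi_i$ can be applied to the whole trajectory (it fixes $\theta_0 \sim \pi_{\gamma_{old}}$ in distribution since that law is also symmetric, negates every gradient, and hence negates $\chi_n$ while leaving $R_{n,\gamma}$'s contribution — which is even in the $\xi$'s through products $\xi_k^T(\text{linear in }\theta)$, need to verify parity — possibly sign-flipped too); if the parities do not line up, the fallback is the $m$-dependent CLT plus an anti-concentration estimate for $\chi_{n_\gamma}$, for which the continuity assumption \Cref{as:noise_symmetry} on $\Prob{\xi_1^T\xi_2 \ge x}$ at $0$ is exactly what rules out a lingering atom and lets the Gaussian limit control the small-window mass.
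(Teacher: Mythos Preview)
Your proposal is correct in outline and shares the paper's skeleton: use the decomposition $S_{n_\gamma}=-R_{n_\gamma,\gamma}+\chi_{n_\gamma}$ from \Cref{eq:var_R}, kill $R_{n_\gamma,\gamma}$ by Markov against a threshold chosen between the two scales, and control the sign of $\chi_{n_\gamma}$ via its symmetry. The scaling heuristic you write down is exactly the mechanism, and your sandwich inequality is equivalent to the paper's \Cref{lemma:probability_inequality}.

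The substantive difference is in how the anti-concentration of $\chi_{n_\gamma}$ near $0$ is handled. You propose either an exact joint-sign-flip symmetry (about which you rightly hedge) or an $m$-dependent CLT. The paper instead applies the Paley--Zygmund inequality to $\chi_{n_\gamma}^2$: since \Cref{eq:var_R} gives $\E{\chi_n^2}=\tr(C^2)/n$ and $\Var{\chi_n^2}=O(1/n^3)$, one gets $\Var{\chi_{n_\gamma}^2}=o\big(\E{\chi_{n_\gamma}^2}^2\big)$ whenever $n_\gamma\to\infty$, so Paley--Zygmund yields $\Prob{\chi_{n_\gamma}^2 \ge c_\gamma^2}\to 1$ for any $c_\gamma^2=o(\E{\chi_{n_\gamma}^2})$. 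Combined with the symmetry assumption this gives $\Prob{\chi_{n_\gamma}\le -c_\gamma}\to\tfrac12$ directly, without a limit theorem. The paper also picks the threshold $c_\gamma=\gamma^{(\alpha+1)/3}$ explicitly (lying strictly between the scales of $R$ and $\chi$), which makes the Markov and Paley--Zygmund pieces work simultaneously.

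One genuine gap in your sketch: your CLT route does not cover $\alpha=0$, where $n_\gamma$ is bounded and no asymptotic normality is available. The paper treats this case separately, using the continuity hypothesis in \Cref{as:noise_symmetry} directly (with fixed $n_b$, $\Prob{\chi_{n_b}^2\ge c_\gamma^2}\to\Prob{\chi_{n_b}^2>0}=1$ as $c_\gamma\to 0$). Your exact-symmetry idea could in principle handle this too, but the parity check you flag is real and the Paley--Zygmund/continuity route sidesteps it entirely.
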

\begin{sproof}
The complete proofs of \cref{eq:var_R,eq:pflug_probality} are given in~\Cref{appsec:proof_pflug_probality}. The main idea is that the signal to noise ratio is too high. The signal during the transient phase is positive and $O(\gamma)$. However the variance of $S_n$ is $O(1 / n)$. Hence $\Omega (1 / \gamma^2)$ iterations are typically needed in order to have a clean signal. Before this threshold, $S_n$ resembles a random walk and its sign gives no information on whether saturation is reached or not, this leads to early on restarts. 
\end{sproof}
We make the following observations.
\begin{itemize}[itemsep=0pt, topsep=0pt,leftmargin=*]
 \item Note that the typical time to reach saturation with a constant step size $\gamma$ is of order $1 / \gamma$ (see \cref{sec:BiasVariance}). 
 We should expect Pflug's statistic to satisfy $\lim_{\gamma \to 0} \ProbRest{S_{n_{b}} \leq 0} = 0$ for all constant burn-in time $n_b$ smaller than the typical saturation time $O(1/\gamma)$ -- since the statistic should not detect saturation before it is actually reached. \cref{eq:pflug_probality} shows that this is not the case and that the step size is therefore decreased too early. This phenomenon is clearly seen in \cref{fig:pflug_abusive_restarts} in \cref{sec:experiments}.

    \item We note that \citet{pflug1988adaptive} describes an opposite result. We believe this is due to a miscalculation of $\Var {\chi_n}$ in his proof (see detail in~\Cref{appsec:proof_pflug_mistake}).
    
     \item \citet{lang2019using} similarly point out the existence of a large variance in the diagnostic proposed by \citet{yaida18}. They make the strategy more robust by implementing a formal statistical test, to only reduce the learning rate when the limit distribution has been reached with \emph{high confidence}. Unfortunately, \cref{eq:pflug_probality} entails that more than $O(1/\gamma^2)$ iterations are needed to accurately detect convergence for Pflug's statistic, and we thus believe that Lang's approach would be too conservative and would not reduce the learning rate often enough.
\end{itemize}

Hence Pflug's diagnostic is inadequate and leads to poor experimental results (see \cref{sec:experiments}). We propose then a novel simple distance-based diagnostic which enjoys state-of-the art rates for a variety of classes of convex functions.

\section{A new distance-based statistic}
\label{sec:new_stat}
We propose here a very simple statistic based on the distance between the current iterate $\theta_n$ and the iterate from which the step size has been last decreased. Indeed, we would ideally like to decrease the step size when $\norm{\eta_n} \!= \norm{\theta_n \!-\! \theta^*}$  starts to saturate. Since the optimum $\theta^*$ is not known, we cannot track the evolution of this criterion. However it has a similar behaviour as $\norm{\Omega_n} \!=\! \norm{\theta_n\! - \theta_0}$, which we can compute. 
This is seen through the simple equation
$$\norm{\Omega_n}^2 = \norm{\eta_n}^2 + \norm{\eta_0}^2 - 2 \ps{\eta_n}{\eta_0}.$$
The value $\norm{\eta_n}^2$ is then expected to saturate roughly at the same time as $\norm{\Omega_n}^2$. In addition, $\norm{\theta_n - \theta_0}^2$ describes a large range of values which can be easily tracked, starting at $0$ and roughly finishing around $\norm{\theta^* - \theta_0}^2 + O(\gamma)$ (see \cref{corr:slopes}). It is worth noting this would not be the case if a different referent point, $\Tilde{\theta} \neq \theta_0$, was considered.

 To find a heuristic to detect the convergence of $\norm{\theta_n - \theta_0}^2$, we consider the particular setting of a quadratic objective with additive noise stated in \cref{as:quadratic_f}. In this framework we can compute the evolution of $\E{\norm{\Omega_{n}}^2}$ in closed-form .
\begin{proposition}
\label{eq:omega}
Let $\theta_0 \in \R^d$ and $\gamma\in (0,1/L)$. Let $\Omega_n = \theta_n - \theta_0$. Under \cref{as:quadratic_f} we have that:
\begin{align*}
\E{\norm{\Omega_{n}}^2}& =  \eta_0^T [I - (I - \gamma H)^n]^2 \eta_0 \\
& + \gamma \tr{[I - (I - \gamma H)^{2 n}] (2 I - \gamma H)^{-1} H^{-1} C}.
\end{align*}
\end{proposition}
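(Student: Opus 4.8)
The plan is to unroll the SGD recursion in the quadratic setting into an explicit affine function of the noise variables and then take expectations term by term, using the additive and independent structure of the noise in \cref{as:quadratic_f}. Since $f(\theta)=\tfrac12\theta^\top H\theta$ is minimized at $\theta^*=0$ and $f'_{i+1}(\theta)=H\theta+\xi_{i+1}$, the recursion \cref{eq:sgd} reads $\eta_{i+1}=(I-\gamma H)\eta_i-\gamma\xi_{i+1}$ with $\eta_i:=\theta_i-\theta^*=\theta_i$. Writing $A:=I-\gamma H$, a straightforward induction gives $\eta_n=A^n\eta_0-\gamma\sum_{k=1}^nA^{n-k}\xi_k$, hence
\[
\Omega_n=\theta_n-\theta_0=\eta_n-\eta_0=(A^n-I)\eta_0-\gamma\sum_{k=1}^nA^{n-k}\xi_k .
\]

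Next I would expand $\norm{\Omega_n}^2$ as the sum of the deterministic term $\norm{(A^n-I)\eta_0}^2$, a cross term linear in the $\xi_k$, and the quadratic noise term $\gamma^2\norm{\sum_{k=1}^nA^{n-k}\xi_k}^2$, and take expectations. The cross term vanishes because $\eta_0$ is deterministic and $\E{\xi_k}=0$; in the noise term the off-diagonal contributions $\E{\xi_j^\top M\xi_k}$ for $j\neq k$ vanish by independence and zero mean. Using that $A$ is symmetric, $\norm{(A^n-I)\eta_0}^2=\eta_0^\top(I-A^n)^2\eta_0$, which is already the first term of the claimed formula; and $\E{\xi_k^\top A^{2(n-k)}\xi_k}=\tr{A^{2(n-k)}C}$, so the noise term equals $\gamma^2\tr{\bigl(\sum_{j=0}^{n-1}A^{2j}\bigr)C}$ after reindexing $j=n-k$.

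The only slightly delicate point — which I regard as the main (and mild) obstacle — is evaluating this geometric sum and recognizing the resulting matrix factor. Since $\gamma\in(0,1/L)$ and the spectrum of $H$ lies in $(0,L]$, the eigenvalues of $A$ lie in $(0,1)$, so $I-A^2$ is invertible and $\sum_{j=0}^{n-1}A^{2j}=(I-A^{2n})(I-A^2)^{-1}$. The key algebraic identity is $I-A^2=2\gamma H-\gamma^2H^2=\gamma H(2I-\gamma H)$; since $H$ and $2I-\gamma H$ commute, $(I-A^2)^{-1}=\gamma^{-1}(2I-\gamma H)^{-1}H^{-1}$. Substituting turns the noise term into $\gamma\tr{[I-(I-\gamma H)^{2n}](2I-\gamma H)^{-1}H^{-1}C}$, which is exactly the second term of the Proposition, completing the computation. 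If one wishes to allow $H$ merely positive semidefinite, as stated in \cref{as:quadratic_f}, the argument is run on the range of $H$ and $H^{-1}$ is read as the pseudoinverse there.
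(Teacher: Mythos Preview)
Your proof is correct and follows essentially the same route as the paper: unroll the linear recursion $\eta_n=(I-\gamma H)^n\eta_0\pm\gamma\sum_{k=1}^n(I-\gamma H)^{n-k}\xi_k$, write $\Omega_n=\eta_n-\eta_0$, square and take expectations using the i.i.d.\ zero-mean noise. Your version is simply more explicit about the geometric sum $\sum_{j=0}^{n-1}A^{2j}=(I-A^{2n})(I-A^2)^{-1}$ and the factorization $I-A^2=\gamma H(2I-\gamma H)$, which the paper leaves implicit; your closing remark about reading $H^{-1}$ as a pseudoinverse when $H$ is only positive semidefinite is a nice touch that the paper does not mention.
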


The proof of this result is given in~\Cref{appsec:new_stat}. We can analyse this proposition in two different settings: for small values of $n$ at the beginning of the process and when the iterates $\theta_n$ have reached stationarity.
\begin{corollary}\label{corr:slopes}
Let $\theta_0 \in \R^d$ and $\gamma\in[0,1/L]$. Let $\Omega_n = \theta_n - \theta_0$. Under \cref{as:quadratic_f} we have that for all $n\geq0$:
\begin{align*}
\Epi{\norm{\Omega_{n}}^2} &= \norm{\eta_0}^2 + \gamma \tr{H^{-1} C (2I - \gamma H)^{-1}},\\
\E{\norm{\Omega_{n}}^2} &= \gamma^2 \eta_0^T H^2 \eta_0 \times n^2 + \gamma^2 \tr{C} \times n \\
& \qquad \qquad \qquad \qquad \ + o((n \gamma)^2).
\end{align*}
\end{corollary}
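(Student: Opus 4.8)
The plan is to read off both identities directly from the exact formula in \cref{eq:omega}, specialised to two asymptotic regimes. Throughout, since $\gamma\in(0,1/L)$ and $0\prec H\preceq LI$, the symmetric matrix $I-\gamma H$ has all eigenvalues in $[1-\gamma L,1)\subset(-1,1)$, so every power series in $\gamma H$ used below converges and $(2I-\gamma H)^{-1}$ is a well-defined bounded operator commuting with $H$.

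For the first identity (the stationary, ``finishing'' value), I would let $n\to\infty$ in \cref{eq:omega}: since the spectral radius of $I-\gamma H$ is strictly less than $1$, both $(I-\gamma H)^n\to0$ and $(I-\gamma H)^{2n}\to0$. Hence the bias term $\eta_0^\top[I-(I-\gamma H)^n]^2\eta_0$ converges to $\norm{\eta_0}^2$, and the variance term converges to $\gamma\,\tr{(2I-\gamma H)^{-1}H^{-1}C}$. Because $(2I-\gamma H)^{-1}$ and $H^{-1}$ are both functions of $H$ they commute, so cyclicity of the trace rewrites this as $\gamma\,\tr{H^{-1}C(2I-\gamma H)^{-1}}$, giving the claimed value (which is indeed $n$-independent, consistently with stationarity).

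For the second identity I would Taylor-expand the same closed form for $n\gamma$ small. Writing $(I-\gamma H)^n=I-n\gamma H+\binom{n}{2}\gamma^2H^2-\cdots$ gives $I-(I-\gamma H)^n=n\gamma H+O(n^2\gamma^2)$, hence $[I-(I-\gamma H)^n]^2=n^2\gamma^2H^2+O(n^3\gamma^3)$, so the bias term equals $\gamma^2 n^2\,\eta_0^\top H^2\eta_0+O(n^3\gamma^3)$. For the variance term, $I-(I-\gamma H)^{2n}=2n\gamma H+O(n^2\gamma^2)$ and $(2I-\gamma H)^{-1}=\tfrac12 I+O(\gamma)$, so $\gamma\,\tr{[I-(I-\gamma H)^{2n}](2I-\gamma H)^{-1}H^{-1}C}=\gamma\,\tr{(2n\gamma H)\tfrac12 H^{-1}C}+O(n^2\gamma^3)=\gamma^2 n\,\tr{C}+O(n^2\gamma^3)$. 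Summing the two and noting that each remainder, $O(n^3\gamma^3)$ and $O(n^2\gamma^3)$, is $o((n\gamma)^2)$ as $n\gamma\to0$, yields the stated expansion.

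The argument is essentially bookkeeping once \cref{eq:omega} is in hand; the only points requiring some care are justifying the matrix power-series manipulations (controlled by $\gamma L<1$, which keeps $I-\gamma H$ a contraction and $(2I-\gamma H)^{-1}$ bounded) and checking that the discarded terms genuinely aggregate to $o((n\gamma)^2)$ — in particular that the cross term of order $n^3\gamma^3=(n\gamma)\cdot(n\gamma)^2$ produced when squaring $I-(I-\gamma H)^n$ is negligible against the retained $\gamma^2n^2\,\eta_0^\top H^2\eta_0$. No genuinely difficult step is involved.
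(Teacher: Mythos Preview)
Your proposal is correct and follows exactly the paper's approach: obtain the stationary value by letting $n\to\infty$ in the closed form of \cref{eq:omega}, and obtain the small-$n\gamma$ expansion by Taylor-expanding that same formula. Your write-up is in fact more detailed than the paper's own one-line justification for each identity.
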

From \cref{corr:slopes} we have shown the following asymptotic behaviours:
\begin{itemize}[leftmargin=*,itemsep=0pt, topsep=0pt]
    \item \textit{Transient phase.} For $n \ll 1 / (\gamma L)$, in a log-log plot $\E{\norm{\Omega_{n}}^2}$ has a slope bigger than $1$.
    \item \textit{Stationary phase.} For $n \gg 1 / (\gamma \mu)$, $\E{\norm{\Omega_{n}}^2}$ is constant and therefore has a slope of $0$ in a log-log plot.
\end{itemize}{}

This dichotomy naturally leads  to a distance-based convergence diagnostic where the step size is decreased by a factor $1/r$ when the slope becomes smaller than a certain threshold smaller than $2$. The slope is computed between iterations of the form $q^k$ and $q^{k+1}$ for $q > 1$ and $k \geq k_0$.  
The method is formally described in  \cref{alg:omega}.
We impose a burn-in time $q^{k_0}$ in order to avoid unwanted and possibly harmful restarts during the very first iterations of the SGD recursion, it is typically worth $\sim 8$ ($q=1.5$ and $k_0 = 5$) in all our experiments, see \cref{sec:experiments} and \cref{appsec:ours_expe}. Furthermore note that from \cref{eq:needell}, saturation is reached at iteration $\Theta(\gamma^{-1})$. Therefore when the step-size is decreased as $\gamma \gets r \times \gamma$ then the duration of the transience phase is increased by a factor $1 / r$. This shows that it is sufficient to run the diagnostic every $q^k$ where $q$ is smaller than $1 / r$.  

\begin{algorithm}[H]
\caption{Distance-based diagnostic}
\label{alg:omega}
\begin{algorithmic} 
\STATE\textbf{Input:} $\theta_0$, $\theta_n$, $\theta_{n / q}$, $n$, $q > 1$, $k_0 \in \mathbb{N}^*$, $\text{thresh} \in (0, 2]$\\
\STATE \textbf{Output:} Diagnostic boolean
\IF{ $n = q^{k+1}$ for a $k \geq k_0$ in $\mathbb{N}^*$}
\STATE $S \gets \frac{\log{\norm{\theta_n - \theta_{0}}^2} - \log{\norm{\theta_{n / q} - \theta_{0}}^2}   }{\log{n} - \log{n/q}}$
\STATE \textbf{Return:}  $\{ S < \text{thresh} \}$
\ELSE{}
\STATE \textbf{Return: } \textbf{False}
\ENDIF
\end{algorithmic}  
\end{algorithm}
\vspace{-1em}

\section{Experiments}
\label{sec:experiments}
In this section, we illustrate our theoretical results with synthetic and real examples. We provide additional experiments in \Cref{appsec:ours_expe}. %
\paragraph{Least-squares regression.} We consider the objective $f(\theta) = \frac{1}{2 } \E{ (y_i - \ps{x_i}{\theta})^2}$. The inputs $x_i$ are i.i.d.~from $\mathcal{N}(0, H)$ where $H$ has random eigenvectors and eigenvalues $(1 / k)_{1\leq k \leq d}$. We note $R^2 = \tr{H}$. The outputs $y_i$ are generated following 
$y_i = \ps{x_i}{\theta^*} + \varepsilon_i$ 
where $(\varepsilon_i)_{1 \leq i \leq n}$ are i.i.d.~from $\mathcal{N}(0, \sigma^2)$.  We use averaged-SGD with constant step size $\gamma=1/2R^2$  as a baseline since it enjoys the optimal statistical rate $O(\sigma^2 d / n)$~\cite{bach2013non}.

\begin{figure}
    \begin{center}
    \includegraphics[trim = {3.6cm 0.1cm 3.7cm 0.5cm}, clip, width=\linewidth]{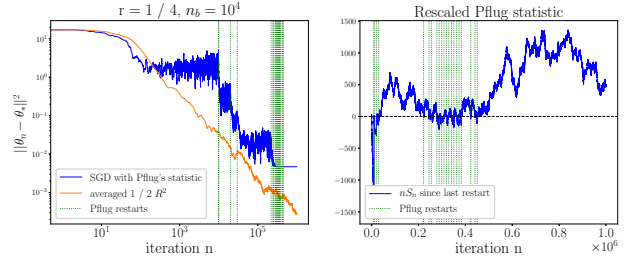}
    \end{center}
    \vspace{-0.3cm}
    \caption{Least-squares on synthetic data.  Left: least-squares regression. Right: Scaled Pflug's statistic $nS_n$. The dashed vertical lines correspond to Pflug's restarts. Note that only the left plot is in log-log scale.}
    \label{fig:pflug_abusive_restarts}
    \vspace{-1em}
    \end{figure}

\paragraph{Logistic regression setting.} We consider the objective $f(\theta) = \E{\log ( 1 + e^{- y_i \ps{x_i}{\theta}}}$. The inputs $x_i$ are generated the same way as in the least-square setting. The outputs $y_i \in \{ -1, 1 \}$ are generated following the logistic probabilistic model.
We use averaged-SGD with step-sizes $\gamma_n=1/ \sqrt{n}$  as a baseline since it enjoys the optimal rate $O(1/ n)$~\cite{bach2014adaptivity}. We also compare to online-Newton~\cite{bach2013non} which achieves better performance in practice.

\paragraph{ResNet18.} We train an 18-layer ResNet model \cite{he2016deep} on the CIFAR-10 dataset~\cite{Krizhevsky09learningmultiple} using SGD with a momentum of $0.9$, weight decay of $0.0001$ and batch size of $128$. To adapt the distance-based step-size statistic to this scenario, we use  Pytorch's \textit{ReduceLROnPlateau}() scheduler, created to detect saturation of arbitrary quantities. We use it to reduce the learning rate by a factor $r = 0.1$ when it detects that $\norm{\theta_n - \theta_{restart}}^2$ has stopped increasing. 
The parameters of the scheduler are set to: $\text{patience} = 1000$, $\text{threshold} = 0.01$. 
Investigating if this choice of parameters is robust to different problems and architectures would be a fruitful avenue for future research.
We compare our method to different step-size sequences where the step size is decreased by a factor $r = 0.1$ at various epoch milestones. Such sequences achieve state-of-the-art performances when the decay milestones are properly tuned. All initial step sizes are set to $0.1$.

\paragraph{Inefficiency of Pflug's statistic.} In order to test Pflug's diagnostic we consider the least-squares setting with $n = 1\text{e}6$, $d = 20$, $\sigma^2 = 1$.  \cref{alg:pflug} is implemented with a conservative burn-in time of $n_b = 1\text{e}4$ and \cref{alg:gen} with a discount factor $r = 1 / 4$. %
 We note in~\cref{fig:pflug_abusive_restarts} that the algorithm is restarted too often and abusively. This leads to small step sizes early on and to insignificant decrease of the loss afterward.  The signal of Pflug's statistic is very noisy, and its sign gives no significant information on weather saturation has been reached or not. As a consequence the final step-size is very close to 0.  We note that its behavior is alike the one of a random walk. On the contrary, averaged-SGD exhibits an $O(1/n)$ convergence rate. 
 We provide further experiments on Pflug's statistic in \Cref{appsec:pflug_experiments},  showing its systematic failure for several values of the decay parameter $r$, the seed and the burn-in.

\begin{figure}
\begin{center}
\includegraphics[width=0.95\linewidth]{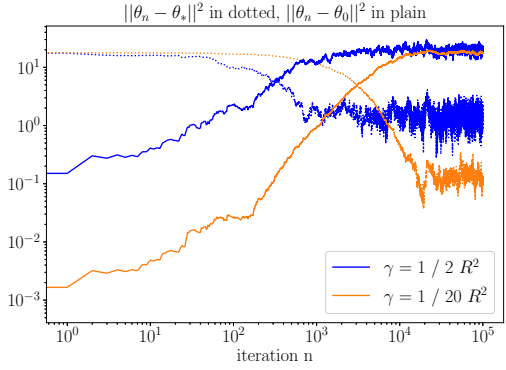}
\end{center}
\vspace{-0.3cm}
\caption{Logistic regression on synthetic dataset. $\norm{\theta_n - \theta^*}^2$ (dotted) and $\norm{\theta_n - \theta_0}^2$ (plain) for $2$ different step sizes. }
\label{fig:slopes_omega}
\vspace{-1em}
\end{figure}

\paragraph{Efficiency of the distance-based diagnostic.}
In order to illustrate the benefit of the distance-based diagnostic, we 
performed extensive experiments in several settings, more precisely: (1) Least Squares regression on a synthetic dataset, (2) Logistic regression on both synthetic and real data, (3) Uniformly convex functions, (4) SVM, (5) Lasso.
\emph{In all these settings, without any tuning, we achieve the same performance as the best suited method for the problem.} These experiments are detailed in~\Cref{appsec:ours_expe}. We hereafter present results for Logistic Regression.

First, we consider the logistic regression setting with $n = 1\text{e}5$, $d = 20$.  In \Cref{fig:slopes_omega}, we compare the behaviour of $\norm{\theta_n - \theta_0}^2$ and $\norm{\theta_n - \theta^*}^2$  for two different step sizes $1/2R^2$ and $1/20R^2$.  We first note that these two quantities have the same general behavior:  $\norm{\theta_n - \theta_0}^2$ stops increasing when $\norm{\theta_n - \theta^*}^2$ 
starts to saturate, and that this observation is consistent for the two step sizes. We additionally note that the average slope of $\norm{\theta_n - \theta_0}^2$ is of value $2$ during the transient phase and of value $0$ when stationarity has been reached. This demonstrates that, even if this diagnostic is inspired by the quadratic case, the main conclusions of \cref{corr:slopes} still hold for convex non-quadratic function and the distance-based diagnostic in \cref{alg:omega} should be more generally valid. We also notice that the two oracle restart times are spaced by $\log(20 / 2) = 1$ which confirms that the transient phase lasts $\Theta(1/\gamma)$.

\begin{figure}
\begin{center}
\includegraphics[trim = {2cm 0cm 4cm 0cm}, clip, width=\linewidth]{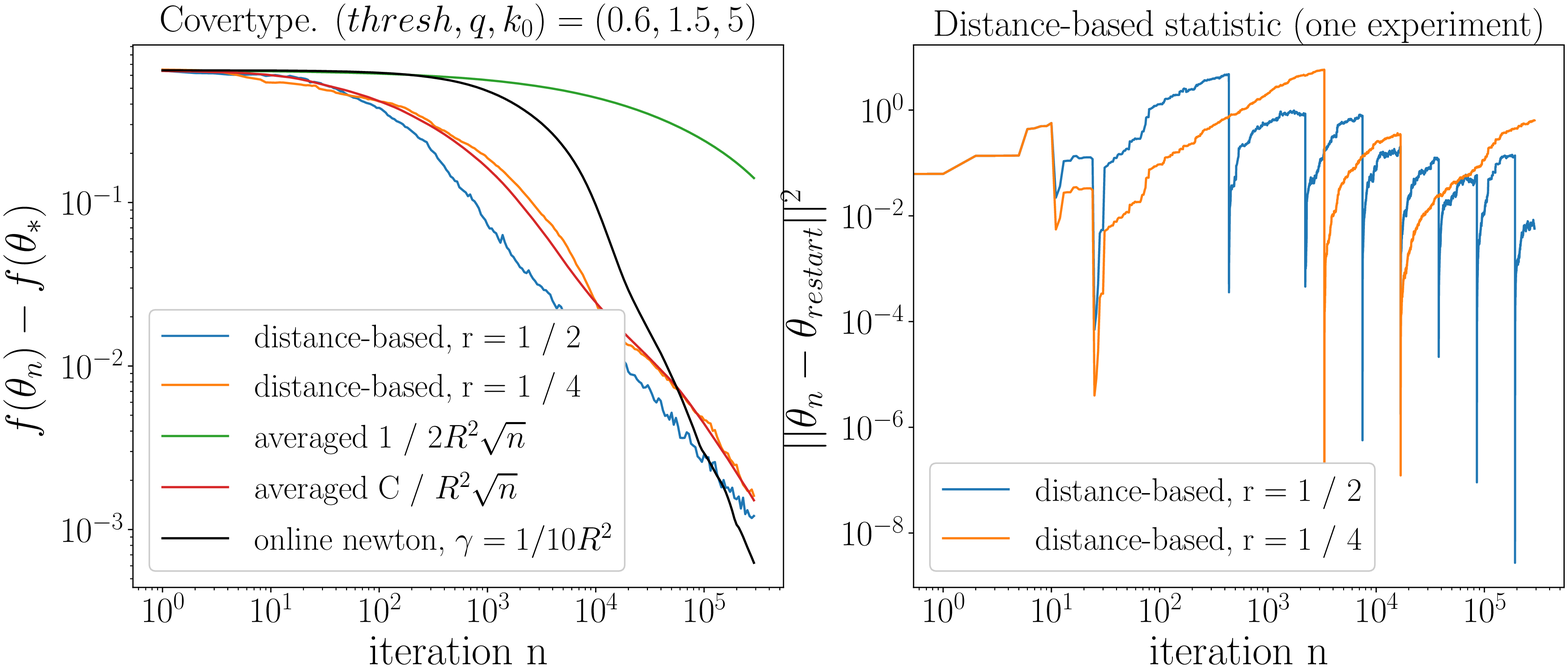}
\includegraphics[trim = {2cm 0cm 4cm 0cm}, clip, width=\linewidth]{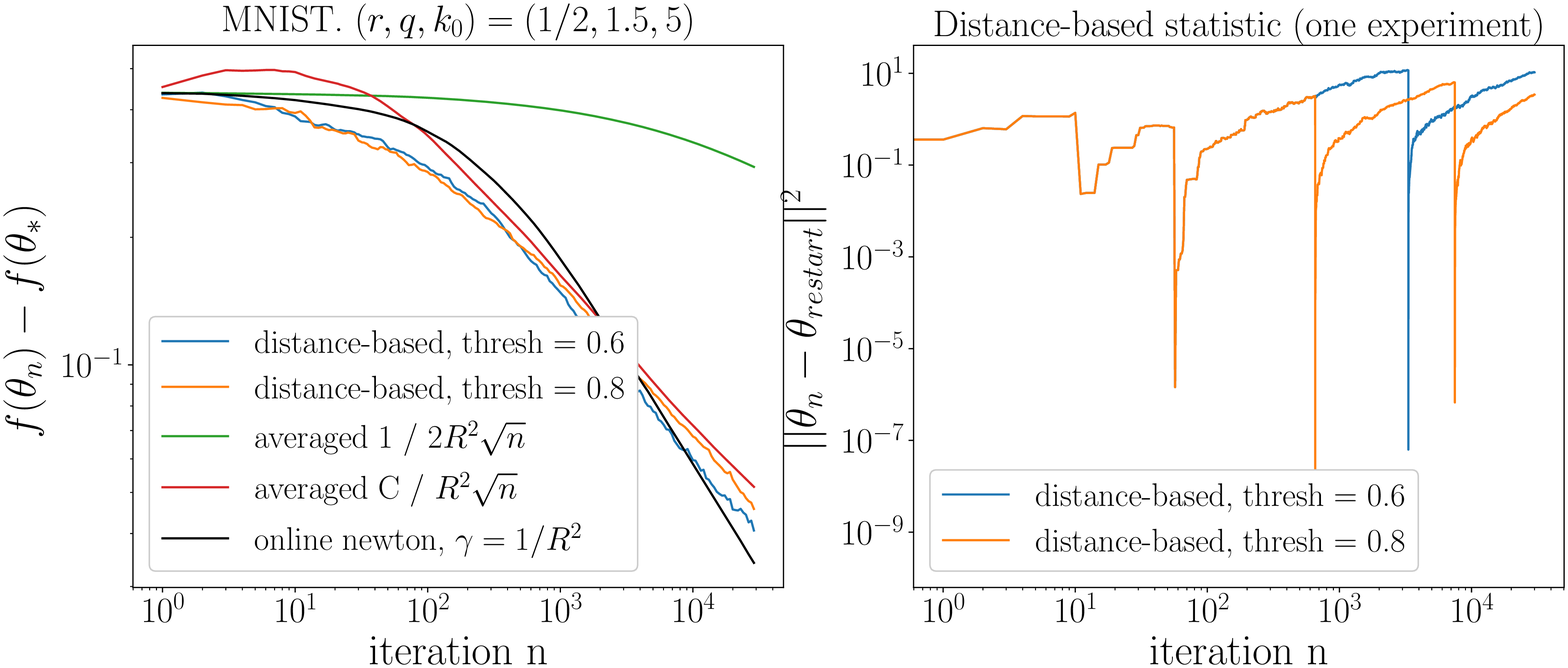}
\end{center}
\caption{Top: Covertype dataset. Two different values of $r$ are used: $1 / 2$, $1/4$. Bottom: MNIST dataset. Two different values of $thresh$ are used: $0.6$, $0.8$. Left: Logistic regression. Right: distance-based statistics $\norm{\theta_n - \theta_{restart}}^2$. }
\label{fig:rates_omega}
\vspace{-1em}
\end{figure}

We further investigate the performance of the distance-based diagnostic on real-world datasets: the Covertype dataset and the MNIST dataset\footnote{Covertype dataset available at \href{https://archive.ics.uci.edu/ml/datasets/covertype}{archive.ics.uci.edu/ml/datasets/covertype} 
and MNIST at \href{http://yann.lecun.com/exdb/mnist/}{yann.lecun.com/exdb/mnist}.}.  Each dataset is divided in two equal parts, one for training and one for testing. We then sample without replacement and perform a total of one pass over all the training samples. The loss is computed on the test set. This procedure is replicated $10$ times and the results are averaged. For MNIST the task consists in classifying the parity of the labels which are $\{ 0, \dots, 9\}$.
We compare our algorithm to: online-Newton ($\gamma = 1 / 10 R^2 $ for the Covertype dataset and $\gamma = 1 / R^2$ for MNIST) and averaged-SGD  with step sizes $\gamma_n=1/2 R^2\sqrt{n}$ (the value suggested by  theory) and  $\gamma_n=C/\sqrt{n}$ (where the parameter $C$ is tuned to achieve the best testing error). In \cref{fig:rates_omega}, we present the results.
Top row corresponds to the Covertype dataset for two different values of the decrease coefficient $r=1/2$ and $r=1/4$, the other parameters are set to 
$(tresh, q, k_0) = (0.6, 1.5, 5)$, left are shown the convergence rates for the different algorithms and parameters, right are plotted the evolution of the distance-based statistic $\norm{\theta_n-\theta_0}^2$. %
Bottom row corresponds to the MNIST dataset for two different values of the threshold $thresh=0.6$ and $thresh=0.8$, the other parameters are set to 
$(r, q, k_0) = (1/2, 1.5, 5)$, left are shown the convergence rates for the different algorithms and parameters, right are plotted the evolution of the distance-based statistic $\norm{\theta_n-\theta_0}^2$.
The initial step size for our distance-based algorithm was set to $4 / R^2$. 
Our adaptive algorithm obtains comparable performance as online-Newton and optimally-tuned averaged SGD, enjoying a convergence rate $O(1/n)$, and better performance than theoretically-tuned averaged-SGD. Moreover we note that the convergence of the distance-based algorithm is the fastest early stage. Thus this algorithm seems to benefit from the same exponential-forgetting of initial conditions as the oracle diagnostic (see \cref{eq:oracle_rate}). We point out that our algorithm is relatively independent of the choice of $r$ and $thresh$. We also note (red and green curves) that the theoretically optimal step size is outperformed by the hand-tuned one with the same decay, which only confirms the need for adaptive methods. On the right is plotted the statistic during the SGD procedure. Unlike Pflug's one, the signal is very clean, which is mostly due to the large range of values that are taken.

\begin{figure}
\begin{center}
\includegraphics[trim = {3cm 0.2cm 4.5cm 2cm}, clip, width=\linewidth, height=3.4cm]{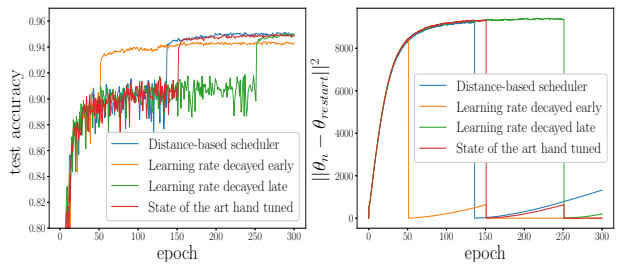}
\end{center}
\caption{ResNet18 trained on Cifar10. Left: test accuracies. Right: distance-based statistic $\norm{\theta_n - \theta_{restart}}^2$.}
\label{fig:deep_learning}
\vspace{-1em}
\end{figure}

\paragraph{Application to deep learning.}
We conclude by testing the distance-based statistic on a deep-learning problem in \cref{fig:deep_learning}. In practice, the learning rate is decreased when the accuracy has stopped increasing for a certain number of epochs. In red is plotted the accuracy curve obtained when the learning rate is decreased by a factor $r = 0.1$ at epochs $150$ and $250$. These specific epochs have been manually tuned to obtain state of the art performance.

Looking at the red accuracy curve, it seems natural to decrease the learning rate earlier around epoch $50$ when the test accuracy has stopped increasing. However doing so leads to a lower final accuracy (orange curve). On the other hand, decreasing the learning rate later, at epoch $250$, leads to a good final accuracy but takes longer to reach it. If instead of paying attention to the test accuracy we focus on the metric $\norm{\theta_n - \theta_{restart}}^2$ we notice that it still notably increases after epoch $50$ and until epoch $150$. This phenomenon manifests that this statistic
contains information that cannot be simply obtained from the test accuracy curve. Hence when the ReduceLROnPlateau scheduler is implemented using the distance-based strategy, the learning rate is automatically decreased around epoch $140$ and kept constant beyond (blue curve) which leads to a final state-of-the-art accuracy.

Therefore our distance-based statistic seems also to be a promising tool to adaptively set the step size for deep learning applications. We hope this will inspire further research.

\section*{Conclusion}

In this paper we studied convergence-diagnostic step-sizes. We first showed that such step-sizes make sense in the smooth and strongly convex framework since they recover the optimal $O(1 / n)$ rate with in addition an exponential decrease of the initial conditions. Two different convergence diagnostics are then analysed. First, we theoretically prove that Pflug's diagnostic leads to abusive restarts in the quadratic case. We then propose a novel diagnostic which relies on the distance of the final iterate to the restart point. We provide a simple restart criterion and theoretically motivate it in the quadratic case. The experimental results on synthetic and real world datasets show that our simple diagnostic leads to very satisfying convergence rates in a variety of frameworks. 

An interesting future direction to our work would be to theoretically prove that our diagnostic leads to adequate restarts, as seen experimentally. It would also be interesting to explore more in depth the applications of our diagnostic in the non-convex framework.

\section*{Acknowledgements}

The authors would like to thank the reviewers for useful suggestions as well as Jean-Baptiste Cordonnier for his help with the experiments.

\clearpage

\bibliography{bio}
\bibliographystyle{icml2020}

\onecolumn
\appendix

\section*{Organization of the Appendix}
In the appendix, we provide additional experiments and detailed proofs to all the results presented in the main paper. 
\begin{enumerate}
    \item In \cref{appsec:expe} we provide additional experiments. In \cref{appsec:pflug_experiments} we show that Pflug's diagnostic fails for different values of decrease factor $r$ and burn-in time $n_b$; together with 
     a simple experimental illustration of \cref{eq:pflug_probality}.
    Then in \cref{appsec:ours_expe} we investigate the performance of the distance-based statistic in different settings and for different values of $r$ and of the threshold value $thresh$. These settings are: Least-squares,
    Logistic regression,
    SVM,
    Lasso regression,
    and the Uniformly convex setting.
    \item In \cref{secapp:oracle} we prove \cref{eq:oracle_rate} as well as a similar result for uniformly convex functions.
    \item In \cref{appsec:pflugs} we prove \cref{eq:inner_product_expansion} and \cref{eq:pflug_probality} .
    \item Finally in \cref{appsec:new_stat} we prove \cref{eq:omega} and \cref{corr:slopes}.
\end{enumerate}

\section{Supplementary experiments}
\label{appsec:expe}
 Here we provide additional experiments for the Pflug diagnostic and the distance-based statistic in different settings.

 \begin{figure}[t]
\begin{center}
\includegraphics[trim = {1cm 0cm 1cm 1cm}, clip, width=\linewidth]{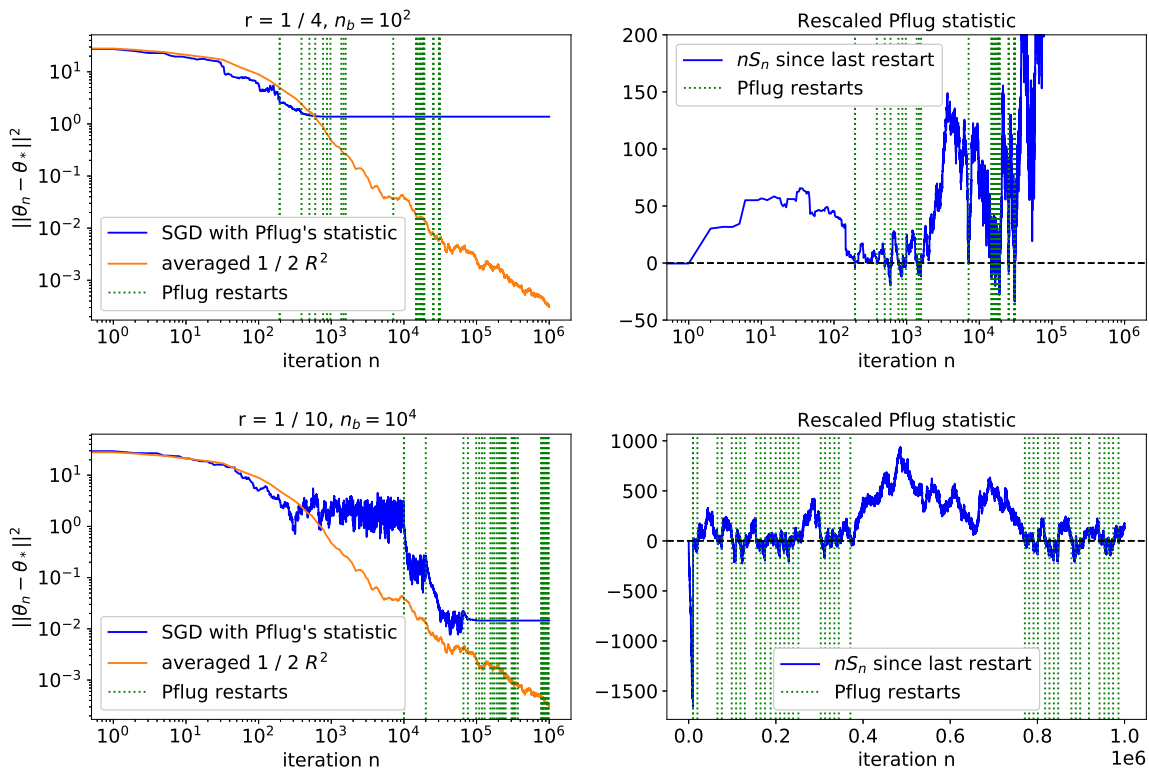}
\end{center}
\caption{Least-squares on synthetic data ($n = 1\text{e}6$, $d = 20$, $\sigma^2 = 1$). Left: least-squares regression. Right: Scaled Pflug statistic $n S_n$. The dashed vertical lines correspond to Pflug's restarts. Note that the x-axis of the bottom right plot is not in log scale.  Top parameters: $r = 1/10$, $n_b = 10^4$.  Bottom parameters: $r = 1/4$, $n_b = 10^2$. Initial learning rates set to $1 / 2 R^2$.}
\label{app_fig:pflug_abusive_restarts}
\end{figure}

\subsection{Supplementary experiments on Pflug's diagnostic}
\label{appsec:pflug_experiments}

We test Pflug's diagnostic in the least-squares setting with $n = 1\text{e}6$, $d = 20$, $\sigma^2 = 1$, $\gamma_0 = 1 / 2 R^2$.  Notice that as in \cref{fig:pflug_abusive_restarts}, Plug's diagnostic fails for different values of the algorithm's parameters. Indeed parameters $(r, n_b) = (1 / 4, 10^2)$ (\cref{app_fig:pflug_abusive_restarts} top row) and $(r, n_b) = (1 / 10, 10^4)$ (\cref{app_fig:pflug_abusive_restarts}  bottom row) both lead to abusive restarts (dotted vertical lines) that do not correspond to iterate saturation. These restarts lead to small step size too early and insignificant progress of the loss afterwards. Notice that in both cases the behaviour of the rescaled statistic $n S_n$ is similar to a random walk. On the contrary, as the theory suggests \citep{bach2013non} averaged-SGD exhibits a $O(1/n)$ convergence rate.
\begin{figure}[t]
\begin{center}
\includegraphics[trim = {5cm 0cm 5cm 0cm}, clip, width=\linewidth]{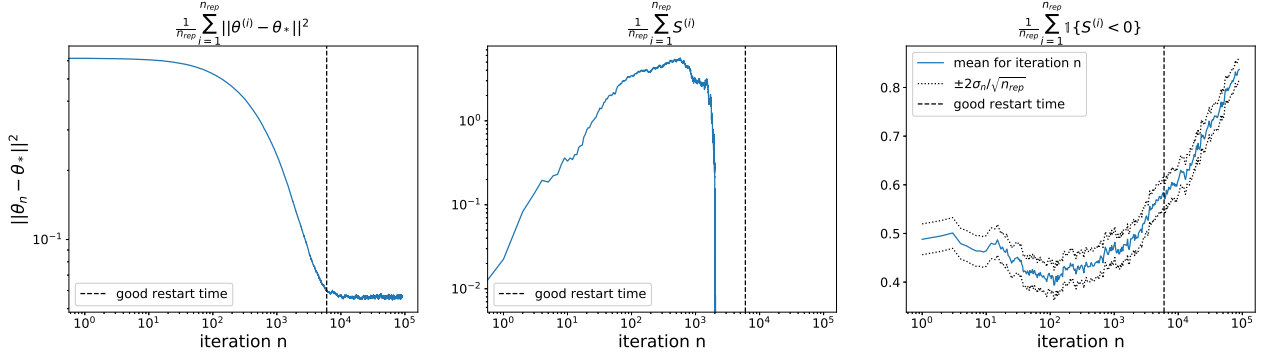}
\end{center}
\caption{Least-squares on synthetic data ($n = 1\text{e}5$, $d = 20$, $\sigma^2 = 1$). Parameters: $\gamma_{\text{old}} = 1 / 5 R^2$, $r = 1/10$, $n_{rep} = 10^3$. Left: least-squares regression averaged over all $n_{rep}$ samples. Middle: average of Pflug's statistic over all $n_{rep}$ samples. Right: fraction of runs where the statistic is negative at iteration $n$. The two dotted lines roughly correspond to the $95\%$ confidence intervals. }
\label{app_fig:illustr_theorem_pflug}
\end{figure}

In order to illustrate \cref{eq:pflug_probality} in the least-squares framework, we repeat $n_{rep}$ times the same experiment which consists in running constant step-size SGD from an initial point $\theta_0 \sim \pi_{\gamma_{\text{old}}}$ with a smaller step-size $\gamma = r \times \gamma_{\text{old}}$. The starting point $\theta_0 \sim \pi_{\gamma_{\text{old}}}$ is obtained by running for a sufficiently long time SGD with constant step size $\gamma_{\text{old}}$. In \cref{app_fig:illustr_theorem_pflug} we implement these multiple experiments with $n = 1\text{e}5$, $d = 20$, $\sigma^2 = 1$. In the left plot notice the two characteristic phases: the exponential decrease of $\norm{\theta_n - \theta^*}$ followed by the saturation of the iterates, the good restart time corresponding to this transition is indicated by the black dotted vertical line. Consistent with \cref{eq:inner_product_expansion}, we see in the middle plot that in expectation Pflug's statistic is positive then negative (the curve disappears as soon as its value is negative due to the plot in log-log scale). This change of sign occurs roughly at the same time as when the iterates saturate. However, in the right graph we plot for each iteration $k$ the fraction of runs for which the statistic $S_k$ is negative. We see that this fraction is close to $0.5$ for all $k$ smaller than the good restart time. Since for $n_{rep}$ big enough $\frac{1}{n_{rep}} \sum_{i = 1}^{n_{rep}} \mathbbm{1} \{ S_k^{(i)} < 0 \} \sim \mathbb{P} (S_k^{(i)} < 0 )$, this is an illustration of \cref{eq:pflug_probality}. Hence whatever the burn-in $n_b$ fixed by Pflug's algorithm, there is a chance out of two of restarting too early.

\subsection{Supplementary experiments on the distance-based diagnostic}
\label{appsec:ours_expe}

In this section we test our distance-based diagnostic in several settings. 
\begin{figure}[h!]
\begin{center}
\includegraphics[trim = {5cm 0cm 5cm 0cm}, clip, width=\linewidth, height = 10cm]{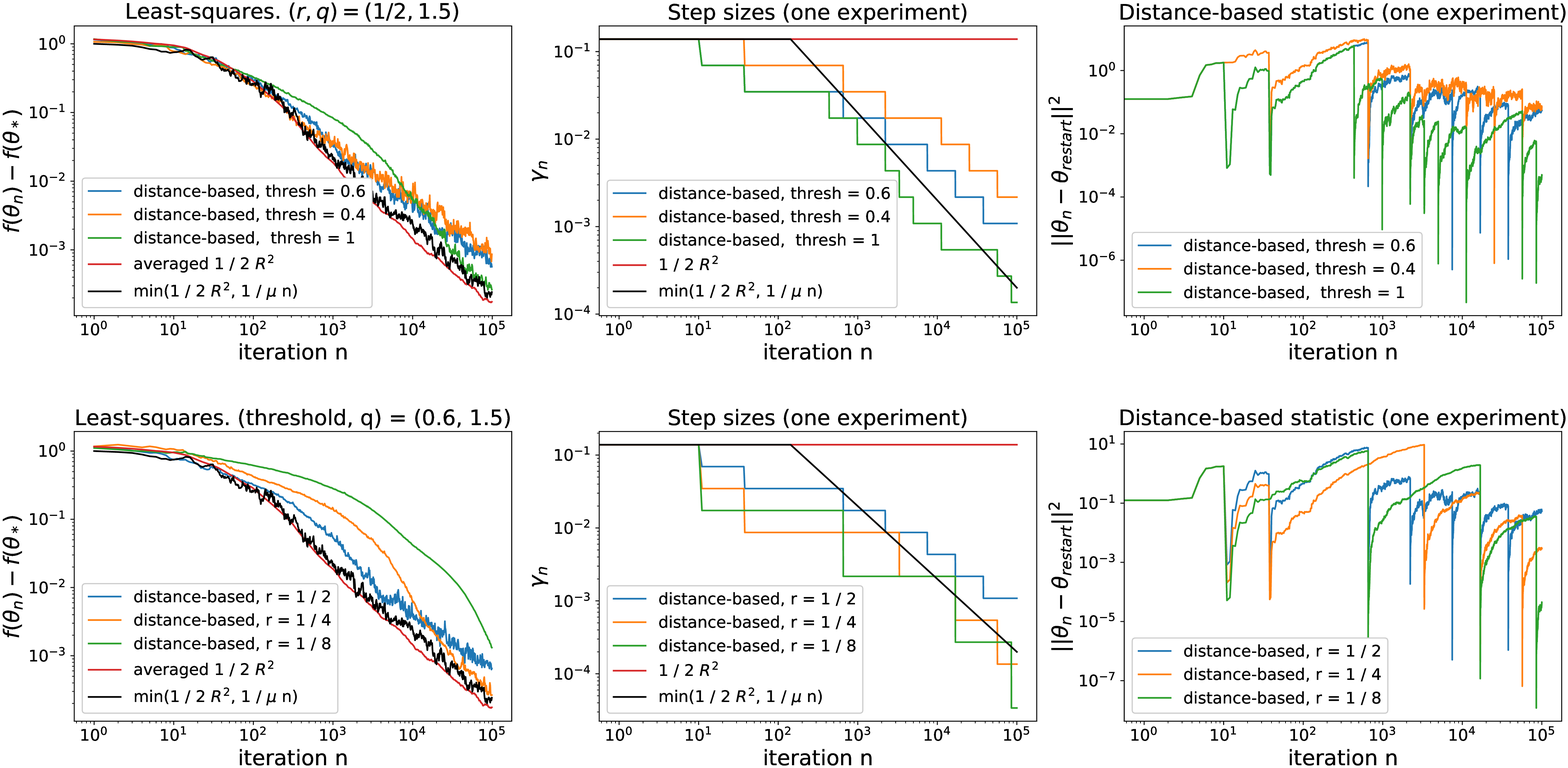}
\end{center}
\caption{Least-squares on synthetic data ($n = 1\text{e}5$, $d = 20$, $\sigma^2 = 1$). All initial step sizes of $1 / 2 R^2$. Top distanced-based parameters: $(r, q, k_0) = (1 / 2, 0.5, 5)$. Bottom distanced-based parameters: $(thresh, q, k_0) = (0.6, 1.5, 5)$. The losses on the left plot are averaged over $10$ replications.}
\label{app_fig:least_squares}
\end{figure}
\begin{figure}[h!]
\begin{center}
\includegraphics[trim = {5cm 0cm 5cm 0cm}, clip, width=\linewidth]{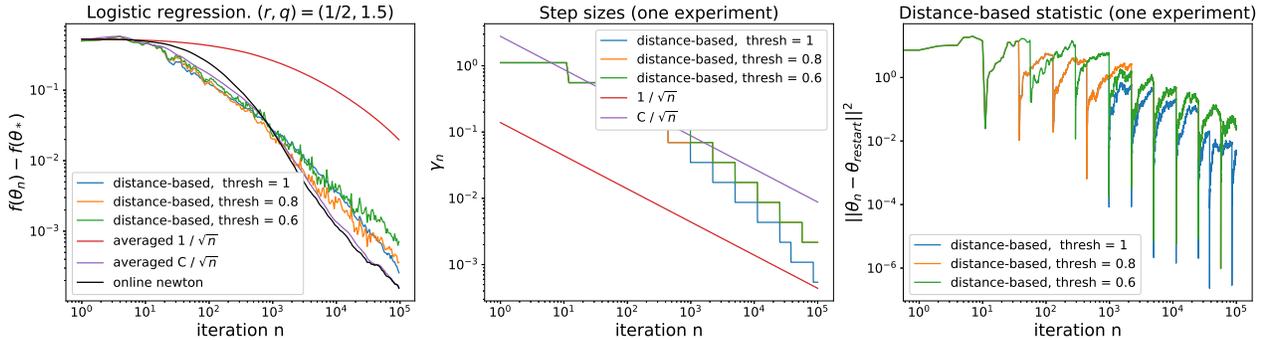}
\end{center}
\caption{Logistic regression on synthetic data ($n = 1\text{e}5$, $d = 20$). Distanced-based parameters: $(r, q, k_0) = (1 / 2, 1.5, 5)$ and $\gamma_0 = 4 / R^2$. The losses on the left plot are averaged over $10$ replications.}
\label{app_fig:logistic_regression}
\end{figure}
\paragraph{Least-squares regression.} We consider the objective $f(\theta) = \frac{1}{2 } \E{ (y - \ps{x}{\theta})^2}$. The inputs $x_i$ are i.i.d.~from $\mathcal{N}(0, H)$ where $H$ has random eigenvectors and eigenvalues $(1 / k)_{1\leq k \leq d}$. We note $R^2 = \tr{H}$. The outputs $y_i$ are generated following 
the generative model 
$y_i = \ps{x_i}{\theta^*} + \varepsilon_i$ 
where $(\varepsilon_i)_{1 \leq i \leq n}$ are i.i.d.~from $\mathcal{N}(0, \sigma^2)$. We test the distance-based strategy with different values of the threshold $thresh \in \{0.4, 0.6, 1\} $ and of the decrease factor $r \in \{1/2, 1/4, 1/8\} $. We use averaged-SGD with constant step size $\gamma=1/2R^2$  as a baseline since it enjoys the optimal statistical rate $O(\sigma^2 d / n)$~\citep{bach2013non}, we also plot SGD with step size $\gamma_n = 1 / \mu n$ which achieves a rate of $1 / \mu n$. 

We observe in \cref{app_fig:least_squares} that the distance-based strategy achieves similar performances as $1 / \mu n$ step sizes without knowing $\mu$. Furthermore the performance does not heavily depend on the values of $r$ and $thresh$ used. In the middle plot of \cref{app_fig:least_squares} notice how the distance-based step-sizes mimic the $1 / \mu n$ sequence. We point out that the performance of constant-step-size averaged SGD and $1/\mu n$-step-size  SGD are comparable since the problem is fairly well conditioned ($\mu=1/20$).

\paragraph{Logistic regression.} We consider the objective $f(\theta) = \E{\log ( 1 + e^{- y \ps{x}{\theta}})}$. The inputs $x_i$ are generated the same way as in the least-square setting. The outputs $y_i$ are generated following the logistic probabilistic model $y_i \sim \mathcal{B}((1 + \exp (- \ps{x_i}{\theta^*})^{-1})$. 
We use averaged-SGD with step-sizes $\gamma_n=1/ \sqrt{n}$  as a baseline since it enjoys the optimal rate $O(1/ n)$~\cite{bach2014adaptivity}. We also compare to online-Newton~\cite{bach2013non} which achieves better performance in practice and to averaged-SGD with step-sizes $\gamma_n=C/ \sqrt{n}$ where parameter $C$ is tuned in order to achieve best performance. 

In \cref{app_fig:logistic_regression} notice how averaged-SGD with the theoretical step size $\gamma_n=1/ \sqrt{n}$ performs poorly. However once the parameter $C$ in $\gamma_n=C/ \sqrt{n}$ is tuned properly averaged-SGD and online Newton perform similarly. Note that our distance-based strategy with $r = 1 / 2$ achieves similar performances which do not heavily depend on the value of the threshold $thresh$.

\paragraph{SVM.} We consider the objective $f(\theta) = \E {\max (0, 1 - y \ps{x}{\theta})} + \frac{\lambda}{2} \norm{\theta}^2$ where $\lambda > 0$. Note that $f$ is strongly-convex with parameter $\lambda$ and non-smooth. The inputs $x_i$ are generated i.i.d.~from $\mathcal{N}(0, \eta^2 I_d)$.  The outputs $y_i$ are generated as $y_i = \text{sgn} (x_i(1) + z_i)$ where $z_i \sim \mathcal{N}(0, \sigma^2)$. We generate $n = 1\text{e}5$ points in dimension $d = 20$.  We compare our distance-based strategy with different values of the threshold $thresh \in \{0.6, 0.8, 1\} $ to averaged-SGD with step sizes $\gamma_n = 1 / \mu n$ which achieves the rate of $\log n / \mu n$  \cite{lacoste2012simpler} and averaged-SGD with step sizes $\gamma_n = C / \sqrt{n}$ where $C$ is tuned in order to achieve best performance. 

In \cref{app_fig:SVM} note that averaged-SGD with $\gamma_n = 1 / \mu n$ exhibits a $O(1 / n)$ rate but the initial values are bad. On the other hand, once properly tuned, averaged SGD with $\gamma_n = C / \sqrt{n}$ performs very well, similarly as in the smooth setting. Note that our distance-based strategy with $r = 1 / 2$ achieves similar performances which do not depend on the value of the threshold $thresh$. 

\begin{figure}[t]
\begin{center}
\includegraphics[trim = {5cm 0cm 5cm 0cm}, clip, width=\linewidth]{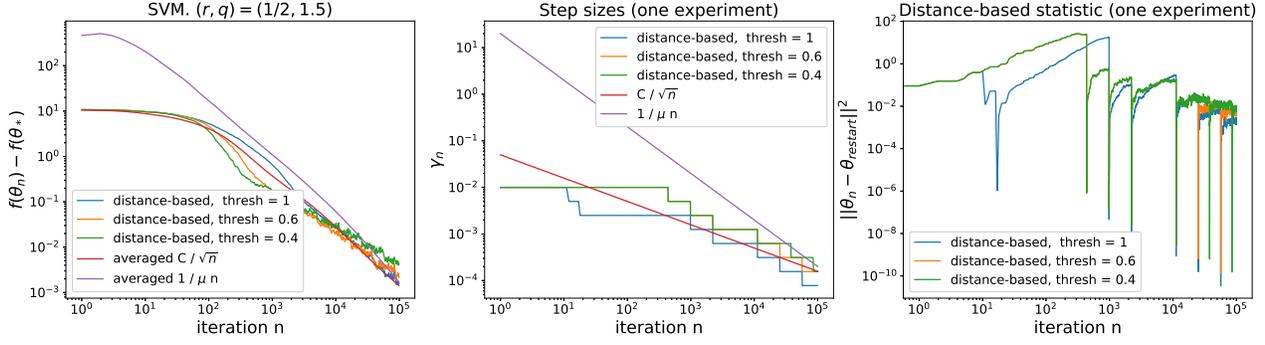}
\end{center}
\caption{SVM on synthetic data ($n = 1\text{e}5$, $d = 20$, $\lambda = 0.1$, $\eta^2 = 25$ and $\sigma = 1$). Distanced-based parameters: $(r, q, k_0) = (1 / 2, 1.5, 5)$ and $\gamma_0 = 4 / R^2$. The losses on the left plot are averaged over $10$ replications.}
\label{app_fig:SVM}
\end{figure}
\begin{figure}[t]
\begin{center}
\includegraphics[trim = {5cm 0cm 5cm 0cm}, clip, width=\linewidth]{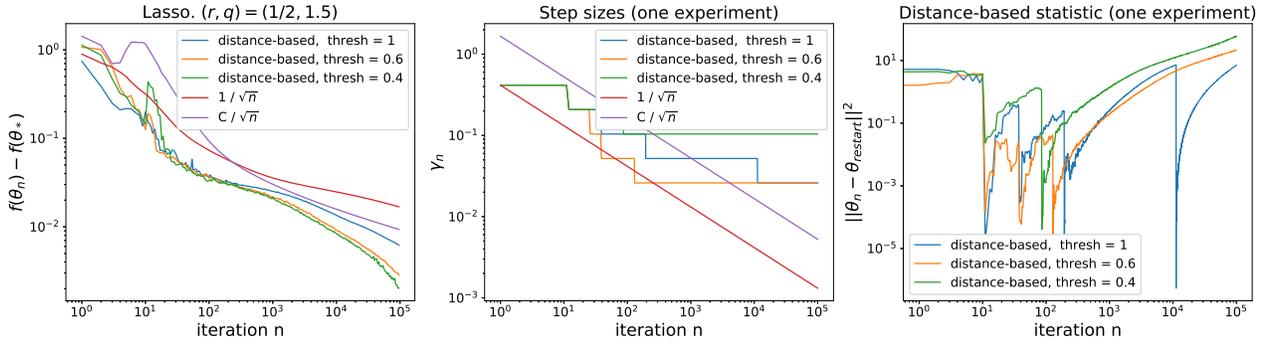}
\end{center}
\caption{Lasso regression on synthetic data ($\text{number of iterations} = 1\text{e}5$, $n = 80$, $d = 100$, $s = 60$, $\sigma = 0.1$, $\lambda = 10^{-4}$). Initial step-sizes of $1 / 2 R^2$ (except for the tuned $C / \sqrt{n}$).  Distanced-based parameters: $(r, q, k_0) = (1 / 2, 1.5, 5)$. The losses on the left plot are averaged over $10$ replications.}
\label{app_fig:lasso}
\end{figure}
\begin{figure}[t]
\begin{center}
\includegraphics[trim = {5cm 0cm 5cm 0cm}, clip, width=\linewidth]{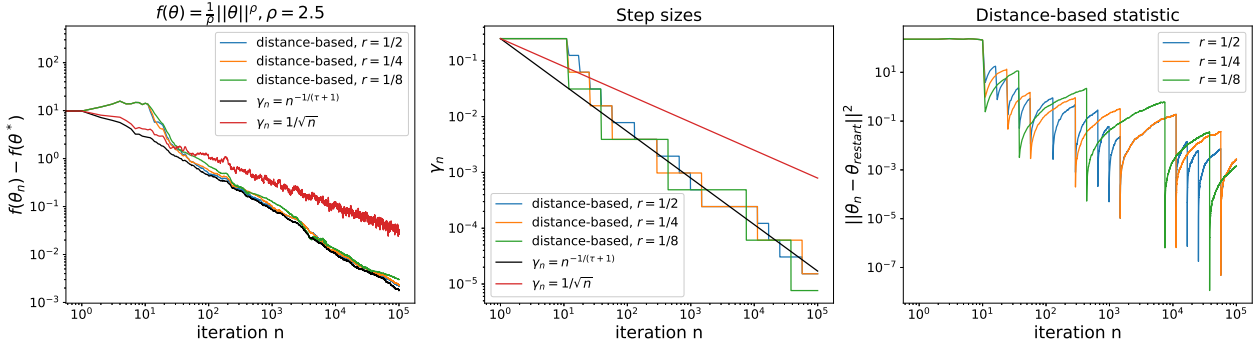}
\end{center}
\caption{Uniformly convex function $f(\theta) = \frac{1}{\rho} \norm{\theta}_2^{\rho}$ ($n = 1\text{e}5$, $d = 200$, $\rho = 2.5$). Initial step size of $\gamma_0 = 1 / 4 L$ for all step-size sequences. Distance-based parameters $(thresh, q, k_0) = (1, 1.5, 5)$.  The losses on the left plot correspond to only one replication. } 
\label{app_fig:uniformly_convex}
\end{figure}

\paragraph{Lasso Regression.} We consider the objective $f(\theta) = \frac{1}{n} \sum_{i = 1}^{n} (y_i - \ps{x_i}{\theta})^2 + \lambda \norm{\theta}_1$. The inputs $x_i$ are i.i.d.~from $\mathcal{N}(0, H)$ where $H$ has random eigenvectors and eigenvalues $(1 / k^3)_{1\leq k \leq d}$. We choose $n = 80$, $d = 100$. We note $R^2 = \tr{H}$. The outputs $y_i$ are generated following 
$y_i = \ps{x_i}{\tilde{\theta}} + \varepsilon_i$ 
where $(\varepsilon_i)_{1 \leq i \leq n}$ are i.i.d.~from $\mathcal{N}(0, \sigma^2)$ and $\tilde{\theta}$ is an $s$-sparse vector. Note that $f$ is non-smooth and the smallest eigenvalue of $H$ is $1 / 10^6$, hence for the number of iterations we run SGD $f$ cannot be considered as strongly convex.  We compare the distance-based strategy with different values of the threshold $thresh \in \{0.4, 0.6, 1\} $ to SGD with step-size sequence $\gamma_n = 1 / \sqrt{n}$ which achieves a rate of $\log n / \sqrt{n}$ \cite{shamir2013stochastic} and to  step-size sequence $\gamma_n = C / \sqrt{n}$ where $C$ is tuned to achieve best performance. Let us point out that the purpose of this experiment is to investigate the performance of the distance-based statistic on non-smooth problems and therefore we use as baseline generic algorithms for non-smooth optimization -- even though, in the special case of the Lasso regression, there exists first-order proximal algorithms which are able to leverage the special structure of the problem and obtain the same performance as for smooth optimization~\citep{BecTeb09}.

In \cref{app_fig:lasso} note that SGD with the theoretical step-size sequence $\gamma_n=1/ \sqrt{n}$ performs poorly. Tuning the parameter $C$ in $\gamma_n=C/ \sqrt{n}$ improves the performance. However our distance-based strategy with $r = 1 / 2$ performs better for several different values of $thresh$.

\paragraph{Uniformly convex $f$.} We consider the objective $f(\theta) = \frac{1}{\rho} \norm{\theta}_2^{\rho}$ where $\rho = 2.5$. Notice that $f$ is not strongly convex but is uniformly convex with parameter $\rho$ (see \cref{as:uc}). We generate the noise on the gradients $\xi_i$ as i.i.d from $\mathcal{N}(0, I_d)$.  We compare the distance-based strategy with different values of the decrease factor $r \in \{1/2, 1/4, 1/8\} $ to SGD with step-size sequence $\gamma_n = 1 / \sqrt{n}$ which achieves a rate of $\log n / \sqrt{n}$ \cite{shamir2013stochastic} and to SGD with step size $\gamma_n = n^{- 1 / (\tau + 1)}$ ($\tau = 1 - 2 / \rho$) which we expect to achieve a rate of $O(n^{- 1 / (\tau + 1)} \log n) $ (see remark after \cref{corr:appendix_uc_rate}). 
Notice in \cref{app_fig:uniformly_convex} how the distance-based strategy achieves the same rate as SGD with step-sizes $\gamma_n = n^{- 1 / (\tau + 1)}$ without knowing parameter $\tau$. Furthermore the performance does not depend on the value of $r$ used. In the middle plot of \cref{app_fig:least_squares} notice how the distance-based step sizes mimic the $n^{- 1 / (\tau + 1)}$ sequence.

Therefore the distance-based diagnostic works in a variety of settings where it automatically adapts to the problem difficulty without having to know the specific parameters (such as strong-convexity or uniform-convexity parameters).

\section{Performance of the oracle diagnostic}\label{secapp:oracle}

In this section, we prove the performance of the oracle diagnostic in the strongly-convex setting and consider its extension to the uniformly-convex setting.

\subsection{Proof of \cref{eq:oracle_rate}} \label{appsec:proofoforacle_rate}

We first introduce some notations which are useful in the following analysis.
\paragraph{Notation.} For $k \geq 1$, let $n_{k + 1}$ be the number of iterations until the $(k+1)^{th}$ restart and $\Delta n_{k+1}$ be the number of iterations between the restart $k$ and restart $(k+1)$ during which step size $\gamma_{k}$ is used. Therefore we have that $n_k = \sum_{k' = 1}^{k} \Delta n_{k'}$. We also denote by $\delta_n = \E{\norm{\theta_n - \theta^*}^2}$.

Notice that for  $n \geq 1$ and $| x | \leq n$ it holds that  $(1 - x)^n \leq \exp (- n x)$. Hence \cref{eq:needell} leads to:
\begin{align}
\label{eq:needell_ineq}
\E{\norm{\theta_{n} - \theta^*}^2} &\leq (1 - \gamma \mu )^{n } \delta_0 + \frac{2 \sigma^2}{\mu} \gamma \\
&\leq \exp (- n \gamma \mu ) \delta_0 + \frac{2 \sigma^2}{\mu} \gamma. \label{eq:needell_with_exponential}
\end{align}
In order to simplify the computations, we analyse \cref{alg:or} with the bias-variance trade-off stated in~\cref{eq:needell_with_exponential} instead of the one of~\cref{eq:needell_ineq}. Note however that it does not change the result. We prove separately the results obtained before and after the first restart $\Delta n_1$.

\paragraph{Before the first restart.} Let $\theta_0 \in \R^d$. For $n \leq \Delta n_1 = n_1$ (first restart time) we have that:
\begin{align}
\label{eq:beginning}
\E{\norm{\theta_{n} - \theta^*}^2} &\leq \exp (- n \gamma_0 \mu ) \delta_0 + \frac{2 \sigma^2}{\mu} \gamma_0.
\end{align}
Following the oracle strategy, the restart time $\Delta n_1$ corresponds to $\exp (- \Delta n_1 \gamma_0 \mu ) \delta_0 = \frac{2 \sigma^2}{\mu} \gamma_0$. Hence $\Delta n_1=\frac{1}{\gamma_0\mu} \ln \left ( \frac{\mu \delta_0}{2\gamma_0 \sigma^2} \right )$ and $\delta_{n_1} \leq \exp (- \Delta n_1 \gamma_0 \mu ) \delta_0 + \frac{2 \sigma^2}{\mu} \gamma_0 = \frac{4 \sigma^2}{\mu} \gamma_0$.
\paragraph{After the first restart.} Let $k \geq 1$ and $n_k \leq n \leq n_{k+1}$. We obtain from \cref{eq:needell_with_exponential}:
$$\begin{aligned}
\E{\norm{\theta_{n} - \theta^*}^2} &\leq \exp ( - (n - n_k) \gamma_k \mu ) \E{\norm{\theta_{n_k} - \theta^*}^2} + \frac{2 \sigma^2}{\mu} \gamma_k.
\end{aligned}$$
The oracle construction of the restart time leads to:
$$\exp ( - \Delta n_{k+1} \gamma_k \mu) \delta_{n_k} = \frac{2 \sigma^2}{\mu} \gamma_k.$$
Which yields
$$\begin{aligned}
\Delta n_{k+1} &= \frac{1}{ \gamma_k \mu } \ln{\frac{\mu \delta_{n_k}}{2 \sigma^2 \gamma_k}}. 
\end{aligned}$$

However we know by construction that for $k \geq 1$, $\delta_{n_k} \leq \exp ( - \Delta n_{k} \gamma_{k - 1} \mu ) \delta_{n_{k - 1}} + \frac{2 \sigma^2}{\mu} \gamma_{k -1} = \frac{4 \sigma^2}{\mu} \gamma_{k - 1}$. Hence: 
$$\begin{aligned}
\Delta n_{k+1} &\leq \frac{1}{\gamma_k \mu} \ln 2 \frac{\gamma_{k-1}}{\gamma_k}.
\end{aligned}$$

Considering that $\gamma_k = r^k \gamma_0$,
$$\begin{aligned}
\Delta n_{k+1} &\leq \frac{1}{r^k \gamma_0 \mu} \ln \frac{2}{r}.
\end{aligned}$$

Since $n_k = \Delta n_{1} + \sum_{k' = 2}^{k} \Delta n_{k'}$ we have that 

$$\begin{aligned}
n_k - \Delta n_{1} = \sum_{k' = 2}^{k} \Delta n_{k'} &\leq   \frac{1}{ \mu \gamma_0 } \ln \left ( \frac{2}{r} \right ) \sum_{k' = 2}^{k} \frac{1}{r^{k' - 1}} \\
&\leq   \frac{1}{ \mu \gamma_0 } \ln \left ( \frac{2}{r} \right ) \sum_{k' = 1}^{k} \frac{1}{r^{k' - 1}} \\
&\leq \frac{1}{\mu \gamma_0 (1 - r)} \ln \left ( \frac{2}{r} \right ) \frac{1}{r^{k-1}} \\
&= \frac{1}{\mu (1 - r) \gamma_{k - 1}} \ln \left ( \frac{2}{r} \right ).
\end{aligned}$$

Therefore since
$\delta_{n_k} \leq \frac{4 \sigma^2}{\mu} \gamma_{k - 1}$ we get:

\begin{equation}
\label{eq:ub_restart_times}
\delta_{n_k} \leq \frac{4\sigma^2}{(n_k - \Delta n_1) \mu^2 (1 - r)} \ln{\Big( \frac{2}{ r }\Big)}.
\end{equation}

We now want a result for any $n$ and not only for restart times. For $n \leq n_1 = \Delta n_1$ we are done using \cref{eq:beginning}. For $k \geq 1$, let $n_{k} \leq n \leq n_{k + 1}$, from \cref{eq:needell} and \cref{eq:ub_restart_times} we have that:
\begin{align*}
\delta_{n}
&\leq \exp ( - (n - n_k) \gamma_k \mu ) \delta_{n_{k}} + \frac{2 \gamma_k \sigma^2}{\mu} \\
&\leq \exp ( - (n - n_k) \gamma_k \mu ) \frac{A}{n_{k} - \Delta n_{1}} + \frac{2 \gamma_k \sigma^2}{\mu},
\end{align*}
where $A = \frac{4\sigma^2}{\mu^2 (1 - r)} \ln{\Big( \frac{2}{ r }\Big)}$. Let $g(n) = \exp ( - (n - n_k) \gamma_k \mu ) \frac{A}{n_{k} - \Delta n_{1}} + \frac{2 \gamma_k \sigma^2}{\mu}$ and $h(n) = \frac{A}{n - \Delta n_{1}} + \frac{2 \gamma_k \sigma^2}{\mu}$ for $n > \Delta n_1$. Note that $g$ is exponential, $h$ is an inverse function and that  $g(n_k) = h(n_k)$. This implies that that for $n \geq n_k$, $g(n) \leq h(n)$. Hence for $ n \geq n_{k}$:
\begin{align*}
\delta_{n}
&\leq \frac{A}{n - \Delta n_{1}} + \frac{2 \gamma_k \sigma^2}{\mu} \\
&\leq \frac{A}{n - \Delta n_{1}} + \frac{4 \gamma_k \sigma^2}{\mu}.
\end{align*}
By construction, $\frac{4 \sigma^2}{\mu} \gamma_{k} \leq \frac{A}{n_{k + 1} - \Delta n_{1}}$. However since 
$ \frac{A}{n_{k + 1} - \Delta n_{1}} \leq \frac{A}{n - \Delta n_{1}}$ for $n \leq n_{k+1}$ we get that $\frac{4 \sigma^2}{\mu} \gamma_{k} \leq \frac{A}{n - \Delta n_{1}}$ for $n \leq n_{k+1}$.
Hence for $ n_{k} \leq n \leq n_{k + 1}$ and therefore for all $n > \Delta n_1$:
\begin{align*}
\delta_{n}
&\leq \frac{2 A}{n - \Delta n_{1}} \\
&\leq \frac{8 \sigma^2}{\mu^2 (n - \Delta n_{1}) (1 - r)} \ln{\Big( \frac{2}{ r }\Big)}.
\end{align*}
This concludes the proof. Note that this upper bound diverges for $r \to 0 \ \text{or} \ 1$ and could be minimized over the value of $r$.

\subsection{Uniformly convex setting}\label{appsec:unif_convex}

The previous result holds for smooth strongly-convex functions. Here we extend this result to a more generic setting where $f$ is not supposed strongly convex but uniformly convex.

\begin{assumption}[Uniform convexity] 
\label{as:uc}
There exists finite constants  $\mu > 0, \rho > 2$ such that for all $\theta, \eta \in \mathbb R ^d$ and any subgradient $f'(\eta)$ of $f$ at $\eta$:
\[
f(\theta) \geq f(\eta) + \ps{f'(\eta)}{\theta - \eta} + \frac{\mu}{\rho} \norm{\theta - \eta}^\rho.
\]
\end{assumption}

This assumption implies the convexity of the function $f$ and the definition of strong convexity is recovered for $\rho \to 2$. It also recovers the definition of weak-convexity around $\theta^*$ when $\rho \to + \infty$ since $\textstyle \lim_{\rho \to + \infty}\frac{\mu}{\rho} \norm{\theta - \theta^*}^\rho =0$ for $\norm{\theta - \theta^*} \leq 1$.

To simplify our presentation and as is often done in the literature we restrict the analysis to the constrained  optimization problem:
\[
\min_{\theta \in \mathcal W} f(\theta),
\]
where $\mathcal W$ is a compact convex set and we assume $f$ attains its minimum on $\mathcal W$ at a certain  $\theta_*\in\mathbb{R}^d$. We consider the projected SGD recursion:
\begin{equation}{\label{eq:projected_sgd}}\theta_{i+1} = \Pi_{\mathcal{W}} \left [ \theta_i - \gamma_{i+1} f'_{i+1}(\theta_i) \right ].
\end{equation}
 We also make the following assumption (which does not contradict \cref{as:uc} in the constrained setting).
\begin{assumption}[Bounded gradients] 
\label{as:bounded_gradients}
There exists a finite constant $G > 0$ such that 
$$\E{\norm{f'_{i}(\theta)}^2} \leq G^2 $$
for all $i \geq 0$ and $\theta \in \mathcal{W}$. 
\end{assumption}

In order to obtain a result similar to \cref{eq:oracle_rate} but for uniformly convex functions, we first need to analyse the behaviour of constant step-size SGD in this new framework and obtain a classical bias-variance trade off similar to \cref{eq:needell}. 

\subsubsection{Constant step-size SGD for uniformly convex functions}

The following proposition exhibits the bias-variance trade off obtained for the function values when constant step-size SGD is used on uniformly convex functions.

\begin{proposition} 
\label{prop:uc_bias_variance_tradeoff}
 Consider the recursion in \cref{eq:projected_sgd} under \cref{as:unbiased_gradients,as:bounded_gradients,as:uc}. Let $\tau = 1 - \frac{2}{\rho} \in (0, 1)$,  $q = (\frac{1}{\tau} - 1)^{-1}$, $\tilde{\mu} = 4 \frac{\mu}{\rho}$ and $\delta_0= \E{\Vert \theta_0-\theta^*\Vert^2}$. Then for any step-size $\gamma > 0$ and time $n \geq 0$ we have:
\begin{equation} \nonumber
\E{f(\theta_n)} - f(\theta^*) 
\leq \frac{\delta_0}{ \gamma n \left ( 1 + n q \gamma \tilde{\mu} \delta_0^{q} \right )^{\frac{1}{q}} } + \gamma G^2 ( 1 + \log n) .
\end{equation}
\end{proposition}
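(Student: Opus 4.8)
\textbf{Proof plan for Proposition~\ref{prop:uc_bias_variance_tradeoff}.}

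The plan is to run the standard one-step SGD inequality but, crucially, to keep track of the uniform-convexity term rather than throwing it away, which yields a recursion on $\delta_n := \E{\norm{\theta_n - \theta^*}^2}$ that is sub-geometric rather than geometric. First I would use non-expansiveness of the projection $\Pi_{\mathcal W}$ and expand $\norm{\theta_{i+1}-\theta^*}^2 \leq \norm{\theta_i - \theta^*}^2 - 2\gamma\ps{f'_{i+1}(\theta_i)}{\theta_i - \theta^*} + \gamma^2\norm{f'_{i+1}(\theta_i)}^2$. Taking conditional expectations and using \cref{as:unbiased_gradients} and \cref{as:bounded_gradients} gives
\[
\delta_{i+1} \leq \delta_i - 2\gamma\,\E{f(\theta_i) - f(\theta^*)} - \frac{2\gamma\mu}{\rho}\E{\norm{\theta_i - \theta^*}^\rho} + \gamma^2 G^2,
\]
where I have applied \cref{as:uc} at $\eta = \theta_i$, $\theta = \theta^*$ to lower bound $\ps{f'(\theta_i)}{\theta_i - \theta^*} \geq f(\theta_i) - f(\theta^*) + \frac{\mu}{\rho}\norm{\theta_i-\theta^*}^\rho$. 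By Jensen's inequality, $\E{\norm{\theta_i-\theta^*}^\rho} \geq \delta_i^{\rho/2}$, so with $\tau = 1 - 2/\rho$ (hence $\rho/2 = 1/(1-\tau) = 1+q$) and $\tilde\mu = 4\mu/\rho$ this becomes $\delta_{i+1} \leq \delta_i - 2\gamma\,\E{f(\theta_i)-f(\theta^*)} - \frac{\tilde\mu\gamma}{2}\delta_i^{1+q} + \gamma^2 G^2$. Dropping the (nonnegative) function-gap term and the $\gamma^2 G^2$ term would give a clean sub-geometric recursion $\delta_{i+1} \leq \delta_i - \frac{\tilde\mu\gamma}{2}\delta_i^{1+q}$, but I expect one actually needs to carry the $\gamma^2G^2$ along or absorb it; a cleaner route is to bound the averaged function gap.

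The key technical step — and the main obstacle — is solving the sub-geometric recursion to obtain the $\left(1 + nq\gamma\tilde\mu\delta_0^q\right)^{-1/q}$ decay. The standard trick for $u_{i+1} \leq u_i - c\,u_i^{1+q}$ is to look at $u_i^{-q}$: since $x \mapsto x^{-q}$ is convex, $u_{i+1}^{-q} \geq u_i^{-q} + q\,c\,u_i^{1+q}\cdot u_i^{-q-1} = u_i^{-q} + qc$ (after checking $u_{i+1}\leq u_i$ so the mean-value estimate goes the right way), giving $u_n^{-q} \geq u_0^{-q} + nqc$, i.e. $\delta_n \leq \delta_0(1 + nqc\,\delta_0^q)^{-1/q}$ with $c = \tilde\mu\gamma/2$ (up to constants that I expect to match the stated $q\gamma\tilde\mu\delta_0^q$ after being careful about the factor from Jensen). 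I would need to handle the $\gamma^2 G^2$ perturbation — either by noting it is lower order on the relevant time scale, or by a comparison argument showing $\delta_n$ stays below the solution of the unperturbed recursion plus an $O(\gamma)$ term.

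Finally, to pass from the distance bound to the function-value bound, I would return to the one-step inequality \emph{keeping} the function gap: summing $2\gamma\sum_{i=0}^{n-1}\E{f(\theta_i)-f(\theta^*)} \leq \delta_0 - \delta_n + n\gamma^2 G^2 \leq \delta_0 + n\gamma^2 G^2$ and dividing by $2\gamma n$ gives an $O(\delta_0/(\gamma n) + \gamma G^2)$ bound on the \emph{averaged} iterate — but the stated bound has $\delta_0/(\gamma n(1+nq\gamma\tilde\mu\delta_0^q)^{1/q})$ in the bias term, which is sharper. So instead I would weight the sum: multiply the one-step inequality at step $i$ and telescope with the improved bound $\delta_i \leq \delta_0(1+iq\gamma\tilde\mu\delta_0^q)^{-1/q}$ substituted back, or more simply run the recursion starting the function-gap accounting from a later time so the bias has already decayed. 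The $\gamma G^2(1+\log n)$ term signals that a harmonic-sum estimate $\sum_{i=1}^n 1/i \leq 1 + \log n$ appears, which is consistent with bounding $\frac{1}{n}\sum_i \gamma G^2$-type contributions where the variance accumulates logarithmically after reweighting; I would track exactly where the $\log$ enters and confirm it comes from this harmonic sum. The bias/variance split in the final display then reads off directly.
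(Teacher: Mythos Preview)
Your skeleton is right in spirit --- the one-step expansion, Jensen to pass from $\E{\norm{\theta_i-\theta^*}^\rho}$ to $\delta_i^{1+q}$, and the $u^{-q}$ telescoping trick for $u_{i+1}\le u_i - c\,u_i^{1+q}$ are exactly what the paper does. Two pieces, however, are not yet pinned down and each requires a specific idea.

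First, the $\gamma^2 G^2$ perturbation cannot simply be called ``lower order'' or absorbed: on the timescale $n\sim 1/\gamma$ it contributes $O(\gamma)$, the same order as the eventual variance term. The paper handles it by a clean comparison: with $\tilde g(x)=x-\gamma\tilde\mu x^{\rho/2}$, one shows by induction that $\delta_n \le \tilde g^{\,n}(\delta_0) + n\gamma^2 G^2$, using that $\tilde g$ is increasing on the relevant range and that $\tilde g(a+b)\le \tilde g(a)+b$ for $b\ge 0$. The pure iterates $\tilde\delta_n = \tilde g^{\,n}(\delta_0)$ then satisfy your $u^{-q}$ recursion exactly, giving $\tilde\delta_n \le \delta_0(1+nq\gamma\tilde\mu\delta_0^q)^{-1/q}$, while the $n\gamma^2 G^2$ tail is carried separately.

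Second, and more importantly, your passage from the distance bound to a \emph{last-iterate} function bound is not complete. Summing the one-step inequality over a suffix $\{n-k,\dots,n\}$ controls the suffix \emph{average} $\frac{1}{k+1}\sum_{i=n-k}^n \E{f(\theta_i)-f(\theta^*)} \le \frac{1}{2\gamma k}\delta_{n-k}+\frac{\gamma}{2}G^2$; taking $k=n/2$ and plugging in the distance bound at $n/2$ gives the sharp bias term. But you still need to go from this suffix average to $\E{f(\theta_n)}$. This is where the $\log n$ genuinely enters, via the Shamir--Zhang suffix-averaging lemma: setting $S_k=\frac{1}{k+1}\sum_{i=n-k}^n f(\theta_i)$ and choosing $\theta=\theta_{n-k}$ in the suffix inequality yields $k\E{S_{k-1}}\le k\E{S_k}+\frac{\gamma}{2}G^2$, and telescoping from $k=1$ to $p$ gives $\E{f(\theta_n)}=\E{S_0}\le \E{S_p}+\frac{\gamma}{2}G^2\sum_{k=1}^p \frac{1}{k}$. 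The harmonic sum you anticipated is precisely this one, but it comes from this specific telescoping, not from a generic reweighting of the full sum.
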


Note that the bias term decreases at a rate $n^{- 1 / \tau}$ which is an interpolation of the rate obtained when $f$ is strongly convex ($\tau \to 0$, exponential decrease of bias) and when $f$ is simply convex ($\tau = 1$, bias decrease rate of $n^{-1}$). This bias-variance trade off directly implies the following rate in the finite horizon setting.

\begin{corollary}
\label{corr:appendix_uc_rate}
Consider the recursion in \cref{eq:projected_sgd} under \cref{as:unbiased_gradients,as:bounded_gradients,as:uc}. Then for a finite time horizon $N\geq0$ and constant step size $\gamma = N^{- \frac{1}{\tau + 1}}$  we have:
\begin{equation*}
\E{f(\theta_N)} - f(\theta^*)= O \left ( N^{-\frac{1}{1 + \tau}}  \log N  \right ) .
\end{equation*}
\end{corollary}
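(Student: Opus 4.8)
The plan is to simply substitute the choice $\gamma = N^{-1/(\tau+1)}$ into the bias-variance bound of Proposition~\ref{prop:uc_bias_variance_tradeoff} and check that both terms are $O(N^{-1/(1+\tau)}\log N)$. First I would handle the variance term: with $\gamma = N^{-1/(\tau+1)}$ we get $\gamma G^2(1+\log N) = G^2 N^{-1/(1+\tau)}(1+\log N) = O(N^{-1/(1+\tau)}\log N)$, which is already of the claimed order. So the whole content is in bounding the bias term $\frac{\delta_0}{\gamma n(1+nq\gamma\tilde\mu\delta_0^q)^{1/q}}$ evaluated at $n=N$.

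For the bias term, the key observation is that $(1+nq\gamma\tilde\mu\delta_0^q)^{1/q} \geq 1$, so a first crude bound gives $\frac{\delta_0}{\gamma N} = \delta_0 N^{-1}N^{1/(\tau+1)} = \delta_0 N^{-\tau/(\tau+1)}$. This exponent $\tau/(\tau+1)$ is \emph{larger} than $1/(1+\tau)$ when $\tau>1$, but here $\tau\in(0,1)$, so in fact $\tau/(1+\tau) < 1/(1+\tau)$ and this bound is \emph{not} good enough on its own — we genuinely need to exploit the extra factor $(1+nq\gamma\tilde\mu\delta_0^q)^{1/q}$ in the denominator. The point is that for $n=N$ and $\gamma=N^{-1/(\tau+1)}$ we have $N\gamma = N^{\tau/(\tau+1)}\to\infty$, so $1+Nq\gamma\tilde\mu\delta_0^q = \Theta(N^{\tau/(\tau+1)})$ (treating $\delta_0,q,\tilde\mu$ as constants depending only on the problem), hence $(1+Nq\gamma\tilde\mu\delta_0^q)^{1/q} = \Theta(N^{\tau/(q(\tau+1))})$. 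Therefore the bias term is
\[
\frac{\delta_0}{\gamma N(1+Nq\gamma\tilde\mu\delta_0^q)^{1/q}} = O\!\left(N^{-\tau/(\tau+1)}\cdot N^{-\tau/(q(\tau+1))}\right) = O\!\left(N^{-\frac{\tau}{\tau+1}\left(1+\frac1q\right)}\right).
\]
Now I would plug in $q=(\frac1\tau-1)^{-1}=\frac{\tau}{1-\tau}$, so $1+\frac1q = 1+\frac{1-\tau}{\tau} = \frac1\tau$, and the exponent becomes $\frac{\tau}{\tau+1}\cdot\frac1\tau = \frac{1}{\tau+1}$. Thus the bias term is $O(N^{-1/(1+\tau)})$, which is even slightly better than the claimed rate (no $\log N$ needed for the bias). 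Combining the two terms gives $\E{f(\theta_N)}-f(\theta^*) = O(N^{-1/(1+\tau)}) + O(N^{-1/(1+\tau)}\log N) = O(N^{-1/(1+\tau)}\log N)$, as claimed.

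The main (and only) real obstacle is the algebraic bookkeeping in the bias term: one must not discard the $(1+nq\gamma\tilde\mu\delta_0^q)^{1/q}$ factor, and one must carefully track that the identity $1+\frac1q=\frac1\tau$ is exactly what makes the two powers of $N$ combine to the exponent $\frac{1}{\tau+1}$. Everything else is a one-line substitution. A minor point worth stating explicitly is that all of $\delta_0$, $\tilde\mu=4\mu/\rho$, $q=\tau/(1-\tau)$ and $G$ depend only on the fixed problem data $(\mu,\rho,G)$ and the initialization, so they are absorbed into the $O(\cdot)$; one should also note $\tau\in(0,1)$ guarantees $q>0$ and that $\gamma=N^{-1/(\tau+1)}\in(0,1]$ so the hypothesis $\gamma>0$ of Proposition~\ref{prop:uc_bias_variance_tradeoff} is met and $(1+\log N)\geq 1$ for $N\geq1$.
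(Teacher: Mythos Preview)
Your proposal is correct and follows essentially the same approach as the paper: substitute $\gamma=N^{-1/(\tau+1)}$ into the bound of Proposition~\ref{prop:uc_bias_variance_tradeoff} and verify that both terms are $O(N^{-1/(1+\tau)}\log N)$. The only cosmetic difference is that the paper invokes the already-simplified form of the bias term, $O\big((\gamma n)^{-1/\tau}\big)$, established at the end of the proof of the Proposition, whereas you work directly from the full expression $\frac{\delta_0}{\gamma n(1+nq\gamma\tilde\mu\delta_0^{q})^{1/q}}$ and redo the algebra with the identity $1+1/q=1/\tau$; both computations yield the same $O(N^{-1/(1+\tau)})$ for the bias.
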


\paragraph{Remarks.} When the total number of iterations $N$ is fixed, \citet{juditsky2014deterministic} find a similar result as \cref{corr:appendix_uc_rate} for minimizing uniformly convex functions.
However their algorithm uses averaging and multiple restarts. In the deterministic framework, using a weaker but similar assumption as uniform convexity, \citet{roulet2017sharpness} obtain a similar $O(N^{- \frac{1}{\tau}})$ convergence rate for gradient descent for smooth uniformly convex functions. This is coherent with the bias variance trade off we get and \cref{corr:appendix_uc_rate} extends their result to the stochastic framework. We also note that the result in \cref{corr:appendix_uc_rate} holds only in the fixed horizon framework, however we believe that this rate still holds when using a decreasing step size $\gamma_n = n^{- \frac{1}{\tau + 1}}$. The analysis is however much harder since it requires analysing the recursion stated in \cref{reccursion:uc} with a decreasing step-size sequence.

Hence \cref{corr:appendix_uc_rate} shows that an accelerated rate of $O \left ( \log(n) n^{- \frac{1}{1 + \tau}}  \right )$ is obtained with appropriate step sizes. However in practice the parameter $\rho$ is unknown and this step size sequence cannot be implemented. In \cref{appsec:unif_convex_restarts} we show that we can bypass $\rho$ by using the oracle restart strategy. In the following subsection \cref{appsec:unif_convex_proofs} we prove \cref{prop:uc_bias_variance_tradeoff} and \cref{corr:appendix_uc_rate}.

\subsubsection{Proof of \cref{prop:uc_bias_variance_tradeoff} and \cref{corr:appendix_uc_rate}} \label{appsec:unif_convex_proofs}

We start by stating the following lemma directly inspired by \citet{shamir2013stochastic}.

\begin{lemma}
\label{lemma:shamir_zhang}
Under \cref{as:unbiased_gradients,as:bounded_gradients}. Consider projected SGD in \cref{eq:projected_sgd} with constant step size $\gamma >0$. Let $1 \leq p \leq n$ and denote $S_p = \frac{1}{p+1} \sum_{i = n-p}^n f(\theta_i)$, then:
$$\E{f(\theta_n)} \leq \E{S_p} + \frac{\gamma}{2} G^2 (\log(p) + 1).$$
\end{lemma}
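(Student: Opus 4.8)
The plan is to adapt the classical suffix-averaging argument of \citet{shamir2013stochastic} to the constant step-size setting. First I would fix the window $p$ and, for each $k$ with $n-p \le k \le n$, write down the standard one-step inequality for projected SGD: using nonexpansiveness of the projection $\Pi_{\mathcal W}$ and the update rule~\sref{eq:projected_sgd}, expand $\norm{\theta_{k+1}-\theta}^2 \le \norm{\theta_k - \gamma f'_{k+1}(\theta_k) - \theta}^2$ for an arbitrary comparison point $\theta \in \mathcal W$, then take conditional expectation given $\mathcal F_k$ and use \cref{as:unbiased_gradients} to replace $f'_{k+1}(\theta_k)$ by $f'(\theta_k)$ in the cross term, and \cref{as:bounded_gradients} to bound the $\gamma^2\norm{f'_{k+1}(\theta_k)}^2$ term by $\gamma^2 G^2$. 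Convexity of $f$ then gives, after rearranging and taking full expectations,
\[
\E{f(\theta_k)} - \E{f(\theta)} \le \frac{1}{2\gamma}\left(\E{\norm{\theta_k-\theta}^2} - \E{\norm{\theta_{k+1}-\theta}^2}\right) + \frac{\gamma}{2} G^2 .
\]

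The heart of the argument is the telescoping-plus-comparison trick of \citet{shamir2013stochastic}: I would not apply the above with a single fixed $\theta$, but rather take $\theta = \theta_{k}$ or, more precisely, use the ``$S_p$ versus $\theta_n$'' comparison where one compares each $\theta_k$ in the second half of the window against the running average of the \emph{later} iterates. Concretely, one considers, for each suffix length $j$, the averaged quantity $S_j = \frac{1}{j+1}\sum_{i=n-j}^n f(\theta_i)$ and proves a recursive bound relating $\E{S_{j}}$ and $\E{S_{j-1}}$ by invoking the one-step inequality with comparison point taken among the iterates in the window; summing these telescoping relations over $j = 1,\dots,p$ and using $\sum_{j=1}^p 1/j \le \log p + 1$ produces the $\frac{\gamma}{2}G^2(\log(p)+1)$ term. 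Rearranging the final telescoped inequality isolates $\E{f(\theta_n)} \le \E{S_p} + \frac{\gamma}{2}G^2(\log(p)+1)$.

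The main obstacle I anticipate is reproducing the exact weighting scheme of \citet{shamir2013stochastic} in the constant step-size regime: in the original decreasing-step-size proof, the bias terms $\E{\norm{\theta_k - \theta}^2}/\gamma_k$ telescope cleanly because of how $\gamma_k$ varies, whereas here $\gamma$ is constant, so I must check that the $\frac{1}{2\gamma}(\cdots)$ differences still collapse when summed against the harmonic weights $1/j$, leaving no residual boundary term with the wrong sign. I would handle this by being careful that the comparison point in each application of the one-step bound is chosen so that the ``$-\E{\norm{\theta_{k+1}-\theta}^2}$'' from one term cancels the ``$+\E{\norm{\theta_k-\theta}^2}$'' from the next, which is exactly the structure that makes the suffix-averaging lemma go through; the boundedness of $\mathcal W$ and of the gradients via \cref{as:bounded_gradients} guarantees all quantities are finite so the rearrangements are legitimate. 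Once this lemma is in hand, \cref{prop:uc_bias_variance_tradeoff} will follow by combining it with the uniform-convexity inequality from \cref{as:uc} to control $\E{S_p}$ via a recursion on the distances $\delta_n$, and optimizing over $p$.
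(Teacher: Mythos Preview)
Your plan is correct and matches the paper's proof essentially step for step: the paper also derives the one-step inequality via nonexpansiveness of $\Pi_{\mathcal W}$, sums it over the window $\{n-k,\dots,n\}$ with comparison point $\theta = \theta_{n-k}$, combines this with the algebraic identity $k\,\E{S_{k-1}} = (k+1)\,\E{S_k} - \E{f(\theta_{n-k})}$ to obtain $\E{S_{k-1}} \le \E{S_k} + \frac{\gamma}{2k}G^2$, and then sums over $k=1,\dots,p$. Your worry about the constant step size is unfounded---in fact it makes the argument \emph{cleaner} than in \citet{shamir2013stochastic}, since the $\frac{1}{2\gamma}$ factor is the same in every term and the distance differences telescope without any leftover boundary contribution; there is no delicate weighting to check.
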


\begin{proof}
We follow the proof technique of \citet{shamir2013stochastic}. The goal is to link the value of the final iterate with the averaged last $p$ iterates. For any $\theta \in \mathcal{W}$ and $\gamma > 0$:

$$\theta_{i+1} - \theta = \Pi_{\mathcal{W}} \left [ \theta_i - \gamma f'_{i+1}(\theta_i) \right ]- \theta.$$
By convexity of $\mathcal{W}$ we have the following: 
\begin{align}
\norm{\theta_{i+1} - \theta}^2 
&\leq
\norm{ \theta_i - \gamma f'_{i+1}(\theta_i) - \theta  }^2 \nonumber \\
&=
\norm{\theta_{i} - \theta}^2 
-
2 \gamma \ps{f'_{i+1}(\theta_i)}{\theta_i - \theta} 
+ 
\gamma^2 \norm{f'_{i+1}(\theta_i)}^2.  \label{eq:1} 
\end{align}
Rearranging we get
\begin{align}  \label{eq:2} 
\ps{f'_{i+1}(\theta_i)}{\theta_i - \theta} 
\leq 
\frac{1}{2 \gamma} \left [ \norm{\theta_{i} - \theta}^2  
-  
\norm{\theta_{i+1} - \theta}^2 \right ] 
+
\frac{\gamma}{2} \norm{f'_{i+1}(\theta_i)}^2
.
\end{align}
Let $k$ be an integer smaller than $n$. Summing \cref{eq:2} from $i = n - k$ to $i = n$ we get
$$\sum_{i = n-k}^n \ps{f'_{i+1}(\theta_i)}{\theta_i - \theta} 
\leq 
\frac{1}{2 \gamma} \left [ \norm{\theta_{n-k} - \theta}^2  
-  
\norm{\theta_{n+1} - \theta}^2 \right ] 
+
\frac{\gamma}{2} \sum_{i = n-k}^n  \norm{f'_{i+1}(\theta_i)}^2 .
$$
Taking the expectation and using the bounded gradients hypothesis:
$$\begin{aligned}
\sum_{i = n-k}^n \E{\ps{f'(\theta_i)}{\theta_i - \theta}}
&\leq
\frac{1}{2 \gamma} \E{\norm{\theta_{n-k} - \theta}^2
-  
\norm{\theta_{n+1} - \theta}^2} 
+
\frac{\gamma}{2} \sum_{i = n-k}^n  \E{\norm{f'_{i+1}(\theta_i)}^2} \\
& \leq 
\frac{1}{2 \gamma} \E{\norm{\theta_{n-k} - \theta}^2
-  
\norm{\theta_{n+1} - \theta}^2} 
+
\frac{\gamma}{2} (k+1) G^2 .
\end{aligned}
$$
The function $f$ being convex we have that $f(\theta_i) - f(\theta) \leq \ps{f'(\theta_i)}{\theta_i - \theta}$.
Therefore:
$$\begin{aligned}
\frac{1}{k+1} \sum_{i = n-k}^n \E{f(\theta_i) - f(\theta)}
&\leq 
\frac{1}{2 \gamma (k + 1)} \E{\norm{\theta_{n-k} - \theta}^2
-  
\norm{\theta_{n+1} - \theta}^2} 
+
\frac{\gamma}{2} G^2 \\
&\leq 
\frac{1}{2 \gamma (k + 1)} \E{\norm{\theta_{n-k} - \theta}^2}
+
\frac{\gamma}{2} G^2 .
\end{aligned}
$$
Let $S_k = \frac{1}{k+1} \sum_{i = n-k}^n f(\theta_i)$. Rearranging the previous inequality we get
\begin{align}  \label{eq:3} 
\E{S_k} - f(\theta) 
&\leq   
\frac{1}{2 \gamma (k + 1)} \E{\norm{\theta_{n-k} - \theta}^2}
+
\frac{\gamma}{2} G^2  \nonumber \\
&\leq   
\frac{1}{2 \gamma k } \E{\norm{\theta_{n-k} - \theta}^2}
+
\frac{\gamma}{2} G^2 .
\end{align}
Plugging $\theta = \theta_{n-k}$ in \cref{eq:3} we get
$$\begin{aligned}
- \E{f(\theta_{n-k})}
&\leq 
- \E{S_k}
+
\frac{\gamma}{2} G^2 .
\end{aligned}
$$
However, notice that
$k \E{S_{k-1}} = (k+1) \E{S_k} - \E{f(\theta_{n-k})}$.
Therefore:
$$\begin{aligned}
k \E{S_{k-1}} &\leq (k+1) \E{S_k}
- \E{S_k}
+
\frac{\gamma}{2} G^2 \\
&= k \E{S_k} + 
\frac{\gamma}{2} G^2 .
\end{aligned}$$
Summing the inequality $\E{S_{k-1}} \leq \E{S_k} + 
\frac{\gamma}{2 k} G^2$ from $k=1$ to some $p \leq n$ we get $\E{S_0} \leq \E{S_p} + \frac{\gamma}{2} G^2 \sum_{k=1}^{p} \frac{1}{k}$. Since $S_0 = f(\theta_n)$ we have the 
following inequality that links the final iterate and the averaged last $p$ iterates:
\begin{align}\label{eq:4}
\E{f(\theta_n)} \leq \E{S_p} + \frac{\gamma}{2} G^2 (\log(p) + 1).
\end{align}
\end{proof}

The inequality~\eqref{eq:4} shows that upper bounding $\E{S_p}$ immediately gives us an upper bound on $\E{f(\theta_n)}$. This is useful because it is often simpler to upper bound the average of the function values $\E{S_p}$ than directly $\E{f(\theta_n)}$. Therefore to prove \cref{prop:uc_bias_variance_tradeoff} we now just have to suitably upper bound $\E{S_p}$.

\begin{proof}[Proof of \cref{prop:uc_bias_variance_tradeoff}]
The function $f$ is uniformly convex with parameters $\mu > 0$ and $\rho> 2$ which means that for all $\theta, \eta \in \mathcal{W}$ and any subgradient $f'(\eta)$ of $f$ at $\eta$ it holds that
$f(\theta) \geq f(\eta) + \ps{f'(\eta)}{\theta - \eta} + \frac{\mu}{\rho} \norm{\theta - \eta}^{\rho}$.   
Adding this inequality written in $(\theta, \eta)$ and in $(\eta, \theta)$ we get:
\begin{align}
\label{eq:6}
\frac{2 \mu}{\rho} \norm{\theta - \eta}^{\rho} \leq \ps{f'(\theta) - f'(\eta)}{\theta - \eta} .
\end{align}
Using inequality \sref{eq:1} with $\theta = \theta^*$ and taking its expectation we get that 
$$\delta_{n+1}
\leq
\delta_{n}
-
2 \gamma \E{\ps{f'(\theta_n)}{\theta_n - \theta} }
+ 
\gamma^2 G^2 .
$$
Therefore using inequality from \cref{eq:6} with $\eta = \theta^*$:
\begin{align}
\delta_{n+1} &\leq \delta_{n}
-
4 \gamma \frac{\mu}{\rho} \E{ \norm{\theta_n - \theta^*}^{\rho}} 
+ 
\gamma^2 G^2 . \nonumber
\end{align}
Since $\rho > 2$ we use Jensen's inequality to get $\E{ \norm{\theta_n - \theta^*}^{\rho}} \geq \E{ \norm{\theta_n - \theta^*}^2}^{\rho / 2}$. Let $\tilde{\mu} = 4 \frac{\mu}{\rho}$, then:
\begin{align}
\label{reccursion:uc}
\delta_{n+1}
&\leq \delta_{n}
-
4 \gamma \frac{\mu}{\rho} \delta_n^{\frac{\rho}{2}}
+ 
\gamma^2 G^2 \nonumber \\
&= \delta_{n}
-
\gamma \tilde{\mu} \delta_n^{\frac{\rho}{2}}
+ 
\gamma^2 G^2 .
\end{align}
Let $\tilde{g} : x \in \R_+ \mapsto x - \gamma \tilde{\mu} x^{\rho / 2}$. The function $\tilde{g}$ is strictly increasing on $[0, x_c]$ where $x_c = \left ( \frac{2}{\rho \gamma \tilde{\mu}} \right )^{2 / (\rho - 2)}$. 
Let $\delta_{\infty} = (\frac{\gamma G^2}{\tilde{\mu}})^{\frac{2}{\rho}}$ such that $\tilde{g}(\delta_{\infty}) + \gamma^2 G^2 = \delta_{\infty}$. 
We assume that $\gamma$ is small enough so that $\delta_{\infty} < x_c$.
Therefore if $\delta_0 \leq x_c$ then $\delta_n \leq x_c$ for all $n$. By recursion we now show that:
\begin{align}
\label{eq:8}
\delta_{n} \leq \tilde{g}^{n}(\delta_0) + n \gamma^2 G^2 .
\end{align}
Inequality \sref{eq:8} is true for $n=0$. Now assume inequality \sref{eq:8} is true for some $n \geq 0$. 
According to \cref{reccursion:uc}, $\delta_{n+1} \leq \tilde{g}(\delta_n) + \gamma^2 G^2$. If $\tilde{g}^{n}(\delta_0) + n \gamma^2 G^2 > x_c$ then we immediately get 
$\delta_{n+1} \leq x_c < \tilde{g}^{n}(\delta_0) + (n+1) \gamma^2 G^2$ and recurrence is over. Otherwise, since $\tilde{g}$ is increasing on $[0, \ x_c]$ we have that 
$\tilde{g}(\delta_n) \leq \tilde{g}(\tilde{g}^{n}(\delta_0) + n \gamma^2 G^2 )$ and:
$$\begin{aligned}
\delta_{n+1} &\leq \tilde{g}(\tilde{g}^{n}(\delta_0) + n \gamma^2 G^2 ) +  \gamma^2 G^2 \\ 
&= \left [ \tilde{g}^{n}(\delta_0) + n \gamma^2 G^2 \right ] - \gamma \tilde{\mu} \left [ \tilde{g}^{n}(\delta_0) + n \gamma^2 G^2 \right ]^{\rho / 2} + \gamma^2 G^2 \\
&\leq \tilde{g}^{n}(\delta_0) - \gamma \tilde{\mu} \left [ \tilde{g}^{n}(\delta_0)  \right ]^{\rho / 2} + (n+1) \gamma^2 G^2 \\
&= \tilde{g}^{n + 1}(\delta_0) + (n+1) \gamma^2 G^2  .
\end{aligned}$$
Hence \cref{eq:8} is true for all $n \geq 0$.
Now we analyse the sequence $\left ( \tilde{g}^{n}(\delta_0) \right )_{n \geq 0}$. Let $\tilde{\delta}_n = \tilde{g}^{n}(\delta_0)$. Then
 $0 \leq \tilde{\delta}_{n+1} = \tilde{\delta}_{n} - \gamma \tilde{\mu} \tilde{\delta}_{n}^{q + 1} \leq \tilde{\delta}_{n}$ where $q = \rho / 2 - 1 > 0$. 
Therefore $\tilde{\delta}_{n}$ is decreasing, lower bounded by zero, hence it convergences to a limit which in our case can only be $0$. Note that $(1 - x)^{- q} \geq 1 + q x$ for $q > 0$ and $x < 1$. Therefore:
$$\begin{aligned}
\left ( \tilde{\delta}_{n+1} \right )^{-q} &= (\tilde{\delta}_{n} - \gamma \tilde{\mu} \tilde{\delta}_{n}^{q + 1})^{-q} \\
&= \tilde{\delta}_{n}^{-q} (1 - \gamma \tilde{\mu} \tilde{\delta}_{n}^{q})^{-q} \\
&\geq \tilde{\delta}_{n}^{-q} (1 + q \gamma \tilde{\mu} \tilde{\delta}_{n}^{q}) \\
&= \tilde{\delta}_{n}^{-q}  + q \gamma \tilde{\mu} .
\end{aligned}$$
Summing this last inequality we obtain:
$\tilde{\delta}_{n}^{-q} \geq \tilde{\delta}_{0}^{-q}  + n q \gamma \tilde{\mu}$ which leads to
$$\begin{aligned}
\tilde{\delta}_{n} &\leq (\tilde{\delta}_{0}^{-q}  + n q \gamma \tilde{\mu})^{-1 / q} \\
&= (\delta_{0}^{-q}  + n q \gamma \tilde{\mu})^{-1 / q} .
\end{aligned}$$
Therefore:
$$\begin{aligned}
\delta_n &\leq \delta_0 \left ( 1 + n q \gamma \tilde{\mu} \delta_0^{q} \right )^{- \frac{1}{q}}  + n \gamma^2 G^2 \\
&= \frac{\delta_0}{ \left ( 1 + n q \gamma \tilde{\mu} \delta_0^{q} \right )^{\frac{1}{q}} } +  n \gamma^2 G^2 \\
&\leq O \left ( \frac{1}{\gamma n}^{\frac{2}{\rho - 2}} \right ) + n \gamma^2 G^2 .
\end{aligned}$$
Plugging this in \cref{eq:3} with $k = n / 2$ and $\theta = \theta^*$ we get:
$$\begin{aligned}
\E{S_{n / 2}} -  f(\theta^*) &\leq \frac{1}{\gamma n} \delta_{n / 2} + \frac{\gamma}{2} G^2 \\
&\leq \frac{1}{\gamma n} \left (  \delta_0 \left ( 1 + \frac{n}{2} q \gamma \tilde{\mu} \delta_0^{q} \right )^{- \frac{1}{q}} + \frac{n}{2} \gamma^2 G^2   \right ) + \frac{\gamma}{2} G^2 \\
&= \frac{\delta_0}{ \gamma n \left ( 1 + \frac{1}{2} n q \gamma \tilde{\mu} \delta_0^{q} \right )^{\frac{1}{q}} } + \gamma G^2 \\
&\leq  O \left ( \frac{1}{(\gamma n)^{\frac{1}{\tau}}} \right )  + \gamma G^2 ,
\end{aligned}$$
where $\tau = 1 - \frac{2}{\rho} \in [0, 1]$.
Re-injecting this inequality in \cref{eq:4} with $p = n / 2$ we get:
$$\begin{aligned}
\E{f(\theta_n)} - f(\theta^*) 
&\leq  \frac{\delta_0}{ \gamma n \left ( 1 + n q \gamma \tilde{\mu} \delta_0^{q} \right )^{\frac{1}{q}} } + \gamma G^2 +  \frac{\gamma G^2}{2} \left ( \log(\frac{n}{2}) + 1 \right ) \\
&\leq \frac{\delta_0}{ \gamma n \left ( 1 + n q \gamma \tilde{\mu} \delta_0^{q} \right )^{\frac{1}{q}} } + \gamma G^2 +  \gamma G^2 \log(n) \quad \text{for } n \geq 2\\
&\leq  O \left ( \frac{1}{(\gamma n)^{\frac{1}{\tau}}} \right ) + \gamma G^2 (1 + \log(n)) .
\end{aligned}$$ 
\end{proof}

The proof of \cref{corr:appendix_uc_rate} follows easily from \cref{prop:uc_bias_variance_tradeoff}.
\begin{proof}[Proof of \cref{corr:appendix_uc_rate}]
In the finite horizon framework, by choosing $\gamma = \frac{1}{N^{\frac{1}{\tau + 1}}}$ we get that: 
\begin{align*}
\E{f(\theta_N)} - f(\theta^*) &\leq O \left ( \frac{1}{N^{\frac{1}{\tau + 1}}} \right )  +  G^2 \frac{1 + \log(N)}{N^{\frac{1}{\tau + 1}}} \nonumber \\
&= O \left ( \frac{\log(N)}{N^{\frac{1}{1 + \tau}}}  \right ) .
\end{align*}
\end{proof}

\subsubsection{Oracle restart strategy for uniformly convex functions.}\label{appsec:unif_convex_restarts}

As seen at the end of \cref{appsec:unif_convex}, appropriate step sizes can lead to accelerated convergence rates for uniformly convex functions. However in practice these step sizes are not implementable since $\rho$ is unknown. Here we study the oracle restart strategy which consists in decreasing the step size when the iterates make no more progress. To do so we consider the following bias trade off inequality which is verified for uniformly convex functions (\cref{prop:uc_bias_variance_tradeoff}) and for convex functions (when $\tau = 1$).

\begin{assumption}
\label{as:uc_bias_tradeoff}
There is a bias variance trade off on the function values for some $\tau \in (0, 1]$ of the type:
$$\E{f(\theta_n)} - f(\theta^*) \leq A \left ( \frac{1}{\gamma n} \right )^{\frac{1}{\tau}} + B \gamma (1 + \log(n)).$$
\end{assumption}
Under \cref{as:uc_bias_tradeoff}, if we assume the constants of the problem $A$ and $B$ are known then we can adapt \cref{alg:or} in the uniformly convex case. From $\theta_0 \in \mathcal{W}$ we run the SGD procedure with a constant step size $\gamma_0$ for $\Delta n_1$ steps until the bias term is dominated by the variance term. This corresponds to $A \left ( \frac{1}{\gamma_0 \Delta n_1} \right )^{\frac{1}{\tau}} =  B \gamma_0$. Then for $n \geq \Delta n_1$, we decide to use a smaller step size $\gamma_1 = r \times \gamma_0$ (where $r$ is some parameter in $[0, 1]$) and run the SGD procedure for $\Delta n_2$ steps until $A \left ( \frac{1}{\gamma_1 \Delta n_2} \right )^{\frac{1}{\tau}} = B \gamma_1$  and we reiterate the procedure. This mimics dropping the step size each time 
the final iterate has reached function value saturation. This procedure is formalized in \cref{alg:or_uc}.

\begin{algorithm}[h!]
\caption{Oracle diagnostic for uniformly convex functions}
\label{alg:or_uc}
\begin{algorithmic}
\STATE \textbf{Input: }  $\gamma$, $A$, $B$, $\tau$
\STATE \textbf{Output:} Diagnostic boolean
\STATE Bias $\gets A \left ( \frac{1}{\gamma n} \right )^{\frac{1}{\tau}}$ 
\STATE Variance $\gets B \gamma$
\STATE \textbf{Return: } \{ Bias $<$ Variance \}
\end{algorithmic}
\end{algorithm}

In the following proposition we analyse the performance of the oracle restart strategy for uniformly convex functions. The result is similar to \cref{eq:oracle_rate}.

\begin{proposition}
\label{eq:oracle_uc_rate}
Under \cref{as:uc_bias_tradeoff}, consider \Cref{alg:gen} instantiated  with \Cref{alg:or_uc} and parameter   $r \in (0, 1)$ . Let $\gamma_0 > 0$,
then for all restart times $n_k$: 
\begin{align}
\label{eq:uc_oracle_rate}
\E{f(\theta_{n_k})} - f(\theta^*) \leq O \left ( \log(n_k) n_k^{- \frac{1}{\tau + 1}}   \right ).
\end{align} 
\end{proposition}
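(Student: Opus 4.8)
The plan is to mimic the proof of \cref{eq:oracle_rate}, replacing the exponential bias decay by the algebraic one from \cref{as:uc_bias_tradeoff}. First I would set up notation exactly as in \cref{appsec:proofoforacle_rate}: $\gamma_k = r^k\gamma_0$, $\Delta n_{k+1}$ is the number of steps spent at step size $\gamma_k$, and $n_k = \sum_{k'=1}^k \Delta n_{k'}$; write $e_n = \E{f(\theta_n)}-f(\theta^*)$. By the oracle construction, each $\Delta n_{k+1}$ is chosen so that the bias and variance in \cref{as:uc_bias_tradeoff} balance at the end of that phase (ignoring the logarithmic factor in the variance for the purpose of defining the restart, or absorbing it — this is a point to be careful about), i.e. $A(\gamma_k \Delta n_{k+1})^{-1/\tau} = B\gamma_k$, which gives $\Delta n_{k+1} = \frac{A^\tau}{B^\tau}\gamma_k^{-(1+\tau)}$, and hence $e_{n_{k+1}} \le 2B\gamma_k(1+\log n_{k+1})$ (bias now dominated by variance).

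Next I would sum the phase lengths. Since $\gamma_k = r^k\gamma_0$, we get $\Delta n_{k+1} = \frac{A^\tau}{B^\tau}\gamma_0^{-(1+\tau)} r^{-k(1+\tau)}$, a geometric sequence with ratio $r^{-(1+\tau)} > 1$, so $n_k = \sum_{k'=1}^{k}\Delta n_{k'} = \Theta\!\big(\Delta n_k\big) = \Theta\!\big(\gamma_{k-1}^{-(1+\tau)}\big)$; more precisely $n_k \le \frac{A^\tau}{B^\tau}\gamma_0^{-(1+\tau)}\frac{r^{-(k-1)(1+\tau)}}{1-r^{1+\tau}}$, which on inverting gives $\gamma_{k-1} \le \big(\frac{C}{n_k}\big)^{1/(1+\tau)}$ for a constant $C = C(A,B,r,\tau)$. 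Plugging this into $e_{n_k} \le 2B\gamma_{k-1}(1+\log n_k)$ yields
\[
e_{n_k} \;\le\; 2B\Big(\tfrac{C}{n_k}\Big)^{\frac{1}{1+\tau}}(1+\log n_k) \;=\; O\!\big(\log(n_k)\, n_k^{-\frac{1}{1+\tau}}\big),
\]
which is \cref{eq:uc_oracle_rate}.

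The one genuine subtlety — and the step I expect to require the most care — is the interaction of the $1+\log n$ factor with the oracle restart rule: unlike the strongly convex case, the variance term here is not a clean $O(\gamma)$ but $B\gamma(1+\log n)$, so the "balance point" $\Delta n_{k+1}$ is defined by an equation involving $\log n_{k+1}$ on one side, making it slightly implicit. The cleanest fix is to define the restart by balancing against the leading $B\gamma$ term only (dropping the log), prove by induction that $e_{n_k}\le c\,\gamma_{k-1}$ for an explicit $c$ absorbing the logarithmic slack up to the current horizon, and only at the very end reintroduce $\log n_k$ when converting $\gamma_{k-1}$ into a rate in $n_k$. A secondary minor point is that \cref{as:uc_bias_tradeoff} as stated need not hold for $n=0,1$ (the $\log n$ term), so the argument should start from $\Delta n_1$ with a burn-in, exactly as \cref{eq:oracle_rate} does; and unlike \cref{eq:oracle_rate} we only claim the bound \emph{at} the restart times $n_k$, so no interpolation argument (the $g(n)\le h(n)$ comparison of exponential and inverse functions) is needed here — the bound in between can be arbitrarily bad, as already flagged in the remarks following \cref{eq:oracle_rate}.
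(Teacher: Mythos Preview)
Your proposal is correct and follows essentially the same route as the paper: balance bias $A(\gamma_k\Delta n_{k+1})^{-1/\tau}$ against the leading variance term $B\gamma_k$ (dropping the $\log$, exactly as \cref{alg:or_uc} does), obtain $\Delta n_{k+1}\propto \gamma_k^{-(1+\tau)}$, sum the resulting geometric series to get $n_k = O(\gamma_{k-1}^{-(1+\tau)})$, invert, and plug into $e_{n_k}\le O(\gamma_{k-1}\log n_k)$. Two small remarks: the paper actually bounds by $B\gamma_{k-1}(1+\log\Delta n_k)$ rather than $\log n_k$ (since the assumption is applied over the $\Delta n_k$ steps of phase $k$), which is a slightly sharper intermediate step but gives the same final $O$; and your constant $A^\tau/B^\tau$ is the correct solution of the balance equation---the paper writes $A/B$, which is a harmless typo for the $O(\cdot)$ conclusion.
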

Hence by using the oracle restart strategy we recover the rate obtained by using the step size $\gamma = n^{- \frac{1}{\tau + 1}}$. This suggests that efficiently detecting stationarity can result in a convergence rate that adapts to parameter $\rho$ which is unknown in practice, this is illustrated in \cref{app_fig:uniformly_convex}.
However, note that unlike the strongly convex case, \cref{eq:uc_oracle_rate} is valid only at restart times $n_k$. Our proof here resembles to the classical doubling trick. However in practice (see \cref{app_fig:uniformly_convex}), the rate obtained is valid for all $n$.

\begin{proof}
As before, for $k \geq 0$, denote by $n_{k+1}$ the number of iterations until the $(k + 1)^{th}$ restart and $\Delta n_{k + 1}$ the number of iterations between restart $k$ 
and restart $(k + 1)$ during which step size $\gamma_{k}$ is used. Therefore we have that $n_k = \sum_{k' = 1}^{k} \Delta n_{k'}$ and $\gamma_k = r^k \gamma_0$.

Following the restart strategy :
$$A \left ( \frac{1}{\gamma_k \Delta n_{k+1}} \right )^{\frac{1}{\tau}} =  B \gamma_k.$$
Rearranging this equality we get:
$$
\Delta n_{k+1} =  \frac{A}{ B}  \frac{1}{\gamma_k^{\tau + 1}} 
=  \frac{A}{ B } \frac{1}{\gamma_0^{\tau + 1}} \frac{1}{ r^{k (\tau + 1)}}.
$$
And,
$$\begin{aligned}
n_k = \sum_{k' = 1}^{k} \Delta n_{k'} &
=  \frac{A}{ B } \frac{1}{\gamma_0^{\tau + 1}}  \sum_{k' = 0}^{k-1} \frac{1}{r^{k' (\tau + 1)}} \\
&\leq \frac{A}{ B} \frac{1}{\gamma_0^{\tau + 1}}  \frac{r^{\tau + 1}}{1 - r^{\tau + 1}} \frac{1}{r^{k (\tau + 1)}} \\
&\leq \frac{A}{ B} \frac{1}{\gamma_0^{\tau + 1}}  \frac{1}{1 - r^{\tau + 1}} \frac{1}{r^{(k - 1) (\tau + 1)}} \\
&= \frac{A}{ B} \frac{1}{(\gamma_{k - 1})^{\tau + 1}}  \frac{1}{1 - r^{\tau + 1}} .
\end{aligned}$$
Since $\E{f(\theta_{n_k})} - f(\theta^*) \leq B \gamma_{k-1} ( 1 + \log(\Delta n_k))$ we get:
\begin{align*}
\E{f(\theta_{n_k})} - f(\theta^*) &\leq B \gamma_0 r^{k-1} ( 1 + \log(n_k)) \nonumber \\
&\leq B \left (\frac{A}{ B} \frac{1}{1 - r^{\tau + 1}} \right )^{\frac{1}{\tau + 1}} \frac{1}{n_k^{\frac{1}{\tau + 1}} } ( 1 + \log(n_k)) \nonumber  \\
&\leq O \left ( \frac{\log(n_k) }{n_k^{\frac{1}{\tau + 1}} }  \right ).
\end{align*} 
\end{proof}
\section{Analysis of Pflug's statistic}\label{appsec:pflugs}

In this section we prove \Cref{eq:inner_product_expansion} which shows that at stationarity the inner product $\ps{ f'_1(\theta_0)}{ f'_2(\theta_1)}$ is negative. We then prove \Cref{eq:pflug_probality} which shows that using Pflug's statistic leads to abusive and undesired restarts.

\subsection{Proof of \Cref{eq:inner_product_expansion}}

Let $f$ be an objective function verifying  \cref{as:unbiased_gradients,as:sc,as:smoothness,as:noise_condition,as:5_times_differentiability,as:supp_noise_conditions}. We first state the following lemma from \citet{dieuleveut2017bridging}. 

\begin{lemma}
\label{lemma:norm_eta_dieuleveut}
\textbf{[Lemma 13 of \citet{dieuleveut2017bridging}]}
Under \cref{as:unbiased_gradients,as:sc,as:smoothness,as:noise_condition,as:5_times_differentiability,as:supp_noise_conditions},
for $\gamma \leq 1 / 2L$: 
$$\Epi{\norm{\eta}^{2 p}} = O(\gamma^p).$$
Therefore by the Cauchy-Schwartz inequality: 
$\Epi{\norm{\eta}} \leq \Epi{\norm{\eta}^2}^{1/2}  = O(\sqrt{\gamma}).$
\end{lemma}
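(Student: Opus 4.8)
The plan is to treat the two claims separately. The first bound, $\Epi{\norm{\eta}^{2p}} = O(\gamma^p)$, is exactly Lemma~13 of \citet{dieuleveut2017bridging}, so the most economical route is to invoke it directly: \cref{as:unbiased_gradients,as:sc,as:smoothness,as:noise_condition,as:5_times_differentiability,as:supp_noise_conditions} are precisely the hypotheses under which that lemma is established (existence and uniqueness of $\pi_\gamma$ for $\gamma \leq 1/2L$, together with the differentiability of $f$ and the finiteness of the relevant noise moments), so nothing new needs to be verified.

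If instead one wanted a self-contained argument, I would proceed by induction on $p$. Let $\theta \sim \pi_\gamma$ and set $\theta^+ = \theta - \gamma f'_1(\theta)$; by stationarity $\theta^+ \sim \pi_\gamma$ as well, hence $\Epi{\norm{\theta^+ - \theta^*}^{2p}} = \Epi{\norm{\eta}^{2p}}$. I would expand $\norm{\theta^+ - \theta^*}^{2p} = \big(\norm{\eta}^2 - 2\gamma\ps{f'_1(\theta)}{\eta} + \gamma^2\norm{f'_1(\theta)}^2\big)^{p}$ by the multinomial theorem, take the conditional expectation given $\theta$ (using unbiasedness to replace $f'_1(\theta)$ by $f'(\theta)$ in the linear term), and use \cref{as:sc} to lower bound $\ps{f'(\theta)}{\eta}$ and \cref{as:smoothness} to upper bound $\norm{f'(\theta)}^2$. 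This yields a fixed-point inequality of the form $\Epi{\norm{\eta}^{2p}} \leq (1 - c\gamma)\Epi{\norm{\eta}^{2p}} + \gamma \cdot (\text{terms of order } \gamma^{p-1} \text{ or smaller})$, where the lower-order terms are controlled by the induction hypothesis on the moments of orders $1,\dots,p-1$ and by the bounded sixth-moment assumption on the noise in \cref{as:supp_noise_conditions}. Rearranging gives $\Epi{\norm{\eta}^{2p}} = O(\gamma^p)$, the base case $p=1$ being the standard stationary variance bound.

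For the second claim, apply the Cauchy--Schwarz (equivalently Jensen's) inequality: $\Epi{\norm{\eta}} = \Epi{\norm{\eta}\cdot 1} \leq \big(\Epi{\norm{\eta}^2}\big)^{1/2}\big(\Epi{1}\big)^{1/2} = \big(\Epi{\norm{\eta}^2}\big)^{1/2}$, and then substitute $p=1$ in the first part to obtain $\Epi{\norm{\eta}^2} = O(\gamma)$, whence $\Epi{\norm{\eta}} = O(\sqrt{\gamma})$.

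The main obstacle, should one redo the first part from scratch, is the bootstrap to higher moments: controlling every mixed term in the multinomial expansion of $\norm{\theta^+ - \theta^*}^{2p}$ and checking that each is of the correct order in $\gamma$, which is exactly where the higher-order differentiability and higher-moment noise hypotheses are used. Since this is precisely the content of the cited lemma, the clean approach is to cite \citet{dieuleveut2017bridging} for the first statement and supply only the one-line Cauchy--Schwarz argument for the second.
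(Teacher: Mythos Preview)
Your proposal is correct and matches the paper's approach exactly: the paper does not give any proof of this lemma beyond citing Lemma~13 of \citet{dieuleveut2017bridging} for the moment bound and stating the Cauchy--Schwarz consequence in the lemma itself. Your optional self-contained induction sketch is a bonus the paper does not attempt.
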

In the following proofs we use the Taylor expansions with integral rest of $f'$ around $\theta^*$ we also state here:
\paragraph{Taylor expansions of $f'$.} Let us define $\mathcal{R}_1$ and $\mathcal{R}_2$  such that for all $\theta \in \R^d$:
\begin{itemize}
    \item $f'(\theta) = f''(\theta^*)(\theta - \theta^*) + \mathcal{R}_1(\theta)$ where $\mathcal{R}_1 : \R^d \to \R^d$ satisfies $\underset{\theta \in \R^d}{\sup} \big( \frac{\norm{\mathcal{R}_{1}(\theta)}} {\norm{\theta - \theta^*}^2}\big) = M_1 < + \infty$
    \item $f'(\theta) = f''(\theta^*)(\theta - \theta^*) + f^{(3)}(\theta^*) (\theta - \theta^*)^{\otimes 2} + \mathcal{R}_2(\theta)$ where $\mathcal{R}_2 : \R^d \to \R^d$ satisfies $\underset{\theta \in \R^d}{\sup} \big(\frac{ \norm{\mathcal{R}_2(\theta)}} { \norm{\theta - \theta^*}^3}\big) = M_2 < + \infty$
\end{itemize}

We also make use of this simple lemma which easily follows from \cref{lemma:norm_eta_dieuleveut}.
\begin{lemma}{\label{lemma:expectation_f_prime}}
Under \cref{as:unbiased_gradients,as:sc,as:smoothness,as:noise_condition,as:5_times_differentiability,as:supp_noise_conditions}, let $\gamma \leq 1 / 2L$, then 
$\Epi{\norm{f'(\theta)}} = O(\sqrt{\gamma}).$
\end{lemma}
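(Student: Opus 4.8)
The plan is to deduce this directly from \cref{lemma:norm_eta_dieuleveut} by a first-order bound on $f'$. First I would recall that $\theta^*$ minimizes $f$, so $f'(\theta^*)=0$. Writing $\eta = \theta - \theta^*$, the $L$-smoothness of $f$ (which follows from \cref{as:smoothness} by taking expectations over $f_i$) gives the pointwise estimate
\[
\norm{f'(\theta)} = \norm{f'(\theta) - f'(\theta^*)} \leq L \norm{\eta}.
\]
Alternatively one may use the Taylor expansion $f'(\theta) = f''(\theta^*)\eta + \mathcal{R}_1(\theta)$ with $\norm{\mathcal{R}_1(\theta)} \leq M_1 \norm{\eta}^2$ (valid under \cref{as:5_times_differentiability}), which yields $\norm{f'(\theta)} \leq \norm{f''(\theta^*)}\,\norm{\eta} + M_1 \norm{\eta}^2$, where $\norm{f''(\theta^*)}$ denotes the operator norm and is a finite constant.

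Then I would take the expectation under the stationary distribution $\pi_\gamma$ and invoke \cref{lemma:norm_eta_dieuleveut}, which gives $\Epi{\norm{\eta}^2} = O(\gamma)$ and hence, by Cauchy--Schwarz, $\Epi{\norm{\eta}} \leq \Epi{\norm{\eta}^2}^{1/2} = O(\sqrt{\gamma})$. In the smoothness route this immediately gives $\Epi{\norm{f'(\theta)}} \leq L\,\Epi{\norm{\eta}} = O(\sqrt{\gamma})$; in the Taylor route we get $\Epi{\norm{f'(\theta)}} \leq \norm{f''(\theta^*)}\,O(\sqrt{\gamma}) + M_1\,O(\gamma) = O(\sqrt{\gamma})$, since the quadratic remainder term is of strictly smaller order. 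Either way the conclusion follows.

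There is essentially no obstacle here: the statement is a one-line corollary of \cref{lemma:norm_eta_dieuleveut}. The only point requiring a word of care is that \cref{lemma:norm_eta_dieuleveut} is stated for even moments $\Epi{\norm{\eta}^{2p}}$, so controlling the first moment $\Epi{\norm{\eta}}$ requires the Cauchy--Schwarz step $\Epi{\norm{\eta}} \leq \Epi{\norm{\eta}^2}^{1/2}$ that is already recorded in that lemma's statement.
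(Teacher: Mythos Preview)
Your proposal is correct, and the Taylor-expansion route you describe is exactly the argument the paper gives: write $f'(\theta)=f''(\theta^*)\eta+\mathcal{R}_1(\theta)$, bound $\Epi{\norm{f'(\theta)}}\le \norm{f''(\theta^*)}_{\mathrm{op}}\Epi{\norm{\eta}}+M_1\Epi{\norm{\eta}^2}$, and apply \cref{lemma:norm_eta_dieuleveut}. Your $L$-smoothness route is a valid and slightly shorter alternative that the paper does not use.
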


\begin{proof}
$f'(\theta) = f''(\theta^*) \eta + \mathcal{R}_1(\theta)$ so that $\Epi{\norm{f'(\theta)}} \leq \norm{f''(\theta^*)}_{\text{op}} \Epi{\norm{\eta}} + M_1 \Epi{\norm{\eta}^2}$. With \Cref{lemma:norm_eta_dieuleveut} we then get that $\Epi{\norm{f'(\theta)}} = O(\sqrt{\gamma})$.
\end{proof}

We are now ready to prove \cref{eq:inner_product_expansion}.

\paragraph{Proof of \cref{eq:inner_product_expansion}.}
For $\theta_0 \in \R^d$ we have that $f'_1(\theta_0) =  f'(\theta_0) - \varepsilon_1(\theta_0)$,  $\theta_1 = \theta_0 - \gamma f'_1(\theta_0)$ and $ f'_2(\theta_1) =  f'(\theta_1) - \varepsilon_2(\theta_1)$. Hence:

$$\ps{f'_1(\theta_0)}{f'_2(\theta_1)} = \ps{ f'_1(\theta_0)}{ f'(\theta_1) - \varepsilon_2(\theta_1)}.$$
And by \cref{as:unbiased_gradients},
\begin{align}
\label{eq:two_terms}
\E{\ps{ f'_1(\theta_0)}{ f'_2(\theta_1)} \ | \ \mathcal{F}_1} &= \ps{ f'_1(\theta_0)}{ f'(\theta_1)} \nonumber \\
&= \ps{ f'(\theta_0) - \varepsilon_1(\theta_0)}{ f'(\theta_0 - \gamma f'(\theta_0) + \gamma \varepsilon_1(\theta_0))} \nonumber \\
&= \underbrace{\ps{ f'(\theta_0)}{ f'(\theta_0 - \gamma f'(\theta_0) + \gamma \varepsilon_1(\theta_0))}}_{\text{"deterministic"}} - \underbrace{\ps{\varepsilon_1(\theta_0)}{ f'(\theta_0 - \gamma f'(\theta_0) + \gamma \varepsilon_1(\theta_0))}}_{\text{noise}}. 
\end{align}

\paragraph{First part of the proposition.}
By a Taylor expansion in $\gamma$ around $\theta_0$:
$$f'(\theta_0 - \gamma f'(\theta_0) + \gamma \varepsilon_1(\theta_0)) = f'(\theta_0) - \gamma f''(\theta_0) \left ( f'(\theta_0) - \varepsilon_1(\theta_0) \right ) + O(\gamma^2).$$
Hence: 
\begin{align*}
\E{\ps{ f'_1(\theta_0)}{ f'_2(\theta_1)}} &= \norm{f'(\theta_0)}^2 - \gamma \ps{f'(\theta_0)}{f''(\theta_0) f'(\theta_0)} - \gamma \E{\ps{\varepsilon_1(\theta_0)}{f''(\theta_0) \varepsilon_1(\theta_0)}} + O(\gamma^2) \\
&\geq (1 - \gamma L ) \norm{f'(\theta_0)}^2 - \gamma L \tr{\mathcal{C}(\theta_0)} + O(\gamma^2).
\end{align*}

\paragraph{Second part of the proposition.}
For the second part of the proposition we  make use of the Taylor expansions around $\theta^*$. \Cref{eq:two_terms} is the sum of two terms, a "deterministic" (note that we use brackets since the term is not exactly deterministic)  and a noise term, which we compute separately below. Let $\eta_0 = \theta_0 - \theta^*$. 

\paragraph{"Deterministic" term.} First,

$$\begin{aligned}
\ps{ f'(\theta_0)}{ f'(\theta_0 - \gamma f'(\theta_0) + \gamma \varepsilon_1(\theta_0))} &= \ps{f'(\theta_0)}{f''(\theta^*) \eta_0} \\
&- \gamma \ps{f'(\theta_0)}{f''(\theta^*) f'(\theta_0)} \\
&+ \gamma \ps{f'(\theta_0)}{f''(\theta^*) \varepsilon_1(\theta_0))} \\
& + \gamma \ps{f'(\theta_0)}{ \mathcal{R}_1(\theta_0  - \gamma f'(\theta_0) + \gamma \varepsilon_1(\theta_0))}.
\end{aligned}$$

We compute each of the four terms separately, for $\theta_0 \sim \pigamma$:
\paragraph{a)} 
$$\begin{aligned} 
\Epi{\ps{f'(\theta_0)}{f''(\theta^*) \eta_0}} &= \Epi{\ps{f''(\theta^*) \eta_0}{f''(\theta^*) \eta_0}} + \Epi{\ps{\mathcal{R}_1(\theta_0)}{f''(\theta^*) \eta_0}} \\
 &= \Epi{\eta_0^T f''(\theta^*)^2 \eta_0} + O(\gamma^{3/2}).
\end{aligned}$$
However 
\begin{align*} 
\Epi{\ps{\mathcal{R}_1(\theta_0)}{f''(\theta^*) \eta_0}} &\leq \Epi{ \norm{\mathcal{R}_1(\theta_0)} \norm{f''(\theta^*) \eta_0}} \\
 &\leq M_1 \Epi{ \norm{\eta_0}^2 \norm{f''(\theta^*) \eta_0}} \\
 &= O(\gamma^{3/2}) \qquad \text{by  \cref{lemma:norm_eta_dieuleveut}}.
\end{align*}
Hence $\Epi{\ps{f'(\theta_0)}{f''(\theta^*) \eta_0}} = \Epi{\eta_0^T f''(\theta^*)^2 \eta_0} + O(\gamma^{3/2})$.

\paragraph{b)} Using  \Cref{lemma:expectation_f_prime}:
$$ \gamma \Epi{\ps{f'(\theta_0)}{f''(\theta^*) f'(\theta_0)}} = O(\gamma^2). $$ 

\paragraph{c)} Using \cref{as:unbiased_gradients}: 
$$\Epi{\ps{f'(\theta_0)}{f''(\theta^*) \varepsilon_1(\theta_0)}} = 0.$$

\paragraph{d)} Using the Cauchy-Schwartz inequality, \cref{lemma:norm_eta_dieuleveut,lemma:expectation_f_prime} :

$$\Epi{|\ps{f'(\theta_0)}{ \mathcal{R}_1(\theta  - \gamma f'(\theta_0) + \gamma \varepsilon_1(\theta_0))}|} \leq M_1 \Epi{ \norm{f'(\theta_0)} \norm{\eta_0  - \gamma f'(\theta_0) + \gamma \varepsilon_1(\theta_0)}^2} = O(\gamma^{3/2}).$$

\paragraph{Noise term.}
Now we deal with the noise term in \cref{eq:two_terms}:

$$\begin{aligned}
\ps{\varepsilon_1(\theta_0)}{ f'(\theta_0 - \gamma f'(\theta_0) + \gamma \varepsilon_1(\theta_0))} &= \ps{\varepsilon_1(\theta_0)}{f''(\theta^*) (\eta_0 - \gamma f'(\theta_0) + \gamma \varepsilon_1(\theta_0)} \\
&+ \ps{\varepsilon_1(\theta_0)}{f^{(3)}(\theta^*) (\eta_0 - \gamma f'(\theta_0) + \gamma \varepsilon_1(\theta_0))^{\otimes 2}} \\
& + \ps{\varepsilon_1(\theta_0)}{ \mathcal{R}_2(\theta  - \gamma f'(\theta_0) + \gamma \varepsilon_1(\theta_0))}.
\end{aligned}$$

We compute each of the three terms separately:

\paragraph{e)} Using \cref{as:unbiased_gradients}:

$$\E{\ps{\varepsilon_1(\theta_0)}{ f''(\theta^*) (\eta_0 - \gamma f'(\theta_0) + \gamma \varepsilon_1(\theta_0)} \ | \ \theta_0} = - \gamma \tr{f''(\theta^*) \mathcal{C}(\theta_0)}.$$

\paragraph{f)} Using \cref{as:unbiased_gradients}:

$$\begin{aligned}
\E{\ps{\varepsilon_1(\theta_0)}{f^{(3)}(\theta^*) (\eta_0 - \gamma f'(\theta_0) + \gamma \varepsilon_1(\theta_0)^{\otimes 2}} \ | \ \theta_0} &= \gamma^2 \E{\ps{\varepsilon_1(\theta_0)}{f^{(3)}(\theta^*) \varepsilon_1(\theta_0)^{\otimes 2}} \ | \ \theta_0} \\
& + 2 \gamma \tr{f^{(3)}(\theta^*) (\eta_0 - \gamma f'(\theta_0)) \otimes \varepsilon_1(\theta_0)^{\otimes 2}}.
\end{aligned}$$

\paragraph{g)}  Using the Cauchy-Schwartz inequality:

$$\E{\ps{\varepsilon_1(\theta_0)}{ \mathcal{R}_2(\theta_0  - \gamma f'(\theta_0) + \gamma \varepsilon_1(\theta_0))} \ | \ \theta_0} \leq M_2 \E{\norm{\varepsilon_1(\theta_0)}  \norm{\eta_0  - \gamma f'(\theta_0) + \gamma \varepsilon_1(\theta_0))}^3 }.$$

Such that, taking the expectation under $\theta_0 \sim \pi_{\gamma}$:

\paragraph{e)} 
$\Epi{\ps{\varepsilon_1(\theta_0)}{ f''(\theta^*) (\eta_0 - \gamma f'(\theta_0) + \gamma \varepsilon_1(\theta_0)} } = - \gamma \tr{f''(\theta^*) \Epi{\mathcal{C}(\theta_0)} } . $

\paragraph{f)} Using the Cauchy-Schwartz inequality, \cref{lemma:norm_eta_dieuleveut,lemma:expectation_f_prime}:
$\Epi{\ps{\varepsilon_1(\theta_0)}{f^{(3)}(\theta^*) (\eta_0 - \gamma f'(\theta_0) + \gamma \varepsilon_1(\theta_0))^{\otimes 2}}} = O(\gamma^{3 / 2}).  $

\paragraph{g)} Using the Cauchy-Schwartz inequality and 
$\Epi{\ps{\varepsilon_1(\theta_0)}{ \mathcal{R}_2(\theta  - \gamma f'(\theta_0) + \gamma \varepsilon_1(\theta_0))}} = O(\gamma^{3/2}).$

\paragraph{Putting the terms together.}

Hence gathering \textbf{a)} to  \textbf{g)} together:
$$\Epi{\ps{ f'_1(\theta_0)}{ f'_2(\theta_1)}} = \tr{f''(\theta^*)^2  \Epi{\eta_0 \eta_0^T}} - \gamma \tr{f''(\theta^*) \Epi{\mathcal{C}(\theta_0)} } + O(\gamma^{3 / 2}).$$

We clearly see that $\Epi{\ps{ f'_1(\theta_0)}{ f'_2(\theta_1)}}$ is the sum of a positive value coming from the deterministic term and a negative value due to the noise. We now show that the noise value is typically twice larger than the deterministic value, hence leading to an overall negative inner product. Indeed from Theorem 4 of \citet{dieuleveut2017bridging} we have that $\Epi{\C(\theta_0)} = \C(\theta^*) + O(\gamma)$ and $\Epi{\eta_0 \eta_0^T} = \gamma (f''(\theta^*) \otimes I + I \otimes f''(\theta^*))^{-1} \C(\theta^*) + O(\gamma^2)$.  Hence,
$$\begin{aligned}
\Epi{\ps{ f'_1(\theta_0)}{ f'_2(\theta_1)}} &= \gamma \tr{ f''(\theta^*)^2 (f''(\theta^*) \otimes I + I \otimes f''(\theta^*))^{-1} \C(\theta^*) } \\  
& \qquad - \gamma \tr{f''(\theta^*) C(\theta^*)} + O(\gamma^{3/2}) .
\end{aligned}$$
Notice that $\tr{ f''(\theta^*)^2 (f''(\theta^*) \otimes I + I \otimes f''(\theta^*))^{-1} \C(\theta^*) } = \frac{1}{2} \tr{ f''(\theta^*) \C(\theta^*) }$. We then finally get: 
$$  \Epi{\ps{ f'_1(\theta_0)}{ f'_2(\theta_1)}} = - \frac{1}{2} \gamma \tr{f''(\theta^*) \C(\theta^*)} + O(\gamma^{3/2}).$$ 

\label{appsec:proof_pflug_expectation}

\cref{eq:inner_product_expansion} establishes that  the  sign  of  the expectation of the inner product between two consecutive gradients characterizes the transient and stationary regimes. However, this result does not guarantee the good performance of Pflug's statistic. In fact, as we show in the following section, the statistical test is unable to offer an adequate convergence diagnostic even for simple quadratic functions.

\subsection{Proof of \cref{eq:pflug_probality}} \label{appsec:proof_pflug_probality}

In this subsection we prove \cref{eq:pflug_probality} which shows that in the simple case where $f$ is quadratic and the noise is i.i.d. Pflug's diagnostic does not lead to accurate restarts. We start by stating a few lemmas.

\begin{lemma}
\label{lemma:f(H)}
For $n \geq 0$ we denote $\eta_n = \theta_n - \theta^*$. Let $\eta_0 \in \R^d$, $\Gamma_0 = \eta_0 \eta_0^T$, $\gamma \leq 1 / 2L$ and let $P$ be a polynomial. Under \cref{as:quadratic_f} we have that:

$$\begin{aligned}
\E{\ps{\eta_{n}}{P(H) \eta_n}} &=  \eta_0^T P(H) (I - \gamma H)^{2 n} \eta_0  + \gamma \tr{P(H) C [I - (I - \gamma H)^{2 n}] H^{-1} (2 I - \gamma H)^{-1}} \\
\end{aligned}$$

Therefore when the stationary distribution is reached:

$$\begin{aligned}
\Epi{\ps{\eta}{P(H) \eta}} &= \gamma \tr{C P(H) H^{-1} (2 I - \gamma H)^{-1}} \\
&= \frac{1}{2} \gamma \tr{C P(H) H^{-1}} + o(\gamma).
\end{aligned}$$

\end{lemma}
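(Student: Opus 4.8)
The plan is to unroll the SGD recursion in the quadratic case to get a closed form for $\eta_n$, then compute the quadratic form $\E{\ps{\eta_n}{P(H)\eta_n}}$ directly, exploiting the independence and zero-mean properties of the additive noise from \cref{as:quadratic_f}. First, since $f(\theta)=\tfrac12\theta^T H\theta$ we have $f'(\theta)=H\theta$, so $f'_{i+1}(\theta_i)=H\theta_i+\xi_{i+1}$ and the recursion $\theta_{i+1}=\theta_i-\gamma f'_{i+1}(\theta_i)$ becomes, after subtracting $\theta^*$ (which satisfies $H\theta^*=0$, so actually $\theta^*$ can be taken $0$; more carefully $f'(\theta^*)=0$ hence $H\eta_i$ is the deterministic part),
\[
\eta_{i+1}=(I-\gamma H)\eta_i-\gamma\xi_{i+1}.
\]
Iterating from $\eta_0$ gives $\eta_n=(I-\gamma H)^n\eta_0-\gamma\sum_{k=1}^{n}(I-\gamma H)^{n-k}\xi_k$. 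This is the key representation.

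Next I would plug this into $\E{\ps{\eta_n}{P(H)\eta_n}}$ and expand the square. The cross term between the deterministic part $(I-\gamma H)^n\eta_0$ and the noise sum vanishes because $\E{\xi_k}=0$ and the $\xi_k$ are independent of $\eta_0$. The deterministic–deterministic term is $\eta_0^T(I-\gamma H)^n P(H)(I-\gamma H)^n\eta_0=\eta_0^T P(H)(I-\gamma H)^{2n}\eta_0$, using that $H$, $P(H)$ and $(I-\gamma H)$ all commute (all functions of $H$). For the noise–noise term, independence and $\E{\xi_k\xi_\ell^T}=\delta_{k\ell}C$ collapse the double sum to a single sum:
\[
\gamma^2\sum_{k=1}^{n}\E{\xi_k^T(I-\gamma H)^{n-k}P(H)(I-\gamma H)^{n-k}\xi_k}=\gamma^2\sum_{k=1}^{n}\tr\big(P(H)(I-\gamma H)^{2(n-k)}C\big).
\]
Then I would reindex $j=n-k$ and sum the matrix geometric series $\sum_{j=0}^{n-1}(I-\gamma H)^{2j}=[I-(I-\gamma H)^{2n}][I-(I-\gamma H)^2]^{-1}$, and simplify $[I-(I-\gamma H)^2]^{-1}=[\gamma H(2I-\gamma H)]^{-1}=\gamma^{-1}H^{-1}(2I-\gamma H)^{-1}$ (invertibility holds for $\gamma<1/L\le 2/\lambda_{\max}(H)$ when $H\succ0$; the reindexing also requires $H$ invertible, which is the generic strongly-convex case). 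The $\gamma^2$ prefactor combines with the $\gamma^{-1}$ to leave a single $\gamma$, giving exactly the claimed formula
\[
\gamma\tr\big(P(H)C[I-(I-\gamma H)^{2n}]H^{-1}(2I-\gamma H)^{-1}\big).
\]

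For the stationary statement, I would let $n\to\infty$: since $\|I-\gamma H\|<1$ the term $(I-\gamma H)^{2n}\to0$, killing both the $\eta_0$-dependent term and the $(I-\gamma H)^{2n}$ inside the trace, which is precisely the limiting expression under $\pi_\gamma$ (alternatively one checks directly that this covariance is the fixed point of the covariance recursion, which is how $\pi_\gamma$ is characterized). Finally, the small-$\gamma$ expansion follows from $(2I-\gamma H)^{-1}=\tfrac12 I+\tfrac{\gamma}{4}H+O(\gamma^2)$, so $\gamma\tr(CP(H)H^{-1}(2I-\gamma H)^{-1})=\tfrac12\gamma\tr(CP(H)H^{-1})+o(\gamma)$. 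I do not anticipate a serious obstacle here; the only points requiring a little care are the commutativity bookkeeping (harmless since everything is a function of $H$), the invertibility of $H$ and of $2I-\gamma H$ for the geometric-series closed form, and making sure the cross terms genuinely vanish by conditioning on $\eta_0$ and using \cref{as:unbiased_gradients}/\cref{as:quadratic_f}. The mild subtlety worth flagging is whether the statement is meant for $H$ possibly only positive \emph{semi}definite (as \cref{as:quadratic_f} allows); I would restrict to $H\succ0$ here, consistent with the strongly-convex setting in which the lemma is applied.
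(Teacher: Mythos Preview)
Your proposal is correct and follows essentially the same route as the paper: unroll the linear recursion $\eta_{i+1}=(I-\gamma H)\eta_i \pm \gamma\xi_{i+1}$, expand the quadratic form, kill the cross terms by independence, sum the matrix geometric series $\sum_{j=0}^{n-1}(I-\gamma H)^{2j}$, and let $n\to\infty$ for the stationary claim. Your write-up is in fact slightly more careful than the paper's (you make the commutativity and the $H\succ0$/invertibility requirements explicit), and your flag that \cref{as:quadratic_f} only assumes $H\succeq 0$ is apt—the lemma is indeed used under strong convexity, so restricting to $H\succ0$ is the right move.
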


\begin{proof}

Under \cref{as:quadratic_f} we have that $f'_n(\theta_{n-1}) = H \eta_{n-1} - \xi_{n}$ where the $\xi_n$ are i.i.d. . The SGD recursion becomes:

\begin{align}
\label{eq:semi_sto_sgd_1}
\eta_{n} &= (I - \gamma H) \eta_{n-1} + \gamma \xi_{n} \\
&= (I - \gamma H)^n \eta_0 + \gamma \sum_{k = 1}^n (I - \gamma H)^{n-k} \xi_k. \label{eq:semi_sto_sgd_2}
\end{align}

Since the $(\xi_n)_{n \geq 0}$ are i.i.d. and independent of $\eta_0$ we have that:

$$\begin{aligned}
\E{\ps{\eta_{n}}{P(H) \eta_n}} &=  \eta_0^T P(H) (I - \gamma H)^{2 n} \eta_0 + \gamma^2 \sum_{k = 0}^{n - 1} \E{\xi_{n -k}^T (I - \gamma H)^{2 k} H \xi_{n -k}} \\
&= \eta_0^T P(H) (I - \gamma H)^{2 n} \eta_0 + \gamma^2 \sum_{k = 0}^{n-1} \tr{(I - \gamma H)^{2 k} P(H) \E{ \xi_{n -k} \xi_{n -k}^T }} \\
&= \eta_0^T P(H) (I - \gamma H)^{2 n} \eta_0 + \gamma^2 \tr{ \sum_{k = 0}^{n-1} (I - \gamma H)^{2 k} P(H) C} \\
&= \eta_0^T P(H) (I - \gamma H)^{2 n} \eta_0 + \gamma \tr{C [I - (I - \gamma H)^{2 n}] P(H) H^{-1} (2 I - \gamma H)^{-1}}.
\end{aligned}$$

$\Epi{\ps{\eta}{P(H) \eta}}$ is obtained by taking $n \to + \infty$ in the previous equation.

\end{proof}

The previous lemma holds for $\eta_0 \in \R^d$. We know state the following lemma which assumes that $\theta_0 \sim \pi_{\gamma_{\text{old}}}$.

\begin{lemma}{\label{lemma:restart}}
Let $\gamma_{\text{old}} \leq 1 / 2L$. Assume that $\theta_0 \sim \pi_{\gamma_{\text{old}}}$ and that we start our SGD from that point with a smaller step size $\gamma = r \times \gamma_{\text{old}}$, where $r$ is some parameter in $[0, 1]$. Let $Q$ be a polynomial. Then:

$$\begin{aligned}
\Erestart{\ps{\eta_n}{Q(H) \eta_n} } &= \frac{1}{2} r \gamma_{\text{old}} \left ( \frac{1}{r} - 1 \right ) \tr{Q(H) H^{-1} (I - r \gamma H)^{2 n} C} + \frac{1}{2} r \gamma_{\text{old}} \tr{Q(H) H^{-1} C} + o_n(\gamma) \\
&\leq M \gamma,
\end{aligned}{}$$
where $(\gamma \mapsto \sup_{n \in \mathbb{N}} |o_n(\gamma)|) = o(\gamma)$ and where $M$ is independent of $n$.
\end{lemma}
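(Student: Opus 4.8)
The plan is to obtain the result from two applications of \cref{lemma:f(H)}. Under \cref{as:quadratic_f} the increments $(\xi_k)_{k\ge 1}$ driving the new run are i.i.d.\ and independent of $\theta_0$, so conditioning on $\theta_0$ and applying \cref{lemma:f(H)} with the new step size $\gamma$ and polynomial $Q$ to the chain started at $\eta_0=\theta_0-\theta^*$ gives
\[
\E{\ps{\eta_n}{Q(H)\eta_n}\mid\theta_0}=\eta_0^\top Q(H)(I-\gamma H)^{2n}\eta_0+\gamma\,\tr{C\,Q(H)\,[I-(I-\gamma H)^{2n}]\,H^{-1}(2I-\gamma H)^{-1}},
\]
where the second summand is deterministic. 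Taking the expectation over $\theta_0\sim\pi_{\gamma_{\mathrm{old}}}$ and writing the first summand as $\ps{\eta_0}{\widetilde P(H)\eta_0}$ with $\widetilde P(H):=Q(H)(I-\gamma H)^{2n}$ — a polynomial in $H$ for fixed $n$ — reduces $\Erestart{\ps{\eta_0}{\widetilde P(H)\eta_0}}$ to the stationary $\pi_{\gamma_{\mathrm{old}}}$-expectation of a quadratic form in $\eta_0$, which the stationary part of \cref{lemma:f(H)} (now with step size $\gamma_{\mathrm{old}}\le 1/2L$) evaluates exactly. This yields the closed-form identity
\[
\Erestart{\ps{\eta_n}{Q(H)\eta_n}}=\gamma_{\mathrm{old}}\,\tr{C\,Q(H)(I-\gamma H)^{2n}H^{-1}(2I-\gamma_{\mathrm{old}}H)^{-1}}+\gamma\,\tr{C\,Q(H)\,[I-(I-\gamma H)^{2n}]\,H^{-1}(2I-\gamma H)^{-1}}.
\]

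From this exact formula I would extract the $O(\gamma)$ term. Since $\gamma=r\gamma_{\mathrm{old}}$ with $r$ a fixed constant, both step sizes are $\Theta(\gamma)$ and $(2I-\gamma_{\mathrm{old}}H)^{-1}=\tfrac12 I+O(\gamma)$, $(2I-\gamma H)^{-1}=\tfrac12 I+O(\gamma)$ in operator norm. Substituting these expansions, using $\gamma=r\gamma_{\mathrm{old}}$ in the second trace, and collecting the two contributions proportional to $\tr{Q(H)H^{-1}(I-\gamma H)^{2n}C}$ produces the coefficient $\tfrac12\gamma_{\mathrm{old}}(1-r)=\tfrac12 r\gamma_{\mathrm{old}}(\tfrac1r-1)$, together with the term $\tfrac12 r\gamma_{\mathrm{old}}\tr{Q(H)H^{-1}C}$ and an $O(\gamma^2)$ remainder — which is precisely the stated expansion (with the contraction of the new chain $(I-\gamma H)^{2n}$ playing the role of the matrix factor). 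For the bound $\Erestart{\ps{\eta_n}{Q(H)\eta_n}}\le M\gamma$ with $M$ independent of $n$, I would avoid the expansion altogether and bound each of the two traces in the exact identity directly by operator norms, using $\norm{(I-\gamma H)^{2n}}_{\mathrm{op}}\le 1$ (the eigenvalues of $H$ lie in $[0,L]$ and $\gamma\le\gamma_{\mathrm{old}}\le 1/2L$), $\norm{(2I-\gamma H)^{-1}}_{\mathrm{op}}\le 1$, $\norm{(2I-\gamma_{\mathrm{old}}H)^{-1}}_{\mathrm{op}}\le 1$ and the boundedness of $Q(H)$, $H^{-1}$, $C$; each term is then $O(\gamma)$ with a constant not depending on $n$.

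The main obstacle, and essentially the only subtle point, is the uniformity in $n$ required for the remainder, i.e.\ that $\gamma\mapsto\sup_n|o_n(\gamma)|$ is $o(\gamma)$. This is exactly where $\norm{(I-\gamma H)^{2n}}_{\mathrm{op}}\le 1$ is needed: it keeps every matrix appearing inside the two traces uniformly bounded in operator norm, so the error made in replacing $(2I-\gamma H)^{-1}$ and $(2I-\gamma_{\mathrm{old}}H)^{-1}$ by $\tfrac12 I$ is $O(\gamma^2)$ with a constant depending only on $\norm{H}$, $\norm{H^{-1}}$, $\norm{C}$, the coefficients of $Q$ and $r$, but not on $n$. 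Everything else is routine: the conditional use of \cref{lemma:f(H)} is justified by the independence of the new run's noise from $\theta_0$, and $\widetilde P$ is a polynomial in $H$ for each fixed $n$, so the stationary formula applies verbatim.
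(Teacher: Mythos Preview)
Your proposal is correct and follows essentially the same route as the paper: condition on $\theta_0$ and apply \cref{lemma:f(H)} with the new step size $\gamma$, then take the $\pi_{\gamma_{\text{old}}}$-expectation of the quadratic term via the stationary part of \cref{lemma:f(H)} with $P(H)=Q(H)(I-\gamma H)^{2n}$, and finally expand $(2I-\gamma H)^{-1}$ and $(2I-\gamma_{\text{old}}H)^{-1}$ as $\tfrac12 I+O(\gamma)$, using $\norm{(I-\gamma H)^{2n}}_{\text{op}}\le 1$ to make the remainder uniform in $n$. The only cosmetic difference is that you write down the exact closed-form identity before expanding, whereas the paper interleaves the expansion with the computation; your explicit handling of the uniformity-in-$n$ of the $o(\gamma)$ remainder is exactly the point the paper uses as well.
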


\begin{proof}
For a step size $\gamma$ we have according to \cref{lemma:f(H)} that:

$$\begin{aligned}
\E{\ps{\eta_n}{Q(H) \eta_n} \ | \ \eta_{0}} &= \eta_{0}^T Q(H) (I - \gamma H)^{2 n} \eta_{0} + \gamma \tr{Q(H) H^{-1} C [ I - (I - \gamma H)^{2 n} ] (2 I - \gamma H)^{-1}} \\
&= \eta_{0}^T Q(H) (I - \gamma H)^{2 n} \eta_{0} +  \frac{1}{2} \gamma \tr{Q(H) H^{-1} C [ I - (I - \gamma H)^{2 n} ] }+ \tilde{o}_n(\gamma).
\end{aligned}$$

Where $\tilde{o}_n(\gamma) = \gamma \tr{Q(H) H^{-1} C [ I - (I - \gamma H)^{2 n} ] [(2 I - \gamma H)^{-1} - \frac{1}{2} I]} = o(\gamma)$ independently of $n \geq 0$. 
Using the second part of \Cref{lemma:f(H)} with $P(H) = Q(H) (I - \gamma H)^{2n}$ we get:
$$\begin{aligned}
\Erestart{\ps{\eta_n}{Q(H) \eta_n}} &= \Erestart{\eta_{0}^T Q(H) (I - \gamma H)^{2 n} \eta_{0}} + \frac{1}{2} \gamma \tr{Q(H) H^{-1} C [ I - (I - \gamma H)^{2 n} ]} + \tilde{o}_n(\gamma) \\
&= \frac{1}{2} \gamma_{\text{old}} \tr{Q(H) H^{-1} C (I - \gamma H)^{2 n}} + \frac{1}{2} \gamma \tr{Q(H) H^{-1} C [ I - (I - \gamma H)^{2 n} ]} + o(\gamma) + \tilde{o}_n(\gamma) \\
&= \frac{1}{2} \gamma (\frac{1}{r} - 1) \tr{Q(H) H^{-1} (I - r \gamma_{\text{old}} H)^{2 n} C} + \frac{1}{2} \gamma \tr{Q(H) H^{-1} C} + o_n(\gamma),
\end{aligned}$$
where $o_n(\gamma) = \tilde{o}_n(\gamma) + o(\gamma)$. This immediately gives that $\Erestart{\ps{\eta_n}{Q(H) \eta_n}} \leq M \gamma$.
\end{proof}

\paragraph{Back to Plug's statistic.}

Under \cref{as:quadratic_f}, we have that $f'_{k + 1}(\theta_{k}) = H \eta_{k} - \xi_{k+1}$, $f'_{k + 2}(\theta_{k+1}) = H \eta_{k+1} - \xi_{k+2}$ and  $\eta_{k+1} = (I - \gamma H) \eta_{k} + \gamma \xi_{k+1}$. Thus,
\begin{align}
\ps{f'_{k+1}(\theta_k)}{f'_{k+2}(\theta_{k+1})} &= \ps{ H \eta_{k} - \xi_{k+1}}{ H \eta_{k+1} - \xi_{k+2}} \nonumber \\
&= \ps{ H \eta_{k} - \xi_{k+1}}{ H ( I - \gamma H) \eta_{k} + \gamma H \xi_{k+1} - \xi_{k+2}} \nonumber \\
&= \tr{ \Big [ H^2 (I- \gamma H)  \eta_k \eta_k^T - H  \xi_{k+2} \eta_k^T  
- H (I - 2 \gamma H) \xi_{k+1} \eta_k^T   \nonumber \\
& \hspace{3cm}
- \gamma H  \xi_{k+1} \xi_{k+1}^T \Big ]} 
+ \xi_{k+1} \xi_{k+2} . \nonumber
\end{align}
Hence,
\begin{align}
S_n &= \frac{1}{n} \sum_{k = 0}^{n-1} \ps{f'_{k+1}}{f'_{k+2}} \nonumber  \\
&= \frac{1}{n} \Big [ \tr{H^2 (I- \gamma H)  \sum_{k = 0}^{n-1} \eta_k \eta_k^T}  - \tr{H \sum_{k = 0}^{n-1} \xi_{k+2} \eta_k^T} 
- \tr{H (I - 2 \gamma H) \sum_{k = 0}^{n-1} \xi_{k+1} \eta_k^T} \nonumber \\ 
& \hspace{3cm}
- \gamma \tr{H \sum_{k = 0}^{n-1} \xi_{k+1} \xi_{k+1}^T} 
+ \sum_{k = 0}^{n-1} \xi_{k+1} \xi_{k+2} \Big ].  \label{app_eq:S_n}
\end{align}

Let us define 
\begin{equation}\label{eq:chi}
\chi_n = \frac{1}{n} \sum_{k = 0}^{n-1} \xi_{k+1}^T \xi_{k+2},\end{equation}
notice that $\chi_n$ is independent of $\gamma$. Let also denote by 
\begin{align}
R_n^{(\gamma)} &= - \frac{1}{n} \Big [ \tr{H^2 (I- \gamma H)  \sum_{k = 0}^{n-1} \eta_k \eta_k^T}  - \tr{H \sum_{k = 0}^{n-1} \xi_{k+2} \eta_k^T} \nonumber
 \\
& \hspace{3cm} - \tr{H (I - 2 \gamma H) \sum_{k = 0}^{n-1} \xi_{k+1} \eta_k^T}
- \gamma \tr{H \sum_{k = 0}^{n-1} \xi_{k+1} \xi_{k+1}^T} \Big ] \nonumber \\
&= - \frac{1}{n} \left [ T_{1, n}^{(\gamma)} + T_{2, n}^{(\gamma)} + T_{3, n}^{(\gamma)} + T_{4, n}^{(\gamma)} \right ]. \label{eq:snn}
\end{align}
where $T_{1, n}^{(\gamma)}$, $T_{2, n}^{(\gamma)}$, $T_{3, n}^{(\gamma)}$ and $T_{4, n}^{(\gamma)}$ are defined in the respective order from the previous line.
Then \cref{app_eq:S_n} can be written as:
$$S_n = - R_n^{(\gamma)} + \frac{1}{n} \sum_{k = 0}^{n-1} \xi_{k+1} \xi_{k+2} = - R_n^{(\gamma)} + \chi_n.$$

We now state the following lemma which is crucial in showing \cref{eq:pflug_probality}. Indeed \cref{lemma:var_R} shows that though the signal $R_n^{(\gamma)}$ is positive after a restart, it is typically of order $O(\gamma)$.

\begin{lemma}{\label{lemma:var_R}} Let us consider $R_n^{(\gamma)}$ defined in \cref{eq:snn}.
Assume that $\theta_0 = \theta_{restart} \sim \pi_{\gamma_{\text{old}}}$ and that we start our SGD from that point with a smaller step size $\gamma = r \times \gamma_{\text{old}}$, where $r$ is some parameter in $[0, 1]$. Then,
$$\Erestart{{R_n^{(\gamma)}}^2} \leq M \left (\frac{\gamma}{n} + \gamma^2 \right),$$
where $M$ does not depend neither of $\gamma$ nor of $n$.
\end{lemma}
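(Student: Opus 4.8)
The plan is to bound separately the four pieces $T_{1,n}^{(\gamma)},\dots,T_{4,n}^{(\gamma)}$ of the decomposition \sref{eq:snn}, using the elementary inequality $(a_1+a_2+a_3+a_4)^2\le 4(a_1^2+a_2^2+a_3^2+a_4^2)$, so that it suffices to show $\Erestart{(T_{i,n}^{(\gamma)}/n)^2}=O(\gamma^2+\gamma/n)$ for each $i$. Two uniform-in-$k$ moment estimates will do all the work: the second-moment bound $\norm{\Erestart{\eta_k\eta_k^T}}_{\text{op}}\le \Erestart{\norm{\eta_k}^2}=O(\gamma)$, which follows directly from \cref{lemma:restart} applied with $Q\equiv 1$ (and $\norm{\Sigma}_{\text{op}}\le\tr\Sigma$ for $\Sigma$ positive semi-definite), and the fourth-moment bound $\Erestart{\norm{\eta_k}^4}=O(\gamma^2)$, uniformly in $k$. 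Since $H\preceq L I$ and $\gamma\le 1/2L$, the matrices $H^2(I-\gamma H)$, $H$ and $H(I-2\gamma H)$ all have operator norm $O(1)$, which is exactly what makes these two estimates sufficient.

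For the two ``diagonal'' terms: $T_{1,n}^{(\gamma)}=\sum_{k=0}^{n-1}\eta_k^T H^2(I-\gamma H)\eta_k$ is a sum of $n$ nonnegative quadratic forms, so by convexity of $x\mapsto x^2$ applied to the average, $\Erestart{(T_{1,n}^{(\gamma)}/n)^2}\le \tfrac1n\sum_{k}\Erestart{(\eta_k^TH^2(I-\gamma H)\eta_k)^2}\le \tfrac{L^4}{n}\sum_{k}\Erestart{\norm{\eta_k}^4}=O(\gamma^2)$. The term $T_{4,n}^{(\gamma)}=\gamma\sum_{k}\xi_{k+1}^T H\xi_{k+1}$ is $\gamma$ times a sum of i.i.d.\ nonnegative variables with mean $\tr{HC}$ and finite variance (using that $\xi_1$ has a finite fourth moment), so $\Erestart{T_{4,n}^{(\gamma)}}=O(n\gamma)$ and $\Var{T_{4,n}^{(\gamma)}}=O(n\gamma^2)$, whence $\Erestart{(T_{4,n}^{(\gamma)}/n)^2}=O(\gamma^2+\gamma^2/n)=O(\gamma^2)$.

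For the two ``cross'' terms $T_{2,n}^{(\gamma)}=-\sum_{k}\xi_{k+2}^T H\eta_k$ and $T_{3,n}^{(\gamma)}=-\sum_{k}\xi_{k+1}^T H(I-2\gamma H)\eta_k$, I would exploit the filtration structure: $\eta_k$ is a function of $\eta_0$ and $\xi_1,\dots,\xi_k$, so $\xi_{k+1}$ (resp.\ $\xi_{k+2}$) is independent of $\eta_k$. Thus each summand is centered, and for $k<l$ the cross expectation, e.g.\ $\Erestart{(\xi_{k+2}^TH\eta_k)(\xi_{l+2}^TH\eta_l)}$, vanishes after conditioning on $\sigma(\eta_0,\xi_1,\dots,\xi_{l+1})$, since $\xi_{l+2}$ is independent of that $\sigma$-algebra while the remaining factors are measurable with respect to it (the same argument works for $T_{3,n}^{(\gamma)}$, using that $\xi_{l+1}$ is independent of $\eta_l$ even though $\eta_l$ may depend on $\xi_{k+1}$). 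Hence $\Erestart{(T_{2,n}^{(\gamma)})^2}=\sum_{k}\Erestart{\eta_k^T H C H\eta_k}=\sum_{k}\tr{HCH\,\Erestart{\eta_k\eta_k^T}}=O(n\gamma)$ by the second-moment bound, and likewise $\Erestart{(T_{3,n}^{(\gamma)})^2}=O(n\gamma)$, so $\Erestart{(T_{2,n}^{(\gamma)}/n)^2}$ and $\Erestart{(T_{3,n}^{(\gamma)}/n)^2}$ are $O(\gamma/n)$. Combining the four contributions yields $\Erestart{{R_n^{(\gamma)}}^2}=O(\gamma^2+\gamma/n)$, as claimed.

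The main obstacle is the uniform fourth-moment estimate $\sup_k\Erestart{\norm{\eta_k}^4}=O(\gamma^2)$ used in the bound for $T_{1,n}^{(\gamma)}$. To obtain it I would use the closed form $\eta_k=(I-\gamma H)^k\eta_0+\gamma\sum_{j=1}^k(I-\gamma H)^{k-j}\xi_j$ from \sref{eq:semi_sto_sgd_2}: the initial part contributes at most $\Erestart{\norm{\eta_0}^4}=O(\gamma_{\text{old}}^2)=O(\gamma^2)$ — the quadratic-setting analogue of Lemma~13 of \citet{dieuleveut2017bridging} for the stationary law $\pi_{\gamma_{\text{old}}}$ — and the martingale noise sum is controlled by the geometric decay of $(I-\gamma H)^{k-j}$, which makes $\gamma^2\sum_{j}\norm{(I-\gamma H)^{k-j}}^2=O(\gamma)$, together with finiteness of $\Erestart{\norm{\xi_1}^4}$; expanding the fourth power and using these two facts gives an $O(\gamma^2)$ bound uniformly in $k$. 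This step also makes explicit that, beyond the bare second-moment \cref{as:quadratic_f}, one implicitly needs finite noise moments of order at least four (indeed six, as reflected by the $\E{(\xi_1^T\xi_2)^4}$ term appearing in \cref{eq:var_R}), which we take for granted here.
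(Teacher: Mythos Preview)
Your proposal is correct and follows essentially the same route as the paper: the same four-term decomposition, the same martingale/orthogonality argument to kill the off-diagonal terms in $T_{2,n}^{(\gamma)}$ and $T_{3,n}^{(\gamma)}$, and the same use of \cref{lemma:restart} to bound the resulting diagonal sums by $O(n\gamma)$.

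The one genuine difference is in the treatment of $T_{1,n}^{(\gamma)}$. The paper expands $\Erestart{(T_{1,n}^{(\gamma)})^2}=\sum_{k,k'}\Erestart{\norm{\Teta_k}^2\norm{\Teta_{k'}}^2}$, splits each $\eta_k$ into its deterministic part $D_k=(I-\gamma H)^k\eta_0$ and stochastic part $S_k=\gamma\sum_j(I-\gamma H)^{k-j}\xi_j$, and then bounds the three resulting double sums $\sum_{k,k'}\E{\norm{D_k}^2\norm{D_{k'}}^2}$, $\sum_{k,k'}\E{\norm{D_k}^2\norm{S_{k'}}^2}$, $\sum_{k,k'}\E{\norm{S_k}^2\norm{S_{k'}}^2}$ by hand, the last one via a case analysis on which noise indices coincide. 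You instead apply Jensen to the average $\tfrac1n\sum_k$ and reduce everything to the single estimate $\sup_k\Erestart{\norm{\eta_k}^4}=O(\gamma^2)$, which you then sketch via the same $D_k+S_k$ split. Your route is shorter and conceptually cleaner; the paper's route has the minor advantage that the fourth-moment control is proved inline rather than stated as a separate lemma, but the underlying computation (geometric decay of $(I-\gamma H)^j$ plus finiteness of $\E\norm{\xi_1}^4$) is the same. Your handling of $T_{4,n}^{(\gamma)}$ is also slightly more careful than the paper's, correctly accounting for the nonzero mean of $\xi_{k+1}^TH\xi_{k+1}$.
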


\begin{proof}
In the proof we consider separately $T_{1, n}^{(\gamma)}, \ \ldots \ , T_{4, n}^{(\gamma)}$ and then use the fact that $(a + b + c + d)^2 \leq 4 (a^2 + b^2 + c^2 + d^2)$.

\begin{itemize}

\item \textbf{$T_{1, n}^{(\gamma)}$:} Let $P(H) = H^2 (I - \gamma H)$:
$$\Erestart{{T_{1, n}^{(\gamma)}}^2} = \sum_{k, k' = 0}^{n-1} \Erestart{\eta_{k}^T P(H) \eta_{k} \eta_{k'}^T P(H) \eta_{k'}}.$$

Let $\Tilde{\eta}_k = P(H)^{1/2} \eta_k$, then: 
$$\Erestart{{T_{1, n}^{(\gamma)}}^2} = \sum_{k, k' = 0}^{n-1} \Erestart{\norm{\Teta_k}^2 \norm{\Teta_{k'}}^2}.$$

Let $D_k = (I - \gamma H)^k \eta_0$ be the deterministic part and $S_k = \gamma \sum_{i = 0}^{k-1} (I - \gamma H)^i \xi_{k-i}$ the stochastic one. From \cref{eq:semi_sto_sgd_2}: $\Teta_k = P(H)^{1/2} (D_k + S_k)$, hence $\norm{\Teta_k}^2 \leq 2 \norm{P(H)^{1/2}}_{\text{op}}^2 (\norm{D_k}^2 + \norm{S_k}^2)$. Let $ C_1^{(0)} = 2 \norm{P(H)^{1/2}}_{\text{op}}^2$, then:

$$\begin{aligned}
\Erestart{{T_{1, n}^{(\gamma)}}^2} &\leq C_1^{(0)} \sum_{k, k' = 0}^{n-1} \Erestart{(\norm{D_k}^2 + \norm{S_k}^2) (\norm{D_{k'}}^2 + \norm{S_{k'}}^2)} \\
&\leq C_1^{(0)} \bigg (\sum_{k, k' = 0}^{n-1} \Erestart{\norm{D_k}^2 \norm{D_{k'}}^2} +  2 \sum_{k, k' = 0}^{n-1} \Erestart{\norm{D_k}^2 \norm{S_{k'}}^2} \\
& \hspace{3cm} + \sum_{k, k' = 0}^{n-1} \E{\norm{S_k}^2 \norm{S_{k'}}^2} \bigg ).
\end{aligned}$$

However:
$$\begin{aligned}
\sum_{k, k' = 0}^{n-1}  \Erestart{\norm{D_k}^2 \norm{D_{k'}}^2} &\leq \sum_{k, k' = 0}^{n-1}  \Erestart{\norm{I - \gamma H}_{\text{op}}^{2 (k + k')}  \norm{\eta_0}^4} \\
&\leq n^2 \Erestart{\norm{\eta_0}^4} \qquad \text{since } \norm{I - \gamma H}_{\text{op}} \leq 1 \\
&\leq \Tilde{C}_1^{(1)}  n^2 \gamma^2 \qquad \text{(according to  \Cref{lemma:norm_eta_dieuleveut})}. 
\end{aligned}$$

Notice that $\Erestart{\norm{D_k}^2} \leq \Erestart{\norm{\eta_0}^2} = O(\gamma)$ (independently of $k$) according to \Cref{lemma:norm_eta_dieuleveut} and $\E{\norm{S_k}^2} = O(\gamma)$ (independently of $k$) according to \cref{lemma:f(H)} with $\eta_0 = 0$ and $P = 1$. Hence using the fact that the $(\xi_n)_{n \geq 0}$ are independent of $\eta_0$:

$$\begin{aligned}
\sum_{k, k' = 0}^{n-1} \Erestart{\norm{D_k}^2 \norm{S_{k'}}^2} &= \sum_{k = 0}^{n-1} \Erestart{\norm{D_k}^2} \sum_{k'=0}^{n-1} \E{\norm{S_{k'}}^2} \\
&\leq O( (n \gamma) \times (n \gamma)) \\
&\leq \Tilde{C}_1^{(2)} n^2 \gamma^2.
\end{aligned}$$

Assume w.l.o.g. that $k \leq k'$, let $\Delta_k = (k' - k)$:
$$\begin{aligned}
\E{\norm{S_k}^2 \norm{S_{k'}}^2} &= \gamma^4 \E{ \sum_{1 \leq i, j \leq k \atop 1 \leq l, p \leq k'} \xi_{i}^T (I - \gamma H)^{2 k - (i + j)} \xi_{j} \xi_{l}^T (I - \gamma H)^{2 k' - (l + p)} \xi_{p}}.
\end{aligned}$$

To compute the sum over the four indices we distinguish the three cases where the expectation is not equal to $0$:

First case, $i = j = l = p$:
$$\begin{aligned}
&\E{ \sum_{1 \leq i \leq k} \xi_{i}^T (I - \gamma H)^{2 k - 2 i} \xi_{i} \xi_{i}^T (I - \gamma H)^{2 k - 2 i} \xi_{i}} \\
& \hspace{3cm} =  \sum_{1 \leq i \leq k} \tr{\E{ (I - \gamma H)^{2 (k - i)} \xi_{i} \xi_{i}^T (I - \gamma H)^{2 (k' - i)} \xi_{i} \xi_{i}^T }} \\ 
& \hspace{3cm} =  d \times \sum_{1 \leq i \leq k} \norm{ \E{ (I - \gamma H)^{2 (k - i)} \xi_{i} \xi_{i}^T (I - \gamma H)^{2 (k' - i)} \xi_{i} \xi_{i}^T }}_{\text{op}} \\ 
& \hspace{3cm} \leq  d \times \sum_{1 \leq i \leq k} \E{ \norm{ (I - \gamma H)}_{\text{op}}^{2 (k - i)} \norm{ (I - \gamma H)}_{\text{op}}^{2 (k' - i)} \norm{ \xi_{i} \xi_{i}^T}_{\text{op}}^2} \\ 
& \hspace{3cm} \leq  d \times \E{\norm{ \xi_{1}}^4} \norm{ I - \gamma H}_{\text{op}}^{2 \Delta_k}  \sum_{1 \leq i \leq k} \norm{ I - \gamma H}_{\text{op}}^{2 i} \\ 
& \hspace{3cm} \leq  \Tilde{C}_1^{(3)}  \frac{1}{1 - \norm{ I - \gamma H}_{\text{op}}^{2} } \quad \text{where } \Tilde{C}_1^{(3)} = d \times \E{\norm{ \xi_{1}}^4}  \\
& \hspace{3cm} \leq \Tilde{C}_1^{(3)}  \frac{1}{1 - \norm{ I - \gamma H}_{\text{op}} } \\
& \hspace{3cm} = \Tilde{C}_1^{(3)}  \frac{1}{\gamma \mu}  \\
& \hspace{3cm} \leq \Tilde{C}_1^{(3)}  \frac{1}{\gamma^2 \mu^2 } \\
& \hspace{3cm}  = \Tilde{C}_1^{(4)} \frac{1}{\gamma^2} \quad \text{where } \Tilde{C}_1^{(4)} = \Tilde{C}_1^{(3)} \mu^{-2} .
\end{aligned}$$

Second case, $i = j$,  $l = p$:
$$\begin{aligned}
&\E{ \sum_{1 \leq i \leq k \atop 1 \leq l \leq k', i \neq l} \xi_{i}^T (I - \gamma H)^{2 (k - i)} \xi_{i} \xi_{l}^T (I - \gamma H)^{2 (k' - l)} \xi_{l}} \\
& \hspace{3cm} \leq \sum_{1 \leq i \leq k } \E{ \xi_{i}^T (I - \gamma H)^{2 (k - i)} \xi_{i}}  \sum_{1 \leq l \leq k' } \E{ \xi_{l}^T (I - \gamma H)^{2 (k' - l)} \xi_{l}}  \\
& \hspace{3cm} \leq \sum_{1 \leq i \leq k } \tr{(I - \gamma H)^{2 (k - i)} C}  \sum_{1 \leq l \leq k' } \tr{ (I - \gamma H)^{2 (k' - l)} C}  \\
& \hspace{3cm} \leq d^2 \norm{C}_{\text{op}}^2   \sum_{1 \leq i \leq k } \norm{(I - \gamma H)}_{\text{op}}^{2 (k - i)}   \sum_{1 \leq l \leq k' } \norm{(I - \gamma H)}_{\text{op}}^{2 (k' - l)}   \\
& \hspace{3cm} \leq \Tilde{C}_1^{(5)} \frac{1}{\gamma^2} \quad \text{where } \Tilde{C}_1^{(5)} = d^2 \norm{C}_{\text{op}}^2  \mu^{-2} .
\end{aligned}$$

Third case, $i = p$,  $j = l$:
$$\begin{aligned}
&\E{ \sum_{1 \leq i \leq k \atop 1 \leq j \leq k, i \neq j} \xi_{i}^T (I - \gamma H)^{2 k - (i+j)} \xi_{j} \xi_{j}^T (I - \gamma H)^{2 k' - (i+j)} \xi_{i}} \\
& \hspace{2cm} = \tr{ \E{ \sum_{1 \leq i \leq k \atop 1 \leq j \leq k, i \neq j}  (I - \gamma H)^{2 k - (i+j)} \xi_{j} \xi_{j}^T (I - \gamma H)^{2 k' - (i+j)} \xi_{i} \xi_{i}^T}} \\
& \hspace{2cm} = \tr{ \sum_{1 \leq i \leq k \atop 1 \leq j \leq k, i \neq j}  \E{  (I - \gamma H)^{2 k - (i+j)} \xi_{j} \xi_{j}^T} \E{ (I - \gamma H)^{2 k' - (i+j)} \xi_{i} \xi_{i}^T}} \\
& \hspace{2cm} = \tr{ \sum_{1 \leq i \leq k \atop 1 \leq j \leq k, i \neq j}   (I - \gamma H)^{2 k - (i+j)} C (I - \gamma H)^{2 k' - (i+j)} C } \\
& \hspace{2cm} \leq d \times \sum_{1 \leq i \leq k \atop 1 \leq j \leq k, i \neq j} \norm{I - \gamma H}_{\text{op}}^{2 [(k + k') - (i+j)]} \norm{C}_{\text{op}}^2  \\
& \hspace{2cm} \leq d \norm{C}_{\text{op}}^2 \sum_{1 \leq i \leq k} \norm{I - \gamma H}_{\text{op}}^{2 (k - i)}  \sum_{1 \leq j \leq k} \norm{I - \gamma H}^{2 (k' - j)} \\
& \hspace{2cm} \leq d \norm{C}_{\text{op}}^2 \sum_{1 \leq i \leq k} \norm{I - \gamma H}_{\text{op}}^{2 (k - i)}  \sum_{1 \leq j \leq k} \norm{I - \gamma H}^{2 (k - j)} \\
& \hspace{2cm} \leq \Tilde{C}_1^{(6)} \frac{1}{\gamma^2} \quad \text{where } \Tilde{C}_1^{(6)} = d \norm{C}_{\text{op}}^2  \mu^{-2} .
\end{aligned}$$

Therefore with $\Tilde{C}_1^{(7)} = \Tilde{C}_1^{(4)} + \Tilde{C}_1^{(5)} + \Tilde{C}_1^{(6)}$ we get that $\E{\norm{S_k}^2 \norm{S_{k'}}^2} \leq \Tilde{C}_1^{(7)} \gamma^4 \times \frac{1}{\gamma^2}$ independently of $k$  and 
$$\sum_{k, k' = 0}^{n-1} \E{\norm{S_k}^2 \norm{S_{k'}}^2} \leq  \Tilde{C}_1^{(7)} n^2 \gamma^2.$$
Finally let $C_1 = \Tilde{C}_1^{(0)} \times ( \Tilde{C}_1^{(1)} + \Tilde{C}_1^{(2)} + \Tilde{C}_1^{(7)})$, then, 
$$\Erestart{{T_{1, n}^{(\gamma)}}^2} \leq C_1 n^2 \gamma^2.$$ 

\item \textbf{$T_{2, n}^{(\gamma)}$:} By independence of the $(\xi_k)_{k \geq 0}$ and by \Cref{lemma:restart} with $Q(H) = H C H$:
$$\Erestart{{T_{2, n}^{(\gamma)}}^2} = \sum_{k = 0}^{n-1} \Erestart{(\xi_{k+2}^T H \eta_{k})^2} = \sum_{k = 0}^{n-1} \Erestart{\eta_{k}^T H C H \eta_{k}} \leq \sum_{k = 0}^{n-1} C_2 \gamma = C_2 n \gamma .$$

\item \textbf{$T_{3, n}^{(\gamma)}$:} With the same reasoning as $T_{2, n}^{(\gamma)}$ we get:

$$\Erestart{{T_{3, n}^{(\gamma)}}^2} \leq C_3 n \gamma.$$

\item \textbf{$T_{4, n}^{(\gamma)}$:} By independence of the $(\xi_k)_{k \geq 0}$:
$$\Erestart{{T_{4, n}^{(\gamma)}}^2} = \gamma^2  \sum_{k = 0}^{n-1} \Erestart{(\xi_{k+1}^T H \xi_{k+1})^2} \leq C_4 n \gamma^2.$$ 
\end{itemize}

Putting everything together we obtain:
$$\Erestart{R_n^{(\gamma)}} \leq M \left ( \frac{\gamma}{n} + \gamma^2 \right ).$$

\end{proof}

Contrary to $R_n^{(\gamma)}$ we now show that though the noise $\chi_n$ is in expectation equal to $0$, it has moments which are independent of $\gamma$.

\begin{lemma}{\label{Var_xi}}
Let us consider $\chi_n$ defined in \cref{eq:chi}. Then we have 
$$\Var{\chi_n} = \frac{1}{n} \tr{(C^2)} \qquad \text{and} \qquad \Var{\chi_n^2} = \frac{\E{(\xi_1^T \xi_2)^4} - \trsq{C^2}}{n^3}.$$
\end{lemma}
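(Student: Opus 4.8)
The plan is to obtain both identities by writing the moments of $\chi_n$ as explicit sums over index tuples and exploiting that the $(\xi_i)_{i\ge 0}$ are i.i.d., centered, with $\E{\xi_i\xi_i^T}=C$. Set $Z_k=\xi_{k+1}^T\xi_{k+2}$ for $0\le k\le n-1$, so that $\chi_n=\frac1n\sum_{k=0}^{n-1}Z_k$ as in \cref{eq:chi}. The structural facts I would record first are: $Z_k$ depends only on $\xi_{k+1}$ and $\xi_{k+2}$, so $Z_k\indep Z_l$ whenever $|k-l|\ge 2$ (the sequence $(Z_k)$ is $1$-dependent); and, conditioning on the shared variable, $\E{Z_k}=\E{\xi_{k+1}}^T\E{\xi_{k+2}}=0$ and $\E{Z_kZ_{k+1}}=\E{\xi_{k+1}^T\xi_{k+2}\xi_{k+2}^T\xi_{k+3}}=0$ since $\E{\xi_{k+1}}=\E{\xi_{k+3}}=0$.

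For the first identity I would expand $\Var{\chi_n}=\E{\chi_n^2}=\frac1{n^2}\sum_{k,l}\E{Z_kZ_l}$. Every off-diagonal term vanishes: those with $|k-l|=1$ by the adjacent-covariance computation above, and those with $|k-l|\ge 2$ by independence together with $\E{Z_k}=0$. Each diagonal term equals $\E{(\xi_{k+1}^T\xi_{k+2})^2}=\E{\tr{(\xi_{k+1}\xi_{k+1}^T\xi_{k+2}\xi_{k+2}^T)}}=\tr{(C^2)}$ by cyclicity of the trace and independence, and there are $n$ of them, so $\Var{\chi_n}=\tr{(C^2)}/n$.

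For the second identity, since $\E{\chi_n}=0$ we have $\Var{\chi_n^2}=\E{\chi_n^4}-(\E{\chi_n^2})^2=\E{\chi_n^4}-\trsq{C^2}/n^2$, so the task reduces to $\E{\chi_n^4}=\frac1{n^4}\sum_{a,b,c,d}\E{Z_aZ_bZ_cZ_d}$. The key reduction is that, since the $\xi_i$ are independent and centered, any tuple in which some $\xi_i$ occurs in exactly one of the four factors contributes $0$; picturing $Z_m$ as the edge $\{m+1,m+2\}$ of the path graph on $\mathbb Z$, only tuples whose $4$-edge multigraph has every vertex of degree at least $2$ survive. Since a path graph is a tree, this forces at most two distinct edge-values, and the surviving families are: (i) $a=b=c=d$, each contributing $\E{(\xi_1^T\xi_2)^4}$; (ii) two distinct values used twice each with the two edges vertex-disjoint, each contributing $\E{Z_i^2}\E{Z_j^2}=\trsq{C^2}$ by independence and the trace identity; and (iii) two distinct values used twice each with the two edges sharing a vertex, each contributing a fixed fourth-order functional of $\xi$. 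I would then tally the multiplicity of each family as a function of $n$, substitute into $\E{\chi_n^4}$, and subtract $\trsq{C^2}/n^2$ to read off the claimed expression.

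The main obstacle is precisely this last accounting: one must count each family of surviving $4$-tuples with care and be precise about which pieces are leading-order in $1/n$ and which are lower order, so that the $\trsq{C^2}$ contributions coming from family (ii) combine correctly with $(\E{\chi_n^2})^2$ and family (iii) is properly controlled. Everything else — the vanishing of the off-diagonal covariances and the evaluation of each surviving term through the trace identity — is routine.
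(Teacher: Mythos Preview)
For $\Var{\chi_n}$ your plan is the paper's, with the welcome extra step of checking explicitly that the overlapping covariances $\E{Z_kZ_{k+1}}$ vanish (the paper passes over this point silently).

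For $\Var{\chi_n^2}$ your framework is again the paper's --- expand $\E{\chi_n^4}$ over ordered $4$-tuples and keep only the survivors --- but more careful: you correctly isolate family~(iii), the adjacent two-pair terms $\E{Z_a^2Z_{a+1}^2}$, which the paper's proof silently lumps together with the non-adjacent ones. That extra care is precisely what exposes a genuine failure, not in your reasoning but in the target: the second identity of the lemma is false as stated, so the step ``read off the claimed expression'' cannot succeed. Carrying out your tally gives $3n(n-1)$ ordered $4$-tuples with two distinct values each used twice (the paper writes $n(n-1)$, a factor-of-$3$ undercount), and hence
\[
\E{\chi_n^4}=\frac{1}{n^4}\Big(n\,\E{(\xi_1^T\xi_2)^4}+3(n-1)(n-2)\,\trsq{C^2}+6(n-1)\,\E{Z_0^2Z_1^2}\Big).
\]
Subtracting $(\E{\chi_n^2})^2=\trsq{C^2}/n^2$ leaves a leading term $2\,\trsq{C^2}/n^2$, so $\Var{\chi_n^2}$ is of order $1/n^2$, not $1/n^3$. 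A one-line sanity check: for $d=1$ with i.i.d.\ standard Gaussian $\xi_i$ and $n=2$ one computes $\Var{\chi_2^2}=2$ directly, whereas the lemma's formula gives $(9-1)/8=1$. Your plan is sound; it simply does not --- and cannot --- land on the stated formula.
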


\begin{proof}
$$\begin{aligned}
\Var{\chi_n} &= \frac{1}{n^2} \sum_{i, j = 0}^{n-1} \mathrm{Cov}(\xi_{i+1}^T \xi_{i+2}, \xi_{j+1}^T \xi_{j +2}) \\
&= \frac{1}{n} \sum_{i = 0}^{n-1} \mathrm{Var}(\xi_{i+1}^T \xi_{i+2})  \\
&= \frac{1}{n} \tr{(C^2)}.
\end{aligned}$$

$$\begin{aligned}
\E{\chi_n^4} &= \frac{1}{n^4} \sum_{i, j, k, l= 0}^{n-1} \E{\xi_i^T \xi_{i+1} \xi_j^T \xi_{j+1} \xi_k^T \xi_{k+1} \xi_l^T \xi_{l+1}} \\
&= \frac{1}{n^4} \left ( \sum_{i= 0}^{n-1} \E{(\xi_i^T \xi_{i+1})^4} + \sum_{i, j = 0 \atop i \neq j}^{n-1} \E{(\xi_i^T \xi_{i+1})^2 (\xi_j^T \xi_{j+1})^2} \right )\\
&= \frac{1}{n^3} \E{(\xi_1^T \xi_{2})^4} + \frac{n (n-1)}{n^4} \E{(\xi_1^T \xi_{2})^2}^2\\
&= \frac{\E{(\xi_1^T \xi_{2})^4} - \trsq{C^2}}{n^3}  + \frac{\trsq{C^2}}{n^2}.
\end{aligned}$$
Therefore:
$$\Var{\chi_n^2} = \frac{\E{(\xi_1^T \xi_2)^4} - \trsq{C^2}}{n^3}.$$
\end{proof}

We know show that under the symmetry \cref{as:noise_symmetry}, we can easily control $\Prob{S_{n} \leq 0} = \Prob{\chi_{n} \leq R_{n}^{(\gamma)}}$ by probabilities involving the square of the variables. These probabilities are then be easy to control using the Markov inequality and Paley-Zigmund's inequality.

\begin{lemma}\label{lemma:probability_inequality}
Let $c_{\gamma} > 0$, let $\chi_n$ be a real random variable that verifies $\forall x \geq 0$, $\Prob{\chi_n \geq x} = \Prob{\chi_n \leq -x}$, let $R_n^{(\gamma)}$ be a real random variable.  Then:
$$ \begin{aligned}
\frac{1}{2} \Prob{\chi_n^2 \geq  c_{\gamma}^2 } - \Prob{{R_n^{(\gamma)}}^2 \geq c_{\gamma}^2}
&\leq \Prob{\chi_n \leq R_n^{(\gamma)}} \\
&\leq  1 - \frac{1}{2} \Prob{\chi_n^2 \geq c_{\gamma}^2}  + \Prob{{R_n^{(\gamma)}}^2 \geq c_{\gamma}^2} .
\end{aligned}$$
\end{lemma}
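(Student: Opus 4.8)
The plan is to reduce the one-sided event $\{\chi_n \leq R_n^{(\gamma)}\}$ to events of the form $\{\chi_n^2 \geq c_\gamma^2\}$ and $\{{R_n^{(\gamma)}}^2 \geq c_\gamma^2\}$, using nothing more than the symmetry hypothesis on $\chi_n$ and the union bound. The key elementary observation I would record first is that $\{\chi_n^2 \geq c_\gamma^2\}$ is the \emph{disjoint} union of $\{\chi_n \geq c_\gamma\}$ and $\{\chi_n \leq -c_\gamma\}$, so the symmetry assumption gives
\[
\Prob{\chi_n \geq c_\gamma} \;=\; \Prob{\chi_n \leq -c_\gamma} \;=\; \tfrac12\,\Prob{\chi_n^2 \geq c_\gamma^2}.
\]
Everything else is deduced from this identity together with the trivial bound $\Prob{A\cap B^{c}} \geq \Prob{A} - \Prob{B}$.

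For the lower bound I would work on the event $\{\chi_n \leq -c_\gamma\}\cap\{{R_n^{(\gamma)}}^2 < c_\gamma^2\}$. On it we have $R_n^{(\gamma)} > -c_\gamma \geq \chi_n$, hence $\chi_n < R_n^{(\gamma)}$; therefore
\[
\Prob{\chi_n \leq R_n^{(\gamma)}} \;\geq\; \Prob{\{\chi_n \leq -c_\gamma\}\cap\{{R_n^{(\gamma)}}^2 \geq c_\gamma^2\}^{c}} \;\geq\; \Prob{\chi_n \leq -c_\gamma} - \Prob{{R_n^{(\gamma)}}^2 \geq c_\gamma^2},
\]
and substituting the symmetry identity yields the left-hand inequality of the lemma. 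For the upper bound I would run the mirror argument on $\{\chi_n \geq c_\gamma\}\cap\{{R_n^{(\gamma)}}^2 < c_\gamma^2\}$, where $\chi_n \geq c_\gamma > R_n^{(\gamma)}$ and hence $\chi_n > R_n^{(\gamma)}$; the same two bounds give $\Prob{\chi_n > R_n^{(\gamma)}} \geq \tfrac12\Prob{\chi_n^2 \geq c_\gamma^2} - \Prob{{R_n^{(\gamma)}}^2 \geq c_\gamma^2}$, and passing to the complementary event produces the right-hand inequality.

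There is no genuine obstacle here: the only point requiring care is the bookkeeping of strict versus non-strict inequalities at the threshold $\pm c_\gamma$ (there may be atoms of $\chi_n$ or of $R_n^{(\gamma)}$ there), but since $\{{R_n^{(\gamma)}}^2 < c_\gamma^2\}$ is exactly the complement of $\{{R_n^{(\gamma)}}^2 \geq c_\gamma^2\}$ and the splitting of $\{\chi_n^2 \geq c_\gamma^2\}$ above is exact, no approximation is needed and the whole argument is a short measure-theoretic manipulation. Note that the continuity part of \cref{as:noise_symmetry} plays no role in this lemma; it is used only afterwards, when choosing $c_\gamma$ so that $\Prob{\chi_n^2 \geq c_\gamma^2}$ stays bounded away from $0$ while $\Prob{{R_n^{(\gamma)}}^2 \geq c_\gamma^2}\to 0$ (via \cref{lemma:var_R} and Markov's inequality), which is what ultimately forces $\lim_{\gamma\to 0}\ProbRest{S_{n_\gamma}\leq 0} = \tfrac12$ in \cref{eq:pflug_probality}.
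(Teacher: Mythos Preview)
Your proof is correct and follows essentially the same route as the paper: both argue via the inclusions $\{\chi_n \leq -c_\gamma\}\cap\{|R_n^{(\gamma)}|<c_\gamma\}\subset\{\chi_n\leq R_n^{(\gamma)}\}$ and (in contrapositive form) $\{\chi_n \geq c_\gamma\}\cap\{|R_n^{(\gamma)}|<c_\gamma\}\subset\{\chi_n> R_n^{(\gamma)}\}$, combined with the bound $\Prob{A\cap B^c}\geq \Prob{A}-\Prob{B}$ and the symmetry identity $\Prob{\chi_n\geq c_\gamma}=\Prob{\chi_n\leq -c_\gamma}=\tfrac12\Prob{\chi_n^2\geq c_\gamma^2}$. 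Your handling of strict versus non-strict inequalities is in fact slightly cleaner than the paper's, and your remark that the continuity part of \cref{as:noise_symmetry} is not used here but only later is accurate.
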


\begin{proof}

Notice the inclusion $\left \{ \chi_n \leq - c_{\gamma} \right \} \cap \left \{ | R_n^{(\gamma)} | \leq c_{\gamma} \right \} \subset \left \{ \chi_n \leq R_n^{(\gamma)}  \right \} $. Furthermore, for two random events $A$ and $B$ we have that $\Prob{A \cap B} = \Prob{A \setminus B^c} \geq \Prob{A} - \Prob{B^c} $. Hence:

\begin{align*}
\Prob{\chi_n \leq R_n^{(\gamma)}}  &\geq \Prob{ \chi_n \leq - c_{\gamma},  | R_n^{(\gamma)} | \leq c_{\gamma} } \\
&\geq \Prob{\chi_n \leq - c_{\gamma}} - \Prob{| R_n^{(\gamma)} | > c_{\gamma} } 
\end{align*}

However the symmetry assumption on $\chi_n$ implies that $\Prob{\chi_n \leq - c_{\gamma}} =  \Prob{\chi_n \geq  c_{\gamma}} = \frac{1}{2} \Prob{\chi_n^2 \geq  c_{\gamma}^2}$. Notice also that $\Prob{| R_n^{(\gamma)} | > c_{\gamma} } = \Prob{ {R_n^{(\gamma)}}^2  > c_{\gamma}^2 }$. Hence:
\begin{align*}
\Prob{\chi_n \leq R_n^{(\gamma)}}  &\geq \frac{1}{2} \Prob{\chi_n^2 \geq  c_{\gamma}^2 } - \Prob{{R_n^{(\gamma)}}^2 \geq c_{\gamma}^2}
\end{align*}

For the upper bound, notice that $\left \{ \chi_n \leq R_n^{(\gamma)}  \right \} \subset  \left \{ \chi_n <   c_{\gamma} \right \} \cup  \left \{ | R_n^{(\gamma)} | \geq c_{\gamma} \right \}$
Hence:

\begin{align*}
\Prob{ \chi_n \leq R_n^{(\gamma)}  } &\leq \Prob{\chi_n <   c_{\gamma} } + \Prob{| R_n^{(\gamma)} | \geq c_{\gamma}} \\ 
&\leq 1 - \Prob{\chi_n \geq   c_{\gamma} } + \Prob{| R_n^{(\gamma)} | \geq c_{\gamma}} \\ 
&= 1 - \frac{1}{2} \Prob{\chi_n^2 \geq c_{\gamma}^2}  + \Prob{{R_n^{(\gamma)}}^2 \geq c_{\gamma}^2} .
\end{align*}
\end{proof}
We now prove \cref{eq:pflug_probality}. To do so we distinguish two cases, the first one corresponds to $\alpha = 0$, the second to $0 < \alpha < 2$.
\paragraph{Proof of \cref{eq:pflug_probality}.}

\paragraph{First case: $\alpha = 0$, $n_{\gamma} = n_b$.}
For readability reasons we will note $\mathbb{P} = \mathbb{P}_{\theta_0 \sim \pi_{\gamma_{\text{old}}}}$. 
Notice that:
$$\begin{aligned}
\Prob{S_{n_b} \leq 0} &= \Prob{\chi_{n_b} \leq R_{n_b}^{(\gamma)}}.
\end{aligned}$$

Let $c_{\gamma} = \gamma^{1/4}$. By the continuity assumption from \cref{as:noise_condition}: $\Prob{\chi_{n_b}^2 \geq c_{\gamma}^2} \underset{\gamma \to 0}{\longrightarrow} \Prob{\chi_{n_b}^2 \geq 0} = 1$. On the other hand, according to \Cref{lemma:var_R}, $\Erestart{{R_{n_b}^{(\gamma)}}^2} = O(\gamma)$. Therefore by Markov's inequality:
$$\ProbRest{{{R_{n_b}^{(\gamma)}}}^2 \geq c_{\gamma}^2} \leq \frac{\Erestart{{R_{n_b}^{(\gamma)}}^2}}{c_{\gamma}^2} = \gamma^{- 1/2} \times O(\gamma) \underset{\gamma \to 0}{\longrightarrow} 0.$$

Finally we get that:
$$\frac{1}{2} \Prob{\chi_n^2 \geq  c_{\gamma}^2 } - \Prob{{R_n^{(\gamma)}}^2 \geq c_{\gamma}^2} \underset{\gamma \to 0}{\longrightarrow} \frac{1}{2}.$$
and
$$ 1 - \frac{1}{2} \Prob{\chi_n^2 \geq c_{\gamma}^2} \  + \Prob{{R_n^{(\gamma)}}^2 \geq c_{\gamma}^2}  \underset{\gamma \to 0}{\longrightarrow} \frac{1}{2}.$$

By \Cref{lemma:probability_inequality}:
$$\ProbRest{S_{n_b} \leq 0} \underset{\gamma \to 0}{\longrightarrow} \frac{1}{2}.$$

\paragraph{Second case: $0 < \alpha < 2$.}

For $\alpha > 0$ we make use of the following lemma~\citep{paley1932series}.
\begin{lemma}[Paley-Zigmund inequality]

Let $Z \geq 0$ be a random variable with finite variance and $\theta \in [0, 1]$, then:
$$\Prob{Z > \theta \E{Z}} \geq \frac{(1 - \theta)^2 \E{Z}^2}{\Var{Z} + (1 - \theta)^2 \E{Z}^2}.$$
\end{lemma}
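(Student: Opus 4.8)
The plan is the classical truncated--first--moment argument, but with the key refinement of centering the random variable at $\theta\E{Z}$ rather than at $0$ before invoking Cauchy--Schwarz; this is exactly what produces the sharper denominator $\Var{Z}+(1-\theta)^2\E{Z}^2$ instead of the cruder $\E{Z^2}$. We may assume $\E{Z}>0$ (otherwise $Z=0$ almost surely since $Z\geq 0$, and both sides vanish) and $\theta<1$ (the case $\theta=1$ being trivial). Write $A=\{Z>\theta\E{Z}\}$.

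First I would decompose the mean. Since $\E{Z-\theta\E{Z}}=(1-\theta)\E{Z}$ and on the complement $A^{c}=\{Z\leq\theta\E{Z}\}$ the integrand $Z-\theta\E{Z}$ is non-positive, we get
\[
(1-\theta)\E{Z}=\E{(Z-\theta\E{Z})\mathbbm{1}_{A}}+\E{(Z-\theta\E{Z})\mathbbm{1}_{A^{c}}}\leq\E{(Z-\theta\E{Z})\mathbbm{1}_{A}}.
\]
Next I would apply the Cauchy--Schwarz inequality to the surviving term, giving
\[
(1-\theta)\E{Z}\leq\E{(Z-\theta\E{Z})^{2}}^{1/2}\,\Prob{A}^{1/2}.
\]
The routine computation $\E{(Z-\theta\E{Z})^{2}}=\E{Z^{2}}-2\theta\E{Z}^{2}+\theta^{2}\E{Z}^{2}=\Var{Z}+(1-\theta)^{2}\E{Z}^{2}$ then plugs in directly. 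Squaring both sides and rearranging yields
\[
\Prob{Z>\theta\E{Z}}\geq\frac{(1-\theta)^{2}\E{Z}^{2}}{\Var{Z}+(1-\theta)^{2}\E{Z}^{2}},
\]
which is the claim.

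The only genuine subtlety — the ``main obstacle'' in an otherwise two-line proof — is the choice of centering: the naive route of splitting $\E{Z}=\E{Z\mathbbm{1}_{A^{c}}}+\E{Z\mathbbm{1}_{A}}$ and bounding $\E{Z\mathbbm{1}_{A}}\leq\E{Z^{2}}^{1/2}\Prob{A}^{1/2}$ only delivers the weaker bound $\Prob{A}\geq(1-\theta)^{2}\E{Z}^{2}/\E{Z^{2}}$. Recognizing that subtracting the constant $\theta\E{Z}$ before truncating both keeps the $A^{c}$-contribution non-positive \emph{and} shrinks the second moment to $\Var{Z}+(1-\theta)^{2}\E{Z}^{2}$ is the whole content of the argument; everything else is the identity for $\E{(Z-\theta\E{Z})^{2}}$ and elementary rearrangement.
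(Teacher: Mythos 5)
Your proof is correct. Note that the paper itself does not prove this lemma at all — it states it as a known result with a citation to Paley and Zygmund (1932) and immediately applies it in the proof of Proposition 6 — so there is no in-paper argument to compare against. Your centered first-moment/Cauchy–Schwarz argument is the standard derivation of exactly this sharp form: decomposing $\E{Z-\theta\E{Z}}$ over $A=\{Z>\theta\E{Z}\}$ and its complement, discarding the non-positive part, and applying Cauchy--Schwarz to the centered variable correctly produces the denominator $\Var{Z}+(1-\theta)^2\E{Z}^2$ rather than the cruder $\E{Z^2}$, and the identity $\E{(Z-\theta\E{Z})^2}=\Var{Z}+(1-\theta)^2\E{Z}^2$ checks out. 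The only blemish is cosmetic: in the degenerate case $\E{Z}=0$ (hence $Z=0$ a.s.\ and $\Var{Z}=0$) the right-hand side is formally $0/0$ rather than ``vanishing,'' so the inequality should simply be read with the convention that it is vacuous there; this does not affect the paper's application, where $\E{\chi_{n_\gamma}^2}=\tr{(C^2)}/n_\gamma>0$.
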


We can now prove \cref{eq:pflug_probality} when $\alpha \neq 0$.

For readability reasons we note $\mathbb{P} = \mathbb{P}_{\theta_0 \sim \pi_{\gamma_{\text{old}}}}$. We follow the same reasoning as in the case $\alpha = 0$. However in this case we need to be careful with the fact that $n$ depends on $\gamma$.

Notice that: 
$$\begin{aligned}
\Prob{S_{n_{\gamma}} \leq 0} &= \Prob{\chi_{n_{\gamma}} \leq R_{n_{\gamma}}^{(\gamma)}}.
\end{aligned}$$

Let $c_{\gamma} = \gamma^{(\alpha + 1) / 3}$ and let $\theta_{n_{\gamma}}^{(\gamma)} = (n_{\gamma} \times c_{\gamma}^2) / \tr{C^2}$. By \Cref{Var_xi}, we have that $\E{\chi_n^2} = \frac{1}{n} \tr{C^2}$, therefore:
$$\Prob{\chi_{n_{\gamma}}^2 \geq c_{\gamma}^2} = \Prob{\chi_{n_{\gamma}}^2 \geq  \E{\chi_{n_{\gamma}}^2} \times \theta_{n_{\gamma}}^{(\gamma)}},$$

Notice that $n_{\gamma} \times c_{\gamma}^2 = O (\gamma^{(2 - \alpha)/3)})$. Since $\alpha < 2$, we have that $\theta_{n_{\gamma}}^{(\gamma)} \underset{\gamma \to 0}{\longrightarrow} 0$. Therefore by the Paley-Zigmund inequality (valid since $\theta_{n_{\gamma}}^{(\gamma)} < 1$ for $\gamma$ small enough):

$$\Prob{\chi_{n_{\gamma}}^2 > \E{\chi_{n_{\gamma}}^2} \times \theta_{n_{\gamma}}^{(\gamma)}} \geq \frac{(1 - \theta_{n_{\gamma}}^{(\gamma)}) \E{\chi_{n_{\gamma}}^2}^2}{\Var{\chi_{n_{\gamma}}^2} + (1 - \theta_{n_{\gamma}}^{(\gamma)})  \E{\chi_{n_{\gamma}}^2}^2}.$$

By \Cref{Var_xi}, $\E{\chi_n^2} = \frac{1}{n} \tr{C^2}$ and $\Var{\chi_n^2} = B / n^3$, therefore since $n_{\gamma} \underset{\gamma \to 0}{\longrightarrow} + \infty$ we get that $\Var{\chi_{n_{\gamma}}^2} \underset{\gamma \to 0}{=} o \left (\E{\chi_{n_{\gamma}}^2}^2 \right)$. Moreover $(1 - \theta^{(\gamma)}_{n_{\gamma}}) \underset{\gamma \to 0}{\to} 1$. Therefore: 

$$\frac{(1 - \theta_{n_{\gamma}}^{(\gamma)}) \E{\chi_{n_{\gamma}}^2}^2}{\Var{\chi_{n_{\gamma}}^2} + (1 - \theta_{n_{\gamma}}^{(\gamma)})  \E{\chi_{n_{\gamma}}^2}^2} \underset{\gamma \to 0}{\longrightarrow} 1.$$

Therefore $\Prob{\chi_{n_{\gamma}}^2 > c_{\gamma}^2} \underset{\gamma \to 0}{\longrightarrow} 1$.
On the other hand, according to \Cref{lemma:restart}:
$$\mathbb{E}_{\theta_0 \sim \pi_{\gamma_{\text{old}}}}[{R_{n_{\gamma}}^{(\gamma)}}^2] \leq M (\frac{\gamma}{n_{\gamma}} + \gamma^2) = O(\max(\gamma^{(1 + \alpha)}, \gamma^2))$$

Using Markov's inequality: $$\ProbRest{{{R_{n_{\gamma}}^{(\gamma)}}}^2 \geq c_{\gamma}^2} \leq \frac{\Erestart{{R_{n_{\gamma}}^{(\gamma)}}^2}}{c_{\gamma}^2} = O(\max(\gamma^{(\alpha + 1)/ 3)}, \gamma^{\frac{2}{3} (2 - \alpha))}) \underset{\gamma \to 0}{\longrightarrow} 0$$

Finally, using the inequalities form \Cref{lemma:probability_inequality} we get:
$$\ProbRest{S_{n_{\gamma}} \leq 0} \underset{\gamma \to 0}{\longrightarrow} \frac{1}{2}.$$

\paragraph{Remark:}  For the case $\alpha = 0$, if $x \in \R_+ \longmapsto f(x) = \Prob{\chi_{n_b}^2 \geq x}$ is not continuous in $0$ as needed to show the result we can then follow the exact the same proof as when $\alpha > 0$ but with $\alpha = 0$. However we cannot use the fact that $\Var{\chi_{n_{b}}^2} \underset{\gamma \to 0}{=} o \left ( \E{\chi_{n_{b}}^2}^2 \right)$ but we still get by using Paley-Zigmund's inequality that: 
$$\frac{(1 - \theta_{n_b}^{(\gamma)}) \E{\chi_{n_b}^2}^2}{\Var{\chi_{n_b}^2} + (1 - \theta_{n_b}^{(\gamma)})  \E{\chi_{n_b}^2}^2} \underset{\gamma \to 0}{\longrightarrow} \frac{\E{\chi_{n_b}^2}^2}{\E{\chi_{n_b}^4}},$$
which then leads to: 
$$\ProbRest{S_{n_b}^{(\gamma)} \leq 0} \underset{\gamma \to 0}{\longrightarrow} \frac{1}{2} \frac{\E{\chi_{n_b}^2}^2}{\E{\chi_{n_b}^4}}.$$

 \subsection{Problem with the proof by \citet{pflug1988adaptive}.}
\label{appsec:proof_pflug_mistake}
 There is a mistake inequality (21) in the proof of the main result of \citet{pflug1988adaptive}. Indeed they compute $\Var{S_n}$ but forget the terms $\Var{\xi_{i}^T \xi_{i + 1}}$ which are independent of $\gamma$. Hence it is not true that $\Var{S_n} = O(\gamma)$ as they state.

\section{Proof for the distance-based statistic}\label{appsec:new_stat}

We prove here \cref{eq:omega} and \cref{corr:slopes}. Since we have from \cref{eq:semi_sto_sgd_2}:
\[\eta_{n} = (I - \gamma H)^n \eta_0 + \gamma \sum_{k = 1}^n (I - \gamma H)^{n-k} \xi_k,
\]
it immediately implies that  
 \[\Omega_{n} = \eta_n - \eta_0 = [(I - \gamma H)^n - I] \eta_0 + \gamma \sum_{k = 1}^n (I - \gamma H)^{n-k} \xi_k.\]
 Taking the expectation of the square norm and using the fact that $(\xi_i)_{i \geq 0}$ are i.i.d. and independent of $\theta_0$ we get:
$$\E{\norm{\Omega_{n}}^2} = \eta_0^T [I - (I - \gamma H)^n]^2 \eta_0 + \gamma \tr{[I - (I - \gamma H)^{2 n}] (2 I - \gamma H)^{-1} H^{-1} C}.$$
Hence by taking $n$ to infinity: 
$$\Epi{\norm{\Omega_{n}}^2} = \norm{\eta_0}^2 + \gamma \tr{H^{-1} C (2I - \gamma H)^{-1}}.$$
and by a Taylor expansion for $(n \gamma)$ small:
$$\E{\norm{\Omega_{n}}^2} = \gamma^2 \eta_0^T H^2 \eta_0 \times n^2 + \gamma^2 \tr{C} \times n + o((n \gamma)^2).$$ 
These two last equalities conclude the proof.

\end{document}